
\pdfoutput=1

\documentclass{article} 
\usepackage{iclr2025_conference,times}


\usepackage{amsmath,amsfonts,bm}









\def\eqref#1{equation~\ref{#1}}









\def\1{\bm{1}}










\DeclareMathAlphabet{\mathsfit}{\encodingdefault}{\sfdefault}{m}{sl}
\SetMathAlphabet{\mathsfit}{bold}{\encodingdefault}{\sfdefault}{bx}{n}


\def\gD{{\mathcal{D}}}

\def\gM{{\mathcal{M}}}










\newcommand{\E}{\mathbb{E}}

\newcommand{\KL}{D_{\mathrm{KL}}}


\newtheorem{theorem}{Theorem}[section]
\newtheorem{lemma}{Lemma}[section]
\newtheorem{corollary}[theorem]{Corollary}

\newtheorem{definition}{Definition}[section]
\newtheorem{assumption}{Assumption}[section]

\newtheorem{proof}{Proof}[section]

\DeclareMathOperator*{\argmin}{arg\,min}

\usepackage{hyperref}
\usepackage{url}

\usepackage{amssymb}
\usepackage{times}
\usepackage{latexsym}
\usepackage{amsmath}
\usepackage{amssymb}
\usepackage{enumitem}
\usepackage{graphicx}
\usepackage{pifont}
\usepackage{color}
\usepackage{titletoc}
\usepackage{colortbl}
\usepackage{wrapfig}  
\usepackage{lipsum} 
\usepackage{multirow}
\usepackage{listings}
\usepackage{array}
\usepackage{booktabs} 
\usepackage{caption}  

\usepackage{inconsolata}
\usepackage{ulem}
\usepackage{array}
\usepackage{amsfonts}
\usepackage[ruled,linesnumbered,noend]{algorithm2e}
\usepackage{algpseudocode}

\begin{document}

\newcommand{\revision}[1]{\textcolor{blue}{#1}}

\newcommand{\ours}{MMed-RAG}

\title{\ours: Versatile Multimodal RAG System for Medical Vision Language Models}

\iclrfinalcopy

\author{Peng Xia$^{1}$, Kangyu Zhu$^{2}$, Haoran Li$^{3}$, Tianze Wang$^{4}$, Weijia Shi$^{5}$, Sheng Wang$^{5}$, \\ \textbf{Linjun Zhang$^{4}$, James Zou$^{6}$, Huaxiu Yao$^1$}\\ $^1$UNC-Chapel Hill, $^2$Brown University, $^3$Carnegie Mellon University, $^4$Rutgers University, \\ $^5$University of Washington, $^6$Stanford University \quad \texttt{\{pxia,huaxiu\}@cs.unc.edu}}

\newcommand{\fix}{\marginpar{FIX}}
\newcommand{\new}{\marginpar{NEW}}

\maketitle

\vspace{-1em}
\begin{abstract}
\vspace{-1em}
Artificial Intelligence (AI) has demonstrated significant potential in healthcare, particularly in disease diagnosis and treatment planning. Recent progress in Medical Large Vision-Language Models (Med-LVLMs) has opened up new possibilities for interactive diagnostic tools. However, these models often suffer from factual hallucination, which can lead to incorrect diagnoses. Fine-tuning and retrieval-augmented generation (RAG) have emerged as methods to address these issues. However, the amount of high-quality data and distribution shifts between training data and deployment data limit the application of fine-tuning methods. Although RAG is lightweight and effective, existing RAG-based approaches are not sufficiently general to different medical domains and can potentially cause misalignment issues, both between modalities and between the model and the ground truth.
In this paper, we propose a versatile multimodal RAG system, \ours, designed to enhance the factuality of Med-LVLMs. Our approach introduces a domain-aware retrieval mechanism, an adaptive retrieved contexts selection, and a provable RAG-based preference fine-tuning strategy. These innovations make the RAG process sufficiently general and reliable, significantly improving alignment when introducing retrieved contexts. Experimental results across five medical datasets (involving radiology, ophthalmology, pathology) on medical VQA and report generation demonstrate that \ours\ can achieve an average improvement of 43.8\% in the factual accuracy of Med-LVLMs. 
Our data and code are available in \href{https://github.com/richard-peng-xia/MMed-RAG}{https://github.com/richard-peng-xia/MMed-RAG}.
\end{abstract}
\vspace{-1em}
\section{Introduction}

Artificial Intelligence (AI) has already transformed healthcare and still has a lot of potential for further advancements~\citep{tuauctan2021artificial,wang2019artificial,wang2025screening,ye2021unified,tu2024towards,xia2024generalizing,hu2024ophnet,ju2024explore}. Recently, Medical Large Vision-Language Models (Med-LVLMs) have shown great promise for advancing interactive and intelligent diagnosis~\citep{li2023llava,moor2023med,zhang2023pmc,wu2023towards}. Despite this potential~\citep{li2023comprehensive,wu2023can,shi2024survey}, current Med-LVLMs still face significant reliability issues, particularly their tendency to generate non-factual medical responses~\citep{xia2024cares,royer2024multimedeval,chen2024detecting,jiang2024medthink}, making them unreliable in critical medical applications. These factuality issues raise serious concerns when deploying such models in clinical settings, where even small diagnostic errors could lead to severe consequences for patient care.

Recently, researchers have begun to focus on improving the factuality of Med-LVLMs through various techniques, including fine-tuning~\citep{li2023llava,moor2023med,thawkar2023xraygpt,zhang2023pmc,chen2024huatuogpt} and retrieval-augmented generation (RAG)~\citep{xia2024rule,he2024meddr,sun2024fact}. Fine-tuning is a direct method to improve model performance, but it faces several limitations in the medical field. First, there is a lack of sufficient high-quality labeled data for fine-tuning in the medical domain. Additionally, a distribution gap often exists between the training data and the real-world deployment data~\citep{schrouff2022diagnosing}, leading to significantly worse model performance during deployment. Hence, RAG has emerged as a viable alternative by providing external references during the inference stage, enhancing the factuality of Med-LVLMs~\citep{wu2023ragtruth,gao2023retrieval}. However, despite its advantages, current RAG implementations in Med-LVLMs have significant limitations. First, these methods tend to be \textit{dataset-specific}, reducing their generalizability across various medical domains. Second, these models are still facing \textit{misalignment issues} that lead to factuality problems. This misalignment may arise from the impact of adding RAG on the original Med-LVLMs' \textit{cross-modality alignment}, as well as on the \textit{overall alignment} between the model and ground truth.

To address these challenges, we propose a versatile factual \textbf{M}ultimodal \textbf{Med}ical \textbf{RAG} system called \textbf{\ours}. Specifically, \ours\ first introduces a domain-aware retrieval mechanism, designed to handle different domains of medical images more effectively. Here, we design a domain identification module to adaptively select a corresponding retrieval model given the input medical image. Secondly, we include a adaptive calibration approach for selecting the number of retrieved contexts. Lastly, \ours\ incorporates RAG-based preference fine-tuning to enhance cross-modality alignment and overall alignment with ground truth. The preference pairs are designed to achieve two goals: first, to improve cross-modality alignment by encouraging the model to avoid generating responses without utilizing input medical images, even the responses are correct; second, to improve overall alignment by encouraging the model to understand retrieved contexts when unsure, while avoiding interference from irrelevant retrieved information.

The primary contribution of this paper is \ours, a versatile multimodal RAG system designed specifically for Med-LVLMs to generate more factual responses. Under mild assumptions, our theoretical analysis demonstrates that \ours\ mitigates both cross-modality misalignment and overall misalignment with ground truth. Furthermore, empirical results on five medical multimodal datasets, covering three medical image modalities (radiology, pathology, and ophthalmology), show that \ours\ significantly improves the factual accuracy of Med-LVLMs, achieving improvements of 18.5\% and 69.1\% on Medical VQA and report generation tasks, respectively, compared to the original Med-LVLM. These empirical findings further demonstrate the effectiveness of our proposed components and support the theoretical analysis in addressing misalignment issues.
\vspace{-0.3em}
\section{Preliminaries}
\vspace{-0.3em}
In this section, we will provide a brief overview of Med-LVLMs and preference optimization.

\noindent
\textbf{Medical Large Vision Language Models}.
Med-LVLMs bridge LLMs with medical visual modules, allowing the model to take medical image \( x_v \) and clinical query \( x_t \) as input \( x \), and autoregressively predict the probability distribution of the next token. The text output is denoted as \( y \).

\noindent
\textbf{Preference Optimization}.
Preference optimization has achieved remarkable results in LLM alignment. Give an input $x$, a language model policy $\pi_\theta$ can produce a conditional distribution $\pi_\theta(y\mid x)$ with $y$ as the output text response. 
The recently popular DPO~\citep{rafailov2023direct} utilizes preference data achieve objective alignment in LLMs. The preference data is defined as \begin{small}$\mathcal{D}=\{x^{(i)}, y_w^{(i)}, y_l^{(i)}\}_{i=1}^N$\end{small}, where \begin{small}$y_w^{(i)}$\end{small} and \begin{small}$y_l^{(i)}$\end{small} represent preferred and dispreferred responses given an input prompt $x$. The probably of obtaining each preference pair is
\begin{small} $
    p(y_w\succ y_l)=\sigma(r(x, y_w)-r(x, y_l)),
$ \end{small}
where $\sigma(\cdot)$ is the sigmoid function. 
In DPO, the optimization can be formulated as classification loss over the preference data as:
{\begin{equation}
\small
\begin{array}{l}
\mathcal{L}_{\textit{DPO}}(\pi_\theta; \pi_{\text{ref}}) = -\mathbb{E}_{(x,y_w,y_l) \sim \mathcal{D}} 
\left[ \log \sigma
\left(
\alpha \log \frac{\pi_\theta(y_w | x)}{\pi_{\text{ref}}(y_w | x)}
- \alpha \log \frac{\pi_\theta(y_l | x)}{\pi_{\text{ref}}(y_l | x)}
\right) \right].
\end{array}
\label{eq:dpo}
\end{equation}}
\vspace{-1em}
where $\pi_\theta$ represents the reference policy, which is the LLM fine-tuned through supervised learning.
\section{\ours: A Versatile Medical RAG System}
\label{sec:3}
In this section, as illustrated in Figure~\ref{fig:method}, we will propose \ours, a versatile RAG system for improving the factuality of Med-LVLMs. Specifically, \ours\ consists of three complementary modules. First, we design a domain-aware retrieval mechanism to select the optimal retriever by feeding each given medical image to the domain identification module. Second, to select an optimal number of retrieved contexts and filter out low-quality information, \ours\ adopts a adaptive method by filtering out low-quality information using the similarity scores during the RAG phase. Lastly, we use a RAG-based preference fine-tuning approach to improve the cross-modality alignment and the overall alignment between groundtruth. We detail these steps as follows:

\begin{figure}[t]
    \centering
    \includegraphics[width=0.95\linewidth]{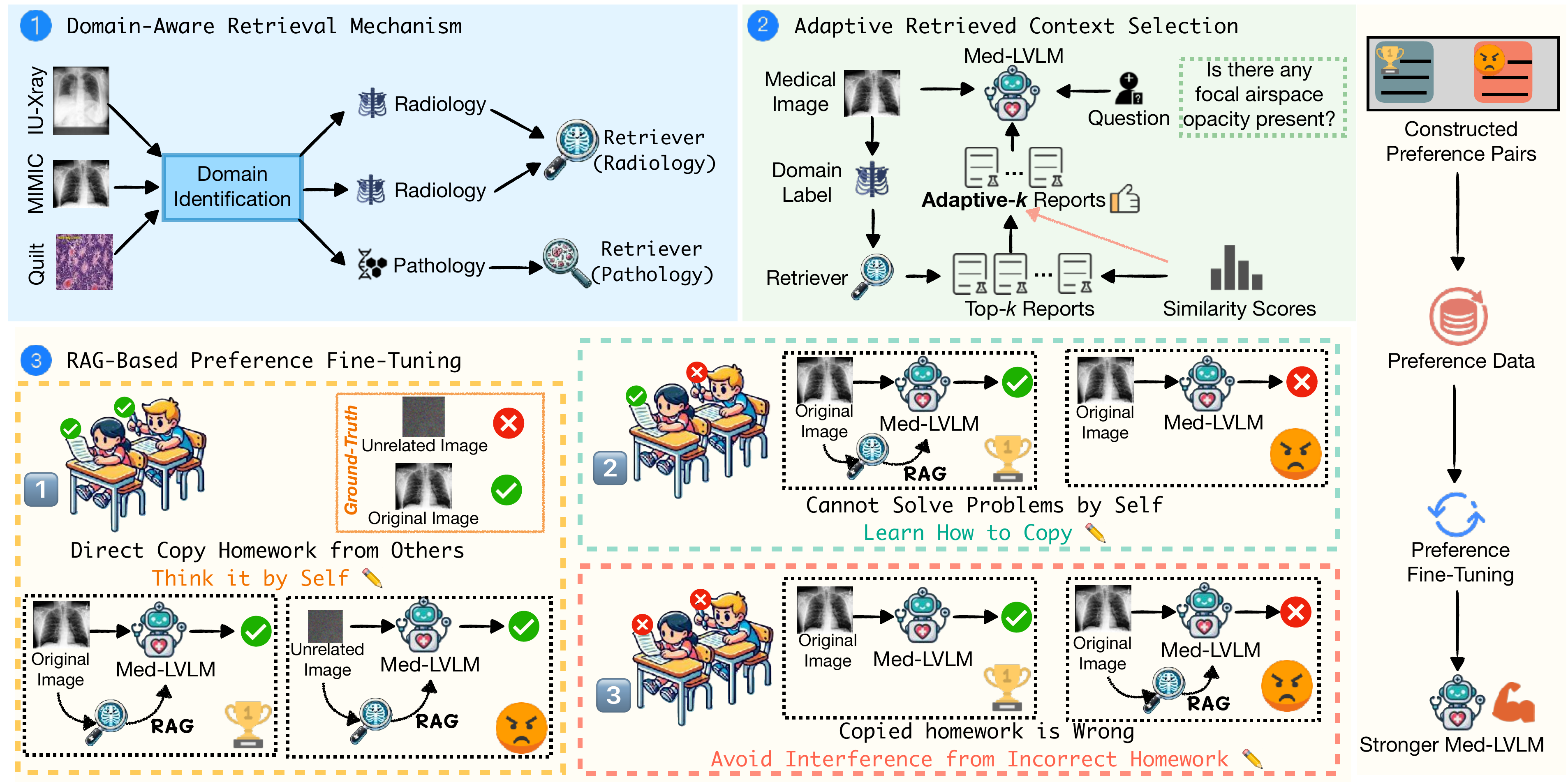}
    \vspace{-0.5em}
    \caption{Overview of \ours, a versatile factual multimodal RAG system designed to enhance the reliability of Med-LVLMs. It introduces a domain-aware retrieval mechanism that effectively handles different domains of medical images by selecting suitable retrieval models. Additionally, it uses an adaptive context selection approach to determine the optimal number of retrieved contexts and employs preference fine-tuning to improve both cross-modality and overall alignment.
    }
    \label{fig:method}
    \vspace{-1em}
\end{figure}

\subsection{Domain-Aware Retrieval Mechanism}
\label{sec:method1}
In \ours, we introduce a domain-aware retrieval mechanism to efficiently handle medical images from different sources (e.g., radiology, pathology, ophthalmology). Specifically, we first employ a domain identification module that assigns a domain label to each input medical image. To achieve this, we create a small dataset with medical images as inputs and their corresponding domain labels as outputs, using this dataset to fine-tune the BiomedCLIP model~\citep{zhang2023biomedclip} to improve its domain awareness. Formally, for a given medical image $x_v$, we predict its domain $d=\mathcal{F}(x_v)$. Based on the assigned domain label $d$, the image $x_v$ is fed into the corresponding multimodal retriever $\mathcal{R}_d(\cdot)$ for knowledge retrieval.

Here, each multimodal retriever $\mathcal{R}_d(\cdot)$ for each domain $d$ is trained through contrastive learning~\citep{radford2021learning}. Specifically, the visual and textual information $X_{img}, X_{txt}$ are processed by their corresponding encoders $\mathcal{E}_{img}(\cdot), \mathcal{E}_{txt}(\cdot)$ to generate textual and visual embeddings $V_{txt}=\mathcal{E}_{txt}(X_{txt}), V_{img}=\mathcal{E}_{img}(X_{img})$. Contrastive learning loss is then applied to maximize the similarity between text and image embeddings representing the same example, while minimizing the similarity between embeddings representing different examples, as defined below:
\begin{equation}
\footnotesize
    \label{eq:clip_loss}
    \begin{aligned}
        \mathcal{L}        & = \frac{\mathcal{L}_{img}+\mathcal{L}_{txt}}{2}, \text{where}\;\;
        \mathcal{L}_{img}=-\frac{1}{N}\sum_{i=1}^{N} \log \frac{\exp(S_{i, i})}{\sum_{j=1}^{N} \exp(S_{i, j})},
        \mathcal{L}_{txt}=-\frac{1}{N}\sum_{i=1}^{N} \log \frac{\exp(S_{i, i})}{\sum_{j=1}^{N} \exp(S_{j, i})},
    \end{aligned}
\end{equation}

where $S \in \mathbb{R}^{N \times N}$ represents the similarity matrix between image and text modalities, calculated as: $S = \frac{V_{img}}{|V_{img}|} \cdot (\frac{V_{txt}}{|V_{txt}|})^T$, where each element $S_{i,j}$ represents the similarity between the image representation of example $i$ and the text representation of example $j$. 

Finally, for the input image $x_t$, after feeding into the corresponding multimodal retriever $\mathcal{R}_d(\cdot)$, the multimodal retriever will retrieves the top-$k$ most similar reports for the image. These retrieved reports $x_r=\mathcal{R}_d(x_v)$ are then provided to the Med-LVLM $\mathcal{M}(\cdot)$ as references to guide the generation.

\subsection{Adaptive Retrieved Context Selection}
\label{sec:method2}
\begin{wrapfigure}{r}{0.36\textwidth}
\vspace{-5em}
\begin{center}
    \includegraphics[width=0.35\textwidth]{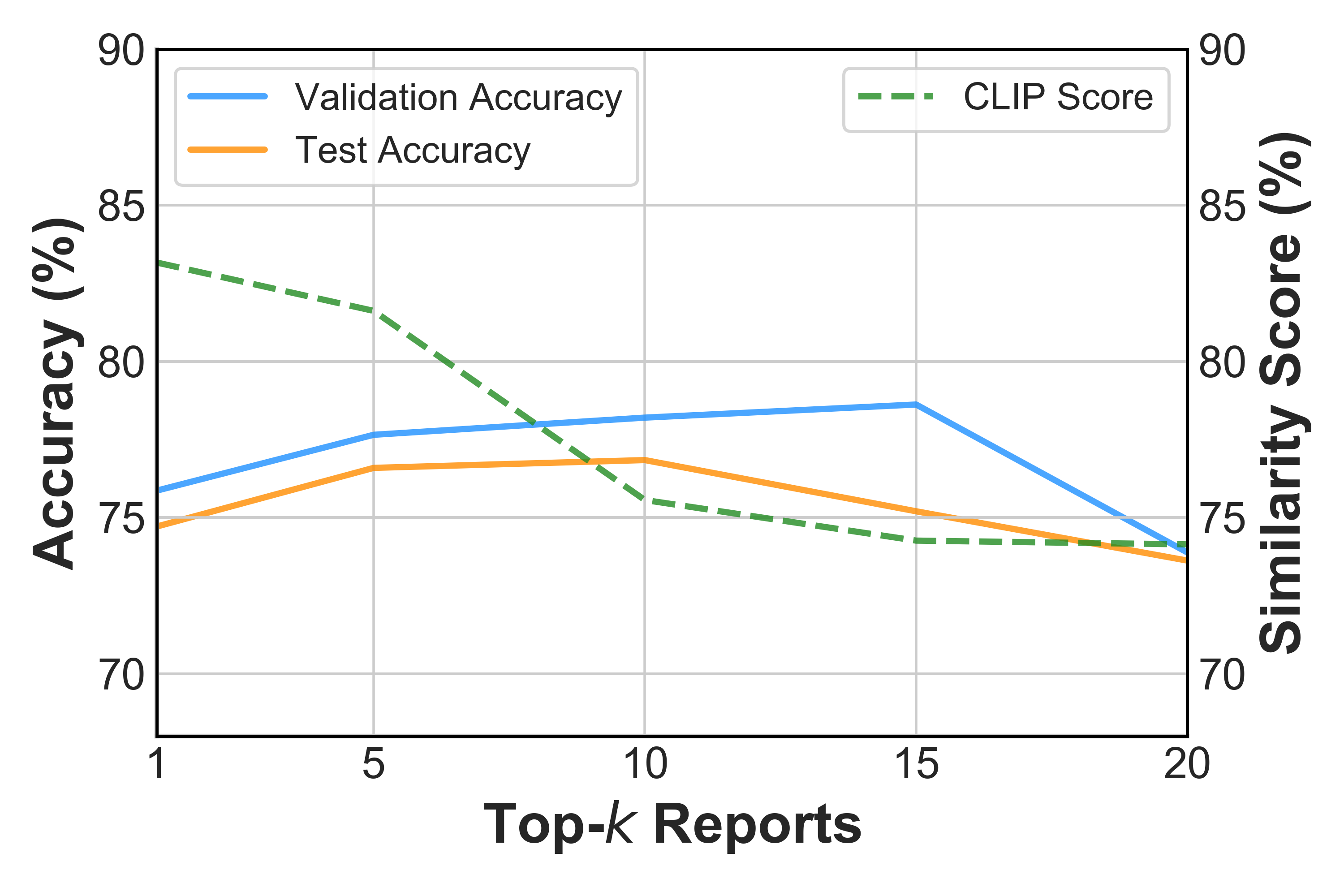}
\end{center}
\vspace{-1em}
\caption{Relations between selected contexts and similarity score.}
\label{fig:mot}
\vspace{-2em}
\end{wrapfigure}
Following the domain-aware retrieval mechanism, the next step is to determine the optimal amount of context to retrieve. Retrieving too much or too little information can result in hallucinations~\citep{xia2024rule}. Current RAG methods applied to Med-LVLMs generally rely on empirical results or fixed values based on validation sets to select the optimal value of the number of retrieved contexts $k$~\citep{xia2024rule,he2024meddr,sun2024fact}. However, the distribution of similarity scores varies depending on the complexity of the image and its alignment with the textual information from the data source. These fixed-$k$ methods do not guarantee optimal performance on target data, as they overlook the similarity scores generated during the retrieval process. To address this, we propose an adaptive method that dynamically selects $k$ based on the similarity scores of the retrieved contexts.
Specifically, during the domain-aware retrieval mechanism phase, the retrieved information is denoted as $x_r(k) = \mathcal{R}_d(x_v; k)$, where $k$ represents the number of retrieved contexts, and the corresponding similarity scores are denoted as $S_k$. For simplicity, when there is no ambiguity, we will refer to $x_r(k)$ as $x_r$.

As illustrated in Figure~\ref{fig:mot}, our method is based on a key observation: the similarity scores (CLIP score in this case) between retrieved contexts often exhibit a sharp decline after a certain number of results (nearly top-9 in this case). This suggests that lower-quality information can still be included among the top-$k$ retrieved contexts when using a fixed-$k$ strategy, especially in cases where the fixed value of $k$ is too large. These lower-quality retrievals introduce noise and irrelevant information, which can significantly impair the model’s ability to generate factual and coherent responses. To mitigate this issue, we draw inspiration from the Gap statistic method used in clustering~\citep{tibshirani2001estimating} and extend this concept to RAG for Med-LVLMs. Specifically, after retrieving the top-$k$ contexts, we perform an additional round of $k$ optimization by analyzing the similarity ratios between consecutive retrievals. These similarity ratios are denoted as $u_i = \log(S_i / S_{i+1})$ for $0 < i \leq k$, where $S_i$ represents the similarity score of the $i$-th retrieved context. When $u_i$ exceeds a predefined threshold $\gamma$, this indicates a substantial drop in relevance, suggesting that the remaining retrievals are less likely to contribute preferredly to the model’s output. At this point $i$, we truncate $k$, effectively discarding the less relevant retrievals that follow. This adaptive truncation mechanism ensures that only the most relevant contexts are retained for generating the final response, reducing the risk of hallucination and improving the factual accuracy of the outputs.

Although the threshold $\gamma$ is fixed, this approach provides a adaptive way to balance the bias and variance in retrieved contexts. By adapting to the characteristics of each input $x_v$, our method enhances the robustness of the retrieval process and ensures that the selection of $k$ is tailored to the specific data at hand, thereby improving overall performance across diverse contexts and tasks.

\subsection{RAG-based Preference Fine-Tuning}
\label{sec:rag-pt}
After context selection, \ours\ supplies Med-LVLM with reliable retrieved information as external knowledge to aid in generating factual responses. However, incorporating this retrieved knowledge may potentially disrupt the original alignment within the existing Med-LVLM, a concern we will elaborate on below:

\noindent \textbf{Alignment Analysis.} In the alignment analysis, we aim to explore how incorporating retrieved context impacts the original alignment in Med-LVLMs, focusing on two key aspects: (1) cross-modality alignment and (2) overall alignment with the ground truth. To evaluate cross-modality alignment, we conduct two tests on LLaVA-Med-1.5~\citep{li2023llava} using the Harvard-FairVLMed~\citep{luo2024fairclip} dataset. First, when replacing the original image with a highly noisy image associated with a different ground truth, the original model gives incorrect answers (the ground truth being the response for the original image). After incorporating RAG, where context is retrieved based on the original image, 55.08\% of these cases return correct answers. This indicates that the model \textit{directly references the retrieved knowledge} without considering the input image, highlighting significant \textit{cross-modal misalignment issues}. Furthermore, 43.31\% of the questions that were originally answered correctly are answered incorrectly after incorporating RAG, suggesting \textit{interference from incorrect retrieval information}, which leads to \textit{overall misalignment with the ground truth}.

\begin{algorithm}[t]
\small
    \caption{Versatile Multimodal RAG System (\ours)}
    \LinesNumbered
    \label{ag:dpo}
    \KwIn{$\mathcal{D}=\{x_v^{(i)},x_t^{(i)},y^{(i)}\}_{i=1}^N$: Dataset; $\pi_\theta$: Parameters of the Med-LVLM; Med-LVLM: $\mathcal{M(\cdot,\cdot)}$; Domain Identification: $\mathcal{F}(\cdot)$; Retriever: $\mathcal{R}(\cdot)$; Noisy Function: $\mathcal{I}(\cdot)$.}
    \KwOut{$\pi_\text{ref}$: Parameters of the reference model.}
    \textbf{$\triangleright$ \textit{Training Stage}} \\
    Initialize $\mathcal{D}_{cm}$ with an empty set\\
    \ForEach{$(x_v,x_t,y) \in \mathcal{D}$}{
        Generate retrieved contexts with an assigned domain label $x_r \leftarrow  \mathcal{R}_{\mathcal{F}(x_v)}(x_v) $ \\
        Generate the noisy image $x_v^* \leftarrow \mathcal{I}(x_v) $ \\
        \textcolor{blue}{$\triangleright$ \textit{Cross-Modality Alignment}} \\
        \If{$\mathcal{M}(x_v,(x_t,x_r))=y$ and $\mathcal{M}(x_v^*,(x_t,x_r))=y$}{
            Select the preferred response $y_{w,o1} \leftarrow y$, dispreferred response $y_{l,o1} \leftarrow \mathcal{M}(x_v^*,(x_t,x_r)) $ \\
            Put $\{(x_v,x_v^*,x_t),y_{w,o1},y_{l,o1}\}$ into $\mathcal{D}_{cm}$ \\
          }
        \textcolor{blue}{$\triangleright$ \textit{Overall Alignment}} \\
        Initialize $\mathcal{D}_{oa}^1$ and $\mathcal{D}_{oa}^2$ with empty set\\
        \If{$\mathcal{M}(x_v,(x_t,x_r))=y$ and $\mathcal{M}(x_v,x_t)\neq y$}{
            Select the preferred response $y_{w,o2} \leftarrow y $, dispreferred response $y_{l,o2} \leftarrow \mathcal{M}(x_v,x_t) $ \\
            Put $\{(x_v,x_t),y_{w,o2},y_{l,o2}\}$ into $\mathcal{D}_{oa}^1$\\
          }
        \If{$\mathcal{M}(x_v,x_t)=y$ and $\mathcal{M}(x_v,(x_t,x_r))\neq y$}{
            Select the preferred response $y_{w,o3} \leftarrow y $, dispreferred response $y_{l,o3} \leftarrow \mathcal{M}(x_v,(x_t,x_r)) $ \\
            Put $\{(x_v,x_t),y_{w,o3},y_{l,o3}\}$ into $\mathcal{D}_{oa}^2$ \\
          }       
    }
    $\mathcal{D}_{pt} = \mathcal{D}_{cm} \cup \mathcal{D}_{oa}$, $\mathcal{D}_{oa} = \mathcal{D}_{oa}^1 \cup \mathcal{D}_{oa}^2$ \\
    \ForEach{$((x_v,x_v^*,x_t),y_{w,o},y_{l,o}) \in \mathcal{D}_{pt}$}{
            Compute the losses $\mathcal{L}_{pt}$ following~\eqref{eq:dpoo} and update $\pi_\text{ref}$
        }
    \textbf{$\triangleright$ \textit{Inference Stage}} \\
    \ForEach{test sample $(x_v,x_t)$}{
        Select top-k retrieved contexts with an assigned domain label $x_r \leftarrow \mathcal{R}_{\mathcal{F}(x_v)}(x_v)$ \\
        Get the predictions of the model w/ RAG-PT $p \leftarrow \mathcal{M}(x_v,(x_t,x_r))$ \\
    }
    
\end{algorithm}

To address cross-modality misalignment and the overall misalignment introduced by incorporating retrieved knowledge, as shown in Algorithm~\ref{ag:dpo}, we propose a RAG-based preference fine-tuning (RAG-PT) approach to fine-tune the target Med-LVLM $\mathcal{M}(\cdot)$. Specifically, RAG-PT constructs two types of preference pairs designed to mitigate both categories of misalignment.

\textbf{Preference Pairs for Cross-Modality Alignment.} We first construct preference pairs aimed at improving cross-modality alignment. In this dataset, we select samples from \begin{small}$\mathcal{D}=\{x_v^{(i)}, x_t^{(i)}, y^{(i)}\}_{i=1}^N$\end{small}, where $x_v$, $x_t$, and $y$ represent the input medical image, clinical query, and ground-truth answer, respectively. For simplicity, we omit the sample index $(i)$ in the following sections. A model's correct response using retrieved knowledge, i.e., $\mathcal{M}(x_v, (x_t, x_r)) = y$, is considered a preferred response $p_i$, where $x_r$ is the retrieved information. A dispreferred response $n_i$ is selected from cases where the model makes a correct inference based on an unrelated image, i.e., $\mathcal{M}(x_v^*, x_t) \neq y$, but $\mathcal{M}(x_v^*, x_t + x_r) = y$, reflecting the model's reliance on the retrieved knowledge. The unrelated images $x_v^*$ are generated through a two-step process: first, we use the retriever to select an image $x_v'$ with the lowest similarity to the target image; then, we introduce diffusion noise into the selected unrelated image. We define the noise step as $s$, and the noised image at step $s$ is expressed as:
\begin{equation}\small
x_v^* = \sqrt{\xi_s} \cdot x_v' + \sqrt{1 - \xi_s} \cdot \epsilon,
\end{equation}
where $\bar{\xi_s} = \prod_{i=0}^{s} \xi_i$ and $ \xi_s \in (0, 1)$ is a hyperparameter. The preference pairs constructed in this stage are denoted as $\mathcal{D}_{cm}$. By comparing the preferred and dispreferred responses in $\mathcal{D}_{cm}$, we encourage the model to prioritize the input medical image when generating responses.

\textbf{Preference Pairs for Overall Alignment.} Second, we construct preference pairs to improve overall alignment, focusing on enhancing the model's ability to effectively leverage retrieved knowledge when generating responses. The preference pairs in this stage are constructed from two subsets. The first subset, $\mathcal{D}_{oa}^1$, is designed to strengthen the model’s comprehension and reasoning abilities regarding the retrieved knowledge. Preferred responses are selected where the model correctly answers based on both the original image and the retrieved information, i.e., $\mathcal{M}(x_v,x_t+x_r) = y$, while dispreferred responses represent cases where the model answers incorrectly based on the image without using retrieval, i.e., $\mathcal{M}(x_v,x_t) \neq y$. Comparing these preferred and dispreferred responses enhances the model’s understanding of the retrieved information and improves the overall effectiveness of RAG. In the second subset, $\mathcal{D}_{oa}^2$, the goal is to mitigate interference from the retrieved knowledge. Preferred responses are selected where the model correctly answers based solely on the original image without using retrieved knowledge, i.e., $\mathcal{M}(x_v,x_t) = y$, while dispreferred responses occur when the model answers incorrectly using both the image and retrieved information, i.e., $\mathcal{M}(x_v,x_t + x_r) \neq y$. This helps the model learn when to rely on its internal knowledge versus retrieved knowledge. Finally, we combine the first and second subsets to form the second set of preference pairs, $\mathcal{D}_{oa} = \mathcal{D}_{oa}^1 \cup \mathcal{D}_{oa}^2$.

Finally, we merge the first and second preference set and denote the preference dataset as \begin{small}$\mathcal{D}_{pt}=\mathcal{D}_{cm} \cup \mathcal{D}_{oa}=\{x^{(i)}, x^{(i)*}, y_{w,o}^{(i)}, y_{l,o}^{(i)}\}_{i=1}^N$\end{small}, where \begin{small}$y_{w,o}^{(i)}$\end{small}, \begin{small}$y_{l,o}^{(i)}$\end{small} are represented as preferred and dispreferred responses, respectively; $x^*$ denotes the noisy data. Based on the curated preferences, we fine-tune Med-LVLM using direct preference optimization~\citep{rafailov2023direct} with the following loss:
{\small \begin{equation}
\begin{array}{l}
\mathcal{L}_{pt} = -\mathbb{E}_{(x,y_{w,o},y_{l,o}) \sim \mathcal{D}} 
\left[ \log \sigma
\left(
\alpha \log \frac{\pi_\theta(y_{w,o} | x)}{\pi_{o}(y_{w,o} | x)}
- \alpha \log \frac{\pi_\theta(y_{l,o} | x^*)}{\pi_{o}(y_{l,o} | x^*)}
\right) \right].
\end{array}
\label{eq:dpoo}
\end{equation}}

    \vspace{-1em}
\section{Theoretical Analysis}
\vspace{-0.5em}
\label{thm anal}
In this section, we provide a theoretical analysis of the model obtained from equation \ref{eq:dpoo} and examine how the image input and retrieved context influences the model. Recall that $x_v, y, x_t,x_r$ denotes input medical image, groundtruth answer, question, and retrieved information, respectively.

\vspace{-0.5em}
\subsection{The Improvement on Cross-Modality Alignment}
We first consider the loss for cross-modality alignment,
{\small
\begin{equation} \small \label{eq: cm loss}
\begin{array}{l}
\mathcal{L}_{cm} = -\mathbb{E}_{(x,y_{w,o},y_{l,o}) \sim \mathcal{D}_{cm}} 
\left[ \log \sigma
\left(
\alpha \log \frac{\pi_\theta(y_{w,o} | x)}{\pi_{o}(y_{w,o} | x)}
- \alpha \log \frac{\pi_\theta(y_{l,o} | x)}{\pi_{o}(y_{l,o} | x)}
\right) \right].
\end{array}
\end{equation}}
where $(x_w,y_{w,o}) \sim q_w(x_w,y_{w,o}|x_t,x_r)$ and $(x_l,y_{l,o}) \sim q_l(x_l,y_{l,o}|x_t,x_r)$ represent distributions of the preferred responses and dispreferred responses on $\mathcal{D}_{cm}$, respectively. Let $x$ denote $(x_v,x_r,x_t)$
\begin{definition} \label{def: wt}
    Define the weight of $x_v$ with respect to $\log\pi_\theta(y|x)$ as
{\small\begin{equation}
    \text{wt}(x_v,\pi_\theta):=\E_{y\sim\pi_\theta(\cdot|x)}\left[
        \frac{\partial}{\partial x_v} \log \pi_\theta(y|x) \right]^2
\end{equation}}
\vspace{-1.5em}
\end{definition}
\noindent Definition \ref{def: wt} describes how $\log\pi_\theta(y|x)$ changes with respect to $x_v$, and the weight is always non-dispreferred. We demonstrate that this is a reasonable definition through Lemma \ref{lem: linear}.
\begin{lemma} \label{lem: linear}
    For linear model $y = \theta_1 x_v + \theta_2 x_t +\epsilon$ such that $\epsilon\sim N (0,1)$, $\text{wt}(x_v,\pi_\theta) = \theta_1^2$
\end{lemma}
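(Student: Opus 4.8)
The plan is to compute the quantity $\text{wt}(x_v,\pi_\theta)$ directly from Definition~\ref{def: wt} for the specified linear-Gaussian model. First I would write down explicitly the conditional density $\pi_\theta(y\mid x)$ induced by the model $y=\theta_1 x_v+\theta_2 x_t+\epsilon$ with $\epsilon\sim N(0,1)$; since the noise is standard Gaussian, this is simply
\begin{equation}
\pi_\theta(y\mid x) = \frac{1}{\sqrt{2\pi}}\exp\!\left(-\tfrac{1}{2}\bigl(y-\theta_1 x_v-\theta_2 x_t\bigr)^2\right),
\end{equation}
so that $\log\pi_\theta(y\mid x) = -\tfrac{1}{2}(y-\theta_1 x_v-\theta_2 x_t)^2 + \text{const}$, where the constant does not depend on $x_v$.

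Next I would differentiate with respect to $x_v$: applying the chain rule gives
\begin{equation}
\frac{\partial}{\partial x_v}\log\pi_\theta(y\mid x) = \theta_1\,(y-\theta_1 x_v-\theta_2 x_t).
\end{equation}
Now I would substitute into Definition~\ref{def: wt}, which asks for $\E_{y\sim\pi_\theta(\cdot\mid x)}\bigl[\frac{\partial}{\partial x_v}\log\pi_\theta(y\mid x)\bigr]^2$. The key observation is that under $y\sim\pi_\theta(\cdot\mid x)$ the residual $y-\theta_1 x_v-\theta_2 x_t$ is exactly $\epsilon\sim N(0,1)$. Hence the derivative equals $\theta_1\epsilon$, and squaring and taking expectation yields $\theta_1^2\,\E[\epsilon^2] = \theta_1^2\cdot 1 = \theta_1^2$, which is the claim. (One should read the square in the definition as being applied inside the expectation, i.e. $\E[(\cdot)^2]$ — this matches the intended Fisher-information-like interpretation and is the only reading that makes the weight nonnegative as the text asserts; if instead one reads it as $(\E[\cdot])^2$ the answer would be $0$, so I would note this interpretation explicitly.)

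There is essentially no hard step here — the computation is a one-line Gaussian score calculation — so the main thing to be careful about is the bracketing convention in the definition and making explicit that, when $y$ is drawn from the model's own conditional, the residual term is distributed as the standard Gaussian noise. I would state these two points clearly and present the three displayed equations above as the complete argument.
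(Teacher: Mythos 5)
Your proposal is correct and follows essentially the same route as the paper: compute the Gaussian log-density, differentiate in $x_v$ to get $\theta_1(y-\theta_1 x_v-\theta_2 x_t)$, and take the expectation of its square under $\pi_\theta(\cdot\mid x)$, reading the definition as $\E[(\cdot)^2]$ exactly as the paper does. The only cosmetic difference is that the paper evaluates the second moment of the residual by an integration-by-parts identity for the Gaussian density, whereas you simply invoke $\E[\epsilon^2]=1$; both yield $\theta_1^2$.
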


\begin{assumption} \label{asump: fun preoperty}
    Let $h(x,y)$, abbreviate as $h$, be
    {\footnotesize \begin{equation}
        h := \left[
        \sum_y\pi_o(y|x)\left(\frac{q_w(y|x)}{q_l(y|x)}\right)^{\frac{1}{\alpha}}\right]^{-1}\left(\frac{q_w(y|x)}{q_l(y|x)}\right)^{\frac{1}{\alpha}}
    \end{equation}}
    
    Assume that $\text{wt}(x_v,\pi_o) < c^2$, where
    {\footnotesize
    \begin{equation}
        c = \sqrt{\left\Vert \sqrt{\pi_o(y|x)}\cdot\frac{\partial}{\partial x_v}h \right\Vert_2^2 + \int \left(\frac{\partial}{\partial x_v}h\right)^2\frac{\pi_o(y|x)}{h} d y} - \left\Vert \sqrt{\pi_o(y|x)}\cdot\frac{\partial}{\partial x_v}h \right\Vert_2
    \end{equation}}
\end{assumption}
\vspace{-1em}
\noindent Assumption \ref{asump: fun preoperty} requires that $x_v$ has a small weight in $\log \pi_o(y|x)$. A model $\pi_o(y|x)$ independent of $x_v$ could satisfy Assumption \ref{asump: fun preoperty}. In this case, the reference model generates answers without using information from the image.
\begin{theorem} \label{thm: increase weight}
Suppose that Assumption \ref{asump: fun preoperty} holds, cross-modality loss increase the weight of $x_v$.
\begin{equation}\small
    \text{wt}(x_v,\pi_\theta) > \text{wt}(x_v,\pi_o)
\end{equation}
\end{theorem}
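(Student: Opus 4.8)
The plan is to proceed in three steps: (i) identify the minimizer of $\mathcal{L}_{cm}$ in closed form as a reweighting of the reference policy, (ii) expand $\text{wt}(x_v,\cdot)$ at that minimizer via the product rule for the score, and (iii) close the inequality with a single Cauchy--Schwarz estimate whose constants are exactly those appearing in the definition of $c$ in Assumption~\ref{asump: fun preoperty}.

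\emph{Step (i): the minimizer of $\mathcal{L}_{cm}$ is $\pi_\theta(y\mid x)=\pi_o(y\mid x)\,h(x,y)$.} I would set $s(x,y):=\alpha\log\frac{\pi_\theta(y\mid x)}{\pi_o(y\mid x)}$, so that the integrand of $\mathcal{L}_{cm}$ is the logistic loss $-\log\sigma\big(s(x,y_{w,o})-s(x,y_{l,o})\big)$ with $y_{w,o}\sim q_w(\cdot\mid x)$ and $y_{l,o}\sim q_l(\cdot\mid x)$. Following the standard DPO argument \citep{rafailov2023direct}, this population objective is convex in $s$ and its first-order condition is solved by $s(x,y)=\log\frac{q_w(y\mid x)}{q_l(y\mid x)}+C(x)$ for an arbitrary normalizing function $C(x)$: for an unordered pair containing one draw from each of $q_w,q_l$, the posterior probability that a given element is the $q_w$-draw is precisely $\sigma$ of the difference of log density-ratios, so this $s$ is a stationary point and hence the minimizer by convexity. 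Reparametrizing back and imposing $\int\pi_\theta(y\mid x)\,dy=1$ then yields $\pi_\theta(y\mid x)=\pi_o(y\mid x)\,h(x,y)$ with $h$ exactly as in Assumption~\ref{asump: fun preoperty}. (This uses the mild realizability assumption that the policy class can attain this optimum; I would run the argument at the population level.)

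\emph{Step (ii): expand $\text{wt}(x_v,\pi_\theta)$.} By Definition~\ref{def: wt}, $\text{wt}(x_v,\pi_\theta)=\int\frac{(\partial_{x_v}\pi_\theta)^2}{\pi_\theta}\,dy$, and with $\partial_{x_v}\pi_\theta=h\,\partial_{x_v}\pi_o+\pi_o\,\partial_{x_v}h$ I would write $f_1:=\sqrt{h}\,\frac{\partial_{x_v}\pi_o}{\sqrt{\pi_o}}$ and $f_2:=\sqrt{\tfrac{\pi_o}{h}}\,\partial_{x_v}h$, so that $\text{wt}(x_v,\pi_\theta)=\|f_1+f_2\|_2^2$, where $\|f_2\|_2^2=\int\frac{\pi_o(\partial_{x_v}h)^2}{h}\,dy$ and, separately, $\|\sqrt{\pi_o}\,\partial_{x_v}h\|_2^2=\int\pi_o(\partial_{x_v}h)^2\,dy$ --- these are precisely the two quantities under the square root in $c$. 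The key observation is that the $h$-factors cancel in the cross term: $\langle f_1,f_2\rangle=\int(\partial_{x_v}\pi_o)(\partial_{x_v}h)\,dy$. I would then split $(\partial_{x_v}\pi_o)(\partial_{x_v}h)=\big(\tfrac{\partial_{x_v}\pi_o}{\sqrt{\pi_o}}\big)\big(\sqrt{\pi_o}\,\partial_{x_v}h\big)$ and apply Cauchy--Schwarz to get $|\langle f_1,f_2\rangle|\le\sqrt{\text{wt}(x_v,\pi_o)}\;\|\sqrt{\pi_o}\,\partial_{x_v}h\|_2$, using $\int\frac{(\partial_{x_v}\pi_o)^2}{\pi_o}\,dy=\text{wt}(x_v,\pi_o)$ from Definition~\ref{def: wt}.

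\emph{Step (iii): close the inequality.} Dropping $\|f_1\|_2^2\ge 0$ and inserting the cross-term bound gives
\[
\text{wt}(x_v,\pi_\theta)\;\ge\;\|f_2\|_2^2-2\,\|\sqrt{\pi_o}\,\partial_{x_v}h\|_2\,\sqrt{\text{wt}(x_v,\pi_o)}.
\]
But Assumption~\ref{asump: fun preoperty} is exactly $\sqrt{\text{wt}(x_v,\pi_o)}<c$, i.e.\ $\sqrt{\text{wt}(x_v,\pi_o)}+\|\sqrt{\pi_o}\,\partial_{x_v}h\|_2<\sqrt{\|\sqrt{\pi_o}\,\partial_{x_v}h\|_2^2+\|f_2\|_2^2}$; squaring and cancelling $\|\sqrt{\pi_o}\,\partial_{x_v}h\|_2^2$ rearranges to $\|f_2\|_2^2-2\,\|\sqrt{\pi_o}\,\partial_{x_v}h\|_2\,\sqrt{\text{wt}(x_v,\pi_o)}>\text{wt}(x_v,\pi_o)$, so chaining the two relations yields $\text{wt}(x_v,\pi_\theta)>\text{wt}(x_v,\pi_o)$. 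The main obstacle, I expect, is Step~(i): making the passage from $\mathcal{L}_{cm}$ to the closed-form minimizer fully rigorous (the realizability assumption, and, if a finite-sample version is desired, a uniform-convergence step), together with the standing smoothness/integrability conditions needed to differentiate $\pi_\theta(y\mid x)$ under the integral sign; Steps~(ii)--(iii) are then essentially bookkeeping, the only delicate point being the cross-term cancellation and the precise Cauchy--Schwarz split in Step~(ii) that make the constant $c$ drop out exactly.
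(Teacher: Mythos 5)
Your proposal is correct and follows essentially the same route as the paper's own proof: step (i) is precisely the paper's Lemma~\ref{app lem: optimizer form} (derived there via Lemma~\ref{app lem: loss function property} and Lemma~\ref{lem: closed form soultion} under the realizability Assumption~\ref{app asump: space}, which you correctly flag as implicit), and steps (ii)--(iii) reproduce the paper's expansion $\text{wt}(x_v,\pi_\theta)=\int(\partial_{x_v}\log\pi_o+\partial_{x_v}\log h)^2\pi_o h\,dy$ followed by dropping the nonnegative $\|f_1\|_2^2$ term, the identical Cauchy--Schwarz bound on the cross term $\int(\partial_{x_v}\pi_o)(\partial_{x_v}h)\,dy$, and the factored threshold condition $\sqrt{\text{wt}(x_v,\pi_o)}<c$. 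The only cosmetic difference is that you package the expansion as $\|f_1+f_2\|_2^2$ with explicit $f_1,f_2$ and then rearrange the definition of $c$ rather than factoring the quadratic as the paper does, but the argument is the same.
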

    \vspace{-1em}
Theorem \ref{thm: increase weight} indicates that when the weight of $x_v$ is too small in the initial model $\pi_o(y|x)$, the cross-modality loss function adjusts the model to place greater emphasis on images, informed by the retrieved data. Intuitively, for any sample $(x,y)$, generating unrelated images causes the policy to rely less on images. By using samples from this distribution as negative samples, the new model diverges from the initial model, increasing its reliance on images.

\subsection{The Improvement on Overall Alignment}
In this section, we analyze the improvement on overall alignment. Let $q_w^1(x_v,y_{w,o}|x_t,x_r)$ and $q_l^1(x_v,y_{l,o}|x_t)$ represent distributions of the preferred responses and dispreferred responses on $\mathcal{D}_{oa}^1$, respectively; $q_w^2(x_v,y_{w,o}|x_t)$ and $q_l^2(x_v,y_{l,o}|x_t,x_r)$ represent distributions of the preferred responses and dispreferred responses on $\mathcal{D}_{oa}^2$, respectively. Overall loss is defined by
\begin{equation} \label{eq: oa loss}
\begin{array}{l}
\mathcal{L}_{oa} = -\mathbb{E}_{(x,y_{w,o},y_{l,o}) \sim \mathcal{D}_{oa}} 
\left[ \log \sigma
\left(
\alpha \log \frac{\pi_\theta(y_{w,o} | x)}{\pi_{o}(y_{w,o} | x)}
- \alpha \log \frac{\pi_\theta(y_{l,o} | x)}{\pi_{o}(y_{l,o} | x)}
\right) \right].
\end{array}
\end{equation}

Consider $\pi$ as the generative distribution underlying $\gM$, construction of $\mathcal{D}_{oa}^1$ and $\mathcal{D}_{oa}^2$ indicate that there is a significant gap between $\pi(y|x_v,x_t,x_r)$ and $\pi(y|x_v,x_t,\tilde{x}_r)$ for $x_r$ generates true answer while $\tilde{x}_r$ generate a false one.
\begin{assumption} \label{asump: lip}
    Assume that $\pi(y|x_x,x_r,x_t): x \to y$ is L-lipschitz continuous on $x_r$ for all $(x_v,x_t,y)$ such that $
    \vert \pi(y|x_v,x_t,x_r) - \pi(y|x_v,x_t,\tilde{x}_r)\vert \leq L \cdot d_x(x_r, \tilde{x}_r)$, where $d_x$ is any distance metric on the text space.
\end{assumption}
Based on Assumption \ref{asump: lip}, $\tilde{x}_r$ can be viewed as being far from the meaningful retrieved information $x_r$, resulting in different weight in the model. Then, we claim in the following theorem that the overall loss in equation \ref{eq: oa loss} can effectively leverage retrieved knowledge while training.

\begin{assumption} \label{asump: fun preoperty 1}
    Let $h_1(x_v,x_t,x_r,y)$, abbreviate as $h_1$, be
    {\small
    \begin{equation}
    h_1:=
    \left[\sum_y\pi_o(y|x)\left(\frac{q_w^1(y|x_v,x_t,x_r)+q_w^2(y|x_v,x_t)}{q_l^1(y|x_v,x_t) + q_l^2(y|x_v,x_t,x_r)}\right)^{\frac{1}{\alpha}}\right]^{-1}\left(\frac{q_w^1(y|x_v,x_t,x_r)+q_w^2(y|x_v,x_t)}{q_l^1(y|x_v,x_t) + q_l^2(y|x_v,x_t,x_r)}\right)^{\frac{1}{\alpha}}
    \end{equation}}
    Assume that $\text{wt}(x_r,\pi_o) < c_1^2$ and $\text{wt}(\Tilde{x}_r,\pi_o) > c_2^2$, where
    {\small\begin{equation}
    \begin{aligned}
        & c_1 = \sqrt{\left\Vert \sqrt{\pi_o}\cdot\frac{\partial h_1}{\partial x_r} \right\Vert_2^2 + \int \left(\frac{\partial h_1}{\partial x_r}\right)^2\frac{\pi_o}{h_1} d y} - \left\Vert \sqrt{\pi_o}\cdot\frac{\partial h_1}{\partial x_r}\right\Vert_2 \\
        & c_2 = \sqrt{\left\Vert \sqrt{\pi_o}\cdot\frac{\partial h_1}{\partial \Tilde{x}_r} \right\Vert_2^2 
        + \int \left( \frac{\partial h_1}{\partial \Tilde{x}_r}\right)^2\frac{\pi_o}{h_1} + \left(\frac{\partial \pi_o}{\partial \Tilde{x}_r}\right)^2\frac{h_1}{\pi_o}
        d y} + \left\Vert \sqrt{\pi_o}\cdot\frac{\partial h_1}{\partial \Tilde{x}_r} \right\Vert_2
    \end{aligned}       
    \end{equation}}
\end{assumption}

\begin{theorem} \label{thm: increase weight x_r}
Suppose that Assumption \ref{asump: fun preoperty 1} holds, then overall loss \ref{eq: oa loss} increase the weight of $x_r$ and decrease the weight of $\Tilde{x}_r$.
\begin{equation}
    \text{wt}(x_r,\pi_\theta) > \text{wt}(x_r,\pi_o),~~~\text{wt}(\Tilde{x}_r,\pi_\theta) < \text{wt}(\Tilde{x}_r,\pi_o)
\end{equation}
\end{theorem}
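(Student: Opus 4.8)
The plan is to reproduce the mechanism behind Theorem~\ref{thm: increase weight}, but now tracking how the optimized policy depends on \emph{two} coordinates of the input --- the informative retrieval $x_r$ and the misleading retrieval $\tilde{x}_r$ --- and pushing the two resulting inequalities in opposite directions. The first step is to identify the population minimizer $\pi_\theta$ of the overall loss $\mathcal{L}_{oa}$ in~\eqref{eq: oa loss}. Because $\mathcal{D}_{oa}=\mathcal{D}_{oa}^1\cup\mathcal{D}_{oa}^2$, the preferred- and dispreferred-response marginals of $\mathcal{D}_{oa}$ are proportional to $q_w^1+q_w^2$ and $q_l^1+q_l^2$, so running the same DPO derivation as in the proof of Theorem~\ref{thm: increase weight} --- fit the implicit reward $\alpha\log(\pi_\theta/\pi_o)$ to the log-ratio of these two laws and renormalize, cf.~\citep{rafailov2023direct} --- yields $\pi_\theta(y\mid x)=\pi_o(y\mid x)\,h_1$ with $h_1$ exactly the quantity in Assumption~\ref{asump: fun preoperty 1} (its normalization makes $\int\pi_o h_1\,dy=1$, so $\pi_\theta$ is a genuine density).

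The second step expands the weight. Writing $\partial_{x_r}\log\pi_\theta=\partial_{x_r}\log\pi_o+\partial_{x_r}\log h_1$ and substituting $\pi_\theta=\pi_o h_1$, a direct computation gives
\[
\text{wt}(x_r,\pi_\theta)=\int\frac{(\partial_{x_r}\pi_o)^2}{\pi_o}\,h_1\,dy\;+\;2\int(\partial_{x_r}\pi_o)(\partial_{x_r}h_1)\,dy\;+\;\int\frac{\pi_o\,(\partial_{x_r}h_1)^2}{h_1}\,dy ,
\]
and the same identity holds with $x_r$ replaced by $\tilde{x}_r$. The first term is nonnegative; by Definition~\ref{def: wt}, $\int(\partial_{x_r}\pi_o)^2/\pi_o\,dy=\text{wt}(x_r,\pi_o)$, so Cauchy--Schwarz bounds the middle (cross) term in absolute value by $2\sqrt{\text{wt}(x_r,\pi_o)}\,\Vert\sqrt{\pi_o}\,\partial_{x_r}h_1\Vert_2$, and symmetrically for $\tilde{x}_r$.

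The third step concludes. For the $x_r$ inequality I discard the nonnegative first term and lower-bound the cross term, so $\text{wt}(x_r,\pi_\theta)\ge\int\pi_o(\partial_{x_r}h_1)^2/h_1\,dy-2\sqrt{\text{wt}(x_r,\pi_o)}\,\Vert\sqrt{\pi_o}\partial_{x_r}h_1\Vert_2$; viewing the right-hand side minus $\text{wt}(x_r,\pi_o)$ as a quadratic in $\sqrt{\text{wt}(x_r,\pi_o)}$ and completing the square shows it is positive exactly when $\sqrt{\text{wt}(x_r,\pi_o)}<c_1$, which is the hypothesis. For the $\tilde{x}_r$ inequality I instead retain the first term --- it equals $\int(\partial_{\tilde{x}_r}\pi_o)^2 h_1/\pi_o\,dy$, the extra summand inside $c_2$ --- and upper-bound the cross term, which yields an \emph{upper} bound on $\text{wt}(\tilde{x}_r,\pi_\theta)$; solving the corresponding quadratic from the other side shows $\text{wt}(\tilde{x}_r,\pi_\theta)<\text{wt}(\tilde{x}_r,\pi_o)$ exactly when $\sqrt{\text{wt}(\tilde{x}_r,\pi_o)}>c_2$, again the hypothesis. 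The contrast between the two cases --- discarding versus retaining the $\pi_o$-derivative term, and the smaller versus the larger root of the quadratic --- is exactly what produces the minus sign and the absent term in $c_1$ against the plus sign and the extra term $\int(\partial_{\tilde{x}_r}\pi_o)^2 h_1/\pi_o\,dy$ in $c_2$.

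The computational core (three-term expansion, Cauchy--Schwarz, completing the square) is routine; the real difficulties lie elsewhere. First, Step~1 must be justified with care: the pairs in $\mathcal{D}_{oa}$ come from a coupled mixture of two differently constructed subsets, so one needs either a Bradley--Terry-type consistency condition or a direct first-order-stationarity argument for the population loss to land cleanly on $\pi_\theta=\pi_o h_1$. Second, the differentiation in $\tilde{x}_r$ must be made precise: since $\tilde{x}_r$ is the counterfactual ``bad'' retrieval rather than the argument actually fed to $\pi_o$, one has to treat the retrieval slot as a free coordinate so that $\text{wt}(\tilde{x}_r,\cdot)$ and $\partial_{\tilde{x}_r}(\cdot)$ are well defined, use Assumption~\ref{asump: lip} to make ``$\tilde{x}_r$ far from $x_r$'' quantitative, and check via Assumption~\ref{asump: fun preoperty 1} that the integrals defining $c_1,c_2$ are finite and the interchange of $\partial$ and $\int$ is valid.
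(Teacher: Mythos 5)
Your proposal reproduces the paper's argument essentially verbatim: invoke the closed-form optimizer $\pi_\theta = \pi_o h_1$ (with the preferred/dispreferred laws taken as the mixtures $q_w^1+q_w^2$ and $q_l^1+q_l^2$), expand $\text{wt}(\cdot,\pi_\theta)$ into the diagonal-$\pi_o$, cross, and diagonal-$h_1$ terms, apply Cauchy--Schwarz to the cross term in opposite directions for $x_r$ versus $\tilde{x}_r$ (discarding versus retaining the $\pi_o$-derivative term), and complete the square to read off $c_1$ and $c_2$ --- exactly the chain of inequalities in the paper's Appendix proof, down to the factorizations. The caveats you raise at the end (that Lemma~\ref{app lem: optimizer form} is stated for a single $(q_w,q_l)$ pair and its extension to the coupled mixture is asserted rather than derived, and that the role of $\tilde{x}_r$ as a free coordinate of the retrieval slot is never formalized) are real but are glossed over by the paper too, so they do not distinguish your proof from the paper's.
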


\noindent Theorem \ref{thm: increase weight x_r} suggests that the model tend to improve the overall alignment. When $\Tilde{x}_r$ generates a false answer, the training procedure tends to reduce the reliance on $\Tilde{x}_r$, resulting in a decrease in the weight assigned to $\Tilde{x}_r$. Conversely, if $x_r$ is helpful for generating the true answer, $\pi_\theta(y|x)$ tend to enhance its use of $x_r$.
    \vspace{-1em}

\section{Experiment}
In this section, we evaluate the performance of \ours, aiming to answer the following questions: (1) Can \ours\ effectively improve the factuality of Med-LVLMs compared to decoding-based and RAG-based baselines? (2) How effective is each proposed component on performance? (3) What is the effect of preference data for different alignment goals? and (4) Does \ours\ actually improve cross-modality alignment and overall alignment?

\subsection{Experimental Setups}
\label{sec:exp}
\textbf{Implementation Details}. 
We use LLaVA-Med-1.5 7B~\citep{li2023llava} as the backbone model. During the preference fine-tuning process, we adapt LoRA fine-tuning~\citep{hu2021lora}. For the training of retriever, the vision encoder is a ResNet-50~\citep{he2016deep}, and the text encoder is a bio-BioClinicalBERT~\citep{alsentzer2019publicly}. We use the AdamW optimizer with a learning rate of $10^{-3}$, weight decay of $10^{-2}$ and a batch size of 32. The model is trained for 360 epochs. For more detailed information on training hyperparameters and training data, please see Appendix~\ref{sec:detail}.

\noindent
\textbf{Baseline Methods}. We compare \ours\ with two types of LVLM hallucination mitigation methods that show promising results in natural image understanding. 1) Decoding-based methods, including Greedy Decoding, Beam Search~\citep{sutskever2014sequence},  DoLa~\citep{chuang2023dola}, OPERA~\citep{huang2023opera}, VCD~\citep{leng2023mitigating}. These methods manipulate the logits of the model's output tokens to enhance factual accuracy. 2) Multimodal RAG-based methods, including MedDr~\citep{he2024meddr}, FactMM-RAG~\citep{sun2024fact}, RULE~\citep{xia2024rule}. Furthermore, we compare the performance with other open-source Med-LVLMs, including Med-Flamingo~\citep{moor2023med}, MedVInT~\citep{zhang2023pmc}, RadFM~\citep{wu2023towards}. 

\noindent
\textbf{Evaluation Datasets}.
\label{sec:dataset}
We utilize five medical vision-language datasets for medical VQA and report generation tasks, i.e., MIMIC-CXR~\citep{johnson2019mimic}, IU-Xray~\citep{demner2016preparing}, Harvard-FairVLMed~\citep{luo2024fairclip}, PMC-OA~\citep{lin2023pmc} (we only select the pathology part) and Quilt-1M~\citep{ikezogwo2024quilt}. These datasets cover radiology, ophthalmology, and pathology. To construct the VQA benchmarks, following~\citep{xia2024cares}, we generate question-answer pairs from medical reports using GPT-4~\citep{openai2023gpt4}, with answers formatted as \textit{yes} or \textit{no}. Pathology images are excluded from the report generation task due to their brief and insufficient descriptions. The detailed dataset descriptions are provided in the Appendix~\ref{sec:app_data}. 

\noindent
\textbf{Evaluation Metrics}.
Following~\citep{jing2017automatic,lin2023medical}, we use Accuracy, F1 Score and AUROC for evaluating medical VQA task, and BLEU Score~\citep{papineni2002bleu}, ROUGE-L~\citep{lin2004rouge} and METEOR~\citep{banerjee2005meteor} for evaluating report generation task.
\vspace{-1em}
\subsection{Main Results}
In this section, we provide a comprehensive comparison with various baseline methods and other open-source Med-LVLMs on medical VQA and report generation tasks. 
\begin{table}[t]
    \centering
    \footnotesize
    \caption{Model performance (\%) of different methods based on LLaVA-Med-1.5 on medical VQA task. Notably, we report the accuracy, F1 score and AUROC. The best results and second best results are highlighted in \colorbox{red!25}{red} and \colorbox{blue!15}{blue}, respectively. }
    \vspace{-1em}
    \resizebox{\linewidth}{!}{
    \begin{tabular}{l|cccccc|ccc|cccccc}
    \toprule
        \multirow{2}{*}{Models} & \multicolumn{6}{c|}{\textbf{Radiology}} & \multicolumn{3}{c|}{\textbf{Ophthalmology}} & \multicolumn{6}{c}{\textbf{Pathology}} \\ \cmidrule(r){2-7} \cmidrule(r){8-10} \cmidrule(r){11-16}     
        & \multicolumn{3}{c}{IU-Xray} & \multicolumn{3}{c}{MIMIC-CXR} & \multicolumn{3}{|c}{Harvard-FairVLMed} & \multicolumn{3}{|c}{Quilt-1M} & \multicolumn{3}{c}{PMC-OA (Pathology)} \\ \cmidrule(r){2-4} \cmidrule(r){5-7} \cmidrule(r){8-10} \cmidrule(r){11-13} \cmidrule(r){14-16}
        & Acc & F1 & AUC & Acc & F1 & AUC & Acc & F1 & AUC & Acc &F1 &AUC & Acc &F1& AUC \\
        \midrule
        LLaVA-Med-1.5 & 75.47 & 64.04 & 67.46 & 75.79 & 80.49 & 68.84 & 63.03&74.11&63.05 & 62.80&72.90&60.03 & 59.28&71.98&54.19  \\ 
        \midrule
        + Greedy & 76.88 & 65.59 & 68.74 & 78.32 & 86.75 & 71.13 & 82.54&85.98&70.09 & 64.72&70.12&58.75 & 58.61&70.42&53.10 \\
        + Beam Search & 76.91 & 66.06 & 68.77 & 81.56 & 86.36 & 73.79 & 80.93&88.08&68.94 & 63.52&69.33&57.65 & 56.29&69.84&52.89 \\
        + DoLa & 78.00 & 66.75 & 72.19 & 81.35  & 85.73 & 72.73 & 76.87&85.53&67.10 & 63.47&69.10&57.58 & 57.71&70.27&52.95 \\
        + OPERA & 70.59 & 61.54 & 63.22 & 69.34 & 76.66 & 62.46 & 71.41&81.37&65.59 & 60.51&66.32&54.79 & 55.32&68.30&51.86 \\
        + VCD & 68.99 & 54.35 & 61.08 & 70.89 & 75.57 & 64.61 & 65.88&77.20&64.16 & 61.43&67.39&55.72 & 55.10&67.94&51.62 \\ \midrule
        + MedDr & 83.33 & 67.80 & 77.15 & 55.16 & 56.18 & 58.47 & 70.17&80.72&64.15 & 68.15&73.23&67.01 & 59.97&69.19&57.01 \\
        + FactMM-RAG & 84.51 & 68.51 & 77.07 & 77.58 & 81.86 & 70.09 & 83.67 & 87.21 & 72.20 & \cellcolor{blue!15}{69.25} & 73.62 & \cellcolor{blue!15}{68.15} & 60.49 & 69.38 & 57.31 \\
        + RULE & \cellcolor{blue!15}{87.84} & \cellcolor{blue!15}{78.00} & \cellcolor{blue!15}{85.78} & \cellcolor{red!25}{83.92} & \cellcolor{blue!15}{87.49} & \cellcolor{blue!15}{83.44} & \cellcolor{blue!15}{87.12} & \cellcolor{red!25}{92.89}& \cellcolor{blue!15}{77.08} & 68.97 & \cellcolor{blue!15}{73.80}& 68.13 & \cellcolor{blue!15}{61.41} & \cellcolor{blue!15}{70.36} & \cellcolor{blue!15}{58.91} \\ 
        \midrule
        \ours\ & 
        \cellcolor{red!25}{89.54}& \cellcolor{red!25}{80.72}& \cellcolor{red!25}{87.13} & \cellcolor{blue!15}{83.57}& \cellcolor{red!25}{88.49}&
        \cellcolor{red!25}{85.08} &
        \cellcolor{red!25}{87.94} &
        \cellcolor{blue!15}{92.78} &
        \cellcolor{red!25}{80.81} & \cellcolor{red!25}{72.95} & \cellcolor{red!25}{76.35} & \cellcolor{red!25}{72.25} & \cellcolor{red!25}{64.54} & \cellcolor{red!25}{73.09} & \cellcolor{red!25}{61.42} \\
    \bottomrule
    \end{tabular}
    }
    \vspace{-1em}
    \label{tab:vqa}
\end{table}
\begin{table}[t]
    \centering
    \footnotesize
    \caption{Model performance (\%) of different methods on report generation task. Notably, we report the average BLEU score, ROUGE-L, METEOR. For detailed BLEU score, see Appendix~\ref{sec:bleu}.}
    \vspace{-1em}
    \resizebox{\linewidth}{!}{
    \begin{tabular}{l|cccccc|ccc}
    \toprule
        \multirow{2}{*}{Models} & \multicolumn{6}{c|}{\textbf{Radiology}} & \multicolumn{3}{c}{\textbf{Ophthalmology}} \\ \cmidrule(r){2-7} \cmidrule(r){8-10}
        & \multicolumn{3}{c}{IU-Xray} & \multicolumn{3}{c}{MIMIC-CXR} & \multicolumn{3}{c}{Harvard-FairVLMed}  \\ \cmidrule(r){2-4} \cmidrule(r){5-7} \cmidrule(r){8-10}
        & BLEU & ROUGE-L & METEOR & BLEU & ROUGE-L & METEOR & BLEU & ROUGE-L & METEOR \\
        \midrule
        LLaVA-Med-1.5 & 9.64&12.26&8.21 &  12.11&13.05&11.16 & 18.11&11.36&10.75  \\
        \midrule
        + Greedy & 11.47&15.38&12.69 & 16.63&14.26&14.19 & 17.98&11.49&13.77 \\
        + Beam Search & 12.10&16.21&13.17 & 16.97&14.74&14.43 & 18.37&12.62&14.50 \\
        + DoLa & 11.79&15.82&12.72 & 17.11&14.89&14.81 & 18.26&12.51&14.51 \\
        + OPERA & 10.66&14.70&12.01 & 15.40&12.52&13.72 & 16.59&11.47&13.63  \\ 
        + VCD & 10.42&14.14&11.59 & 15.18&12.30&13.38 & 16.73&11.38&13.89  \\ \midrule
        + MedDr & 12.37&16.45&13.50 & 18.59&15.72&16.77 & 19.82&13.72&15.40 \\
        + FactMM-RAG & 14.70 & 18.05 & 15.92 & \cellcolor{blue!15}{18.71} & \cellcolor{blue!15}{15.84} & 16.82 & 20.82 & 14.17 & 15.31 \\
        + RULE & \cellcolor{blue!15}{27.53} & \cellcolor{blue!15}{23.16} & \cellcolor{blue!15}{27.99} & 18.61& \cellcolor{red!25}{15.96}& \cellcolor{blue!15}{17.42} & \cellcolor{blue!15}{22.35} & \cellcolor{blue!15}{14.93} & \cellcolor{blue!15}{17.74} \\ 
        \midrule
        \ours\  & \cellcolor{red!25}{31.38} & \cellcolor{red!25}{25.59}&\cellcolor{red!25}{32.43} & \cellcolor{red!25}{23.25} & 12.34 & \cellcolor{red!25}{20.47} & \cellcolor{red!25}{24.82} & \cellcolor{red!25}{16.59}&\cellcolor{red!25}{19.85} \\
    \bottomrule
    \end{tabular}
    }
    \vspace{-2em}
    \label{tab:report}
\end{table}

\noindent \textbf{Comparison with Baselines.} 
We compare \ours\ with baseline methods on medical VQA and report generation tasks, with the results presented in Table~\ref{tab:vqa} and Table~\ref{tab:report}, respectively. Overall, \ours\ outperforms all baselines across nearly all metrics and datasets. Specifically, \ours\ demonstrates a significant performance boost, improving by 18.5\% and 69.1\% over the original Med-LVLM in medical VQA and report generation tasks, respectively. When compared to baseline methods, \ours\ surpasses decoding-based approaches, achieving improvements of 11.5\% and 44.2\% in the two tasks. Furthermore, recent RAG-based methods show substantial improvements over earlier techniques, yet our approach still outperforms RAG-based baselines by 2.8\% and 16.1\% in the medical VQA and report generation tasks, respectively. This indicates that \ours\ effectively mitigates misalignment issues introduced by RAG. Notably, \ours\ achieves more pronounced gains in report generation, likely due to the higher complexity of the task and the greater influence of retrieved contexts in guiding open-ended generation.

\begin{wraptable}{r}{0.33\textwidth}
\begin{center}
\scriptsize
\vspace{-1em}
\caption{Performance comparison with several Med-LVLMs. Rad: Radiology, Opt: Ophthalomology, Pat: Pathology.}
\vspace{-1em}
\label{tab:other}
\begin{tabular}{l|ccc}
\toprule
Model & Rad & Opt & Pat \\\midrule
Med-Flamingo & 27.42 & 22.50 & \underline{29.11}  \\
MedVInT  & 33.17 & \underline{29.40} & 25.33 \\
RadFM & 35.82 & 27.07 & 24.82 \\
miniGPT-Med & \underline{36.66} & 25.28 & 23.16 \\
\ours\ & \textbf{56.94} & \textbf{56.38} & \textbf{54.10}  \\
\bottomrule
\end{tabular}
\end{center}
\vspace{-1.5em}
\end{wraptable}
\noindent
\textbf{Comparison with Other Med-LVLMs.} To provide a comprehensive comparison, we evaluate \ours\ against other open-source Med-LVLMs to demonstrate the superiority of our approach. We assess the performance of these models across different medical image modalities, reporting the average results for medical VQA and report generation tasks in Table~\ref{tab:other} (see Appendix~\ref{sec:result} for detailed results). Our findings show that \ours\ significantly outperforms Med-LVLMs pre-trained on large-scale datasets across various domains. This reinforces the generalizability and effectiveness of our approach across diverse image domains and medical multimodal tasks.

\vspace{-0.5em}
\subsection{Analysis}
\vspace{-0.5em}
In this section, we provide a detailed analysis of each module's performance, along with a series of analytical experiments, to better understand the performance gains of \ours. 
Additionally, we demonstrate the compatibility of our method in Appendix~\ref{sec:result}, including its application to generalist and domain-specific Med-LVLMs. 

\begin{wraptable}{r}{0.45\textwidth}
\begin{center}
\scriptsize
\vspace{-1em}
\caption{Ablation results on two datasets covering different domains. RG: report generation, FairVLMed: Harvard-FairVLMed.}
\vspace{-0.5em}
\label{tab:aba}
\begin{tabular}{l|ccccc}
\toprule
Model & \multicolumn{2}{c}{IU-Xray} & \multicolumn{2}{c}{FairVLMed} \\
& VQA & RG & VQA & RG \\
\midrule
LLaVA-Med-1.5 & 68.99 & 10.04 & 66.63 & 13.41 \\
+DR & 77.12 & 13.23 & 72.69 & 15.89 \\
+RCS & 79.56 & 17.92 & 75.74 & 17.22 \\
+RAG-PT (Ours) & \textbf{85.80} & \textbf{29.80} & \textbf{87.18} & \textbf{20.42} \\
\bottomrule
\end{tabular}
\end{center}
\vspace{-1em}
\end{wraptable}
\textbf{Ablation Studies.} We conduct a series of ablation experiments to evaluate the impact of each component in \ours. The results for both medical VQA and report generation tasks on the IU-Xray and Harvard-FairVLMed datasets are summarized in Table~\ref{tab:aba}. According to the results, we can see that: (1) The domain-aware retrieval mechanism (DR) significantly improves the factuality of Med-LVLM, with an average performance increase of 17.9\% and 16.1\% on the IU-Xray and FairVLMed datasets, respectively. Here, the retrieved knowledge aids the model in generating more factual responses. (2) Building on this, the introduction of adaptive retrieval context selection (RCS) further filters out unreliable retrieved contexts, yielding an additional performance boost of 19.3\% and 6.3\% on the IU-Xray and FairVLMed datasets. (3) The inclusion of RAG-based preference fine-tuning (RAG-PT) enhances the model's understanding of the retrieved knowledge, leading to substantial performance gains of 37.1\% and 16.9\% on the respective datasets. This demonstrates that RAG-PT effectively addresses misalignment issues. 

\begin{wraptable}{r}{0.45\textwidth}
\begin{center}
\scriptsize
\vspace{-2.5em}
\caption{Performance using RAG-PT based on subsets of preference data.}
\vspace{-0.5em}
\label{tab:pre}
\begin{tabular}{l|ccccc}
\toprule
Model & \multicolumn{2}{c}{IU-Xray} & \multicolumn{2}{c}{FairVLMed} \\
& VQA & RG & VQA & RG \\
\midrule
LLaVA-Med-1.5 & 68.99 & 10.04 & 66.63 & 13.41 \\
+RAG-PT 1 & 80.19 & 19.38 & 79.42 & 18.37 \\
+RAG-PT 2 & 80.27 & \textbf{20.16} & 79.35 & 18.66 \\
+RAG-PT 3 & \textbf{81.30} & 19.43 & \textbf{80.07} & \textbf{18.92} \\
\bottomrule
\end{tabular}
\end{center}
\vspace{-1em}
\end{wraptable}
\noindent
\textbf{Impact of the Preference Data in RAG-PT.} To better understand how RAG-PT mitigates the misalignment issue and improves performance, we conducted a detailed study on the training preference data composition of RAG-PT. As described in Section~\ref{sec:rag-pt}, the RAG-PT data is designed to address both cross-modality alignment and overall alignment objectives, with the latter focusing on enhanced understanding of retrieved knowledge and minimizing retrieval interference. The detailed experimental results in Table~\ref{tab:pre} demonstrate that the preference data tailored for different alignment objectives positively impacts the model's performance, showing the effectiveness of RAG-PT. Additional ablation analysis on preference data can be seen in Appendix~\ref{sec:abl_more}.

\noindent
\textbf{How Effective is \ours\ in Mitigating Misalignment Issues?} 
To gain a more intuitive understanding of the effectiveness of \ours\ in addressing misalignment issues: 1) we calculate the proportion of errors caused by RAG and compare it to the proportion after incorporating \ours. 2) We visualize the attention maps of image and text tokens with and without RAG-PT. 
First, as mentioned in Section~\ref{sec:rag-pt}, the model may directly copy reference information, referred to as Copy-Reference (CR) rate. After applying \ours, as shown in Figure~\ref{fig:alignment}, the CR rate drops to 28.19\%. Additionally, the proportion of errors affected by RAG interference, referred to as Over-Reliance (OR) rate, which is initially 43.31\%, decreased to 8.38\% after incorporating \ours.
Furthermore, as shown in Figure~\ref{fig:jm_acc}, the original Med-LVLM tends to rely more heavily on text while ignoring visual information. When retrieval information is introduced, the original Med-LVLM focused more on the retrieved answers, even if the content is incorrect. After RAG-PT, the model significantly increases its attention to visual information and reduces the interference of RAG, thus better aligning the model's knowledge with the fundamental facts.

\hfill
\begin{minipage}[t]{0.3\textwidth}
\vspace{-8em}
    \centering
        \includegraphics[width=\textwidth{}]{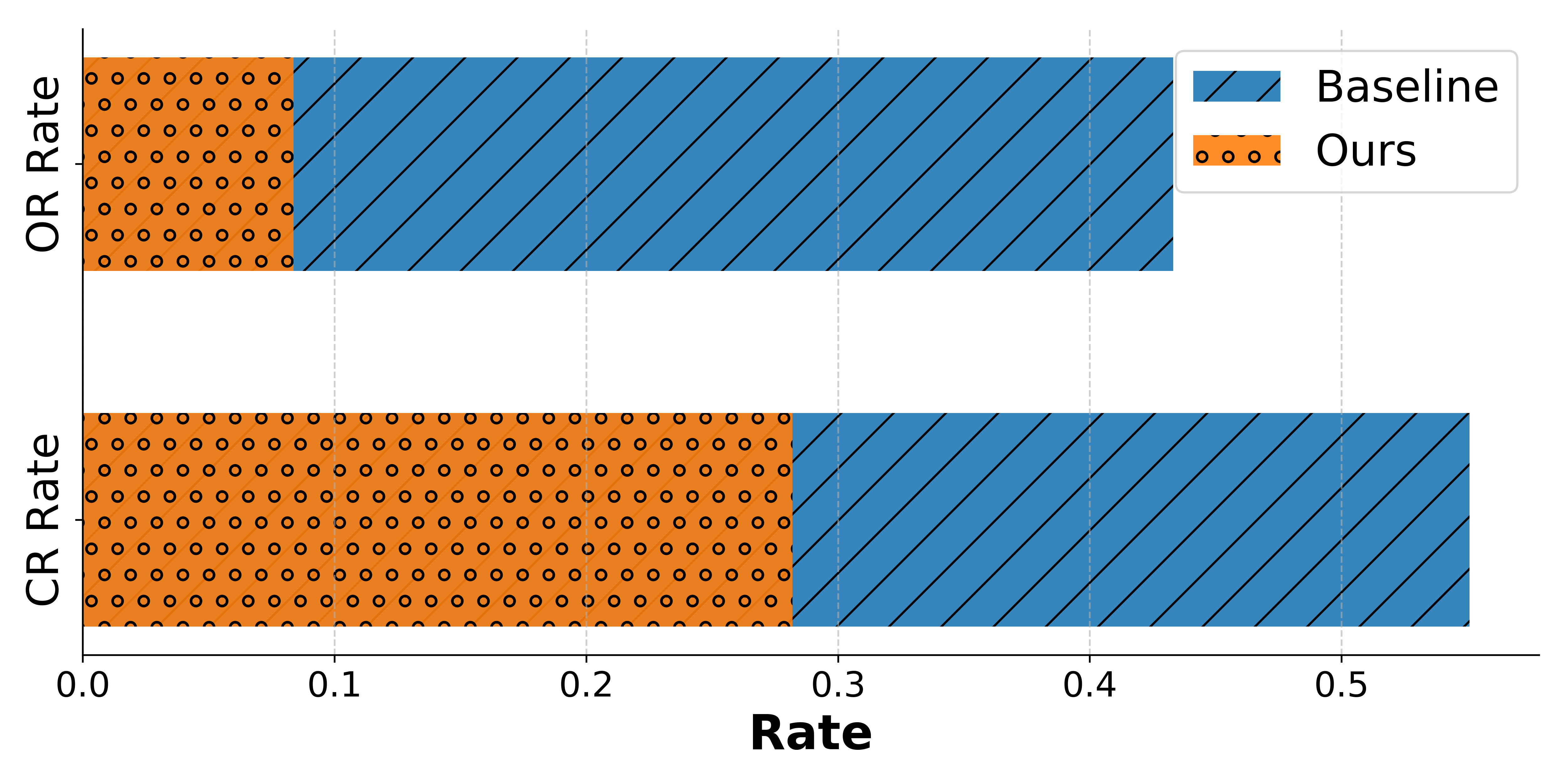}
        \captionof{figure}{Alignment analysis with and without RAG. OR: Over-Reliance; CR: Copy-Reference.}
    \label{fig:alignment}
\end{minipage}
\hfill
\begin{minipage}[t]{0.66\textwidth}
    \centering
       \includegraphics[width=\textwidth{}]{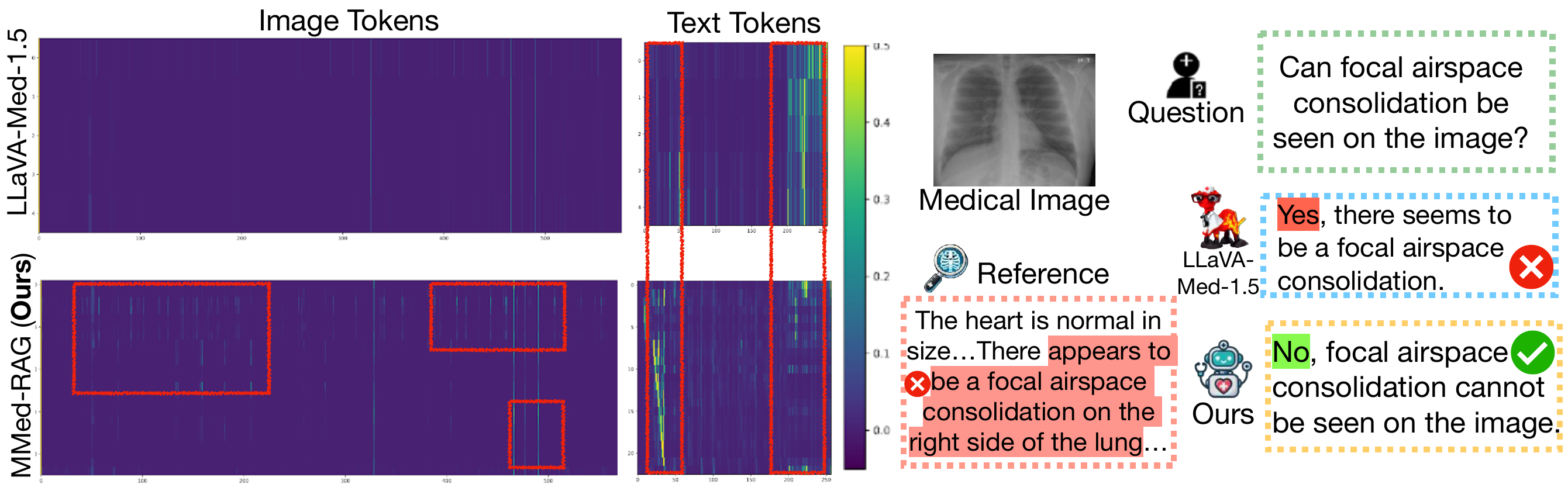}
        \captionof{figure}{Visualization of attention map. The red box region is labeled with the attentions that can be enhanced by \ours.}
    \label{fig:jm_acc}
\end{minipage}




\vspace{-0.5em}
\section{Related Work}
\textbf{Factuality in Med-LVLMs.} The rapid advancements in Large Vision-Language Models (LVLMs)~\citep{liu2023improved,liu2023visual} are beginning to influence the field of medical image analysis. Several Med-LVLMs~\citep{li2023llava,moor2023med,zhang2023pmc,wu2023towards,zhu2024mmedpo}, have emerged, showing remarkable performance across different medical imaging modalities. Despite these advances, Med-LVLMs continue to present notable factual hallucination~\citep{xia2024cares,royer2024multimedeval}, generating textual outputs that contradict medical visual information. This raises concerns about potential misdiagnoses or overlooked conditions. Recently, benchmarks have been developed to assess the accuracy of Med-LVLMs in tasks such as visual question answering (VQA) and report generation~\citep{xia2024cares,royer2024multimedeval}. However, research aimed at enhancing the factual accuracy of Med-LVLMs remains relatively unexplored.

\noindent
\textbf{Retrieval Augmented Generation in Med-LVLMs.}  Retrieval-Augmented Generation (RAG) has proven to be a powerful technique for enhancing factual accuracy in language modeling~\citep{gao2023retrieval,wu2023ragtruth,chen2024mllm,qu2024alleviating,sun2024surf}. In the biomedical domain, RAG leverages external knowledge to guide the generation of Med-LVLMs, offering clear advantages in tasks such as medical VQA and report generation~\citep{yuan2023ramm,kumar2024improving,tao2024memory,he2024meddr,sun2024fact}. However, these works mainly focus on enhancing the relevance of the retrieved contexts without considering the model's understanding of retrieved knowledge. There are several recent work on RAG fine-tuning in LLMs. DPA-RAG~\citep{dong2024understand} addresses the alignment issues between the external reranker and the internal LLM through supervised fine-tuning. Then RAG-DDR~\citep{li2024rag} leverages a rolling method to generate perturbed responses, further mitigating conflicts between parameter memory and external knowledge. In the biomedical domain, RULE~\citep{xia2024rule} is proposed to use preference fine-tuning to reduce the model's over-reliance on retrieved contexts. However, it still overlooks misalignment issues caused by RAG, as well as the generalizability of the retriever given the diverse domains of input images. In response, we propose \ours\ to mitigate these risks, enhancing the factuality of Med-LVLMs by addressing these overlooked factors. This can lead to a better cross-modality and overall alignment to enhance the understanding of retrieved knowledge and visual information, ensuring more consistent and reliable performance across tasks.

\vspace{-0.5em}
\section{Conclusion}
This paper introduces \ours, a versatile multimodal RAG system designed to address the critical issue of factual hallucination in Med-LVLMs. \ours\ employs a domain-aware retrieval mechanism, adaptive calibration for selecting the optimal number of retrieved contexts, and RAG-based preference fine-tuning to improve both cross-modality alignment and overall alignment with the ground truth. These enhancements significantly boost the factual accuracy of Med-LVLMs. Experimental results demonstrate the effectiveness of \ours\ in enhancing factual accuracy across various imaging domains, underscoring its potential for reliable use in healthcare. Our findings underscore the importance of incorporating robust multimodal RAG mechanism to ensure that Med-LVLMs can serve as dependable tools in clinical settings.
\section*{Ethics Statement}
This paper presents a novel RAG-based approach to enhancing the factuality of Med-LVLMs. 
We have followed best practices in data collection, model design, and evaluation, ensuring adherence to privacy and ethical standards. All datasets used are sourced from publicly available medical datasets or collected with appropriate ethical considerations, including patient data anonymization. 
We adhere to principles of research integrity and transparency, and comply with all relevant regulations. We hope that our research will contribute to safer, more reliable AI-assisted medical tools and advance healthcare technology responsibly.

\section*{Reproducibility Statement}
We have taken significant steps to ensure that our work is reproducible. All details regarding our proposed multimodal RAG system, including the domain-aware retrieval mechanism, adaptive retrieved context selection, and RAG-based preference fine-tuning strategy, are described comprehensively in Section~\ref{sec:3}. We provide the hyperparameter settings and experimental configurations used in our evaluations in Section~\ref{sec:exp} and Appendix~\ref{sec:setting}. Additionally, we have included detailed pseudocode for the proposed algorithms in Algorithm~\ref{ag:dpo} and an in-depth explanation of the data processing steps for each medical dataset used in Appendix~\ref{sec:detail} and Appendix~\ref{sec:app_data}. 

\section*{Acknowledgement}
This work is partially supported by Cisco Faculty Research Award.
\bibliography{main}

\begin{thebibliography}{78}
\providecommand{\natexlab}[1]{#1}
\providecommand{\url}[1]{\texttt{#1}}
\expandafter\ifx\csname urlstyle\endcsname\relax
  \providecommand{\doi}[1]{doi: #1}\else
  \providecommand{\doi}{doi: \begingroup \urlstyle{rm}\Url}\fi

\bibitem[Alkhaldi et~al.(2024)Alkhaldi, Alnajim, Alabdullatef, Alyahya, Chen, Zhu, Alsinan, and Elhoseiny]{alkhaldi2024minigpt}
Asma Alkhaldi, Raneem Alnajim, Layan Alabdullatef, Rawan Alyahya, Jun Chen, Deyao Zhu, Ahmed Alsinan, and Mohamed Elhoseiny.
\newblock Minigpt-med: Large language model as a general interface for radiology diagnosis.
\newblock \emph{arXiv preprint arXiv:2407.04106}, 2024.

\bibitem[Alsentzer et~al.(2019)Alsentzer, Murphy, Boag, Weng, Jin, Naumann, and McDermott]{alsentzer2019publicly}
Emily Alsentzer, John~R Murphy, Willie Boag, Wei-Hung Weng, Di~Jin, Tristan Naumann, and Matthew McDermott.
\newblock Publicly available clinical bert embeddings.
\newblock \emph{arXiv preprint arXiv:1904.03323}, 2019.

\bibitem[Bai et~al.(2023)Bai, Bai, Yang, Wang, Tan, Wang, Lin, Zhou, and Zhou]{bai2023qwenvl}
Jinze Bai, Shuai Bai, Shusheng Yang, Shijie Wang, Sinan Tan, Peng Wang, Junyang Lin, Chang Zhou, and Jingren Zhou.
\newblock Qwen-vl: A versatile vision-language model for understanding, localization, text reading, and beyond.
\newblock 2023.

\bibitem[Banerjee \& Lavie(2005)Banerjee and Lavie]{banerjee2005meteor}
Satanjeev Banerjee and Alon Lavie.
\newblock Meteor: An automatic metric for mt evaluation with improved correlation with human judgments.
\newblock In \emph{Proceedings of the acl workshop on intrinsic and extrinsic evaluation measures for machine translation and/or summarization}, pp.\  65--72, 2005.

\bibitem[Chen et~al.(2024{\natexlab{a}})Chen, Yang, Wu, Jiang, Hou, Li, Wang, Xiao, Li, and Zhang]{chen2024detecting}
Jiawei Chen, Dingkang Yang, Tong Wu, Yue Jiang, Xiaolu Hou, Mingcheng Li, Shunli Wang, Dongling Xiao, Ke~Li, and Lihua Zhang.
\newblock Detecting and evaluating medical hallucinations in large vision language models.
\newblock \emph{arXiv preprint arXiv:2406.10185}, 2024{\natexlab{a}}.

\bibitem[Chen et~al.(2024{\natexlab{b}})Chen, Ouyang, Gao, Chen, Chen, Wang, Zhang, Cai, Ji, Yu, et~al.]{chen2024huatuogpt}
Junying Chen, Ruyi Ouyang, Anningzhe Gao, Shunian Chen, Guiming~Hardy Chen, Xidong Wang, Ruifei Zhang, Zhenyang Cai, Ke~Ji, Guangjun Yu, et~al.
\newblock Huatuogpt-vision, towards injecting medical visual knowledge into multimodal llms at scale.
\newblock \emph{arXiv preprint arXiv:2406.19280}, 2024{\natexlab{b}}.

\bibitem[Chen et~al.(2024{\natexlab{c}})Chen, Xu, Qi, and Guo]{chen2024mllm}
Zhanpeng Chen, Chengjin Xu, Yiyan Qi, and Jian Guo.
\newblock Mllm is a strong reranker: Advancing multimodal retrieval-augmented generation via knowledge-enhanced reranking and noise-injected training.
\newblock \emph{arXiv preprint arXiv:2407.21439}, 2024{\natexlab{c}}.

\bibitem[Chen et~al.(2024{\natexlab{d}})Chen, Wang, Tian, Ye, Gao, Cui, Tong, Hu, Luo, Ma, et~al.]{chen2024far}
Zhe Chen, Weiyun Wang, Hao Tian, Shenglong Ye, Zhangwei Gao, Erfei Cui, Wenwen Tong, Kongzhi Hu, Jiapeng Luo, Zheng Ma, et~al.
\newblock How far are we to gpt-4v? closing the gap to commercial multimodal models with open-source suites.
\newblock \emph{arXiv preprint arXiv:2404.16821}, 2024{\natexlab{d}}.

\bibitem[Chen et~al.(2024{\natexlab{e}})Chen, Deng, Yuan, Ji, and Gu]{chen2024selfplayfinetuningconvertsweak}
Zixiang Chen, Yihe Deng, Huizhuo Yuan, Kaixuan Ji, and Quanquan Gu.
\newblock Self-play fine-tuning converts weak language models to strong language models, 2024{\natexlab{e}}.
\newblock URL \url{https://arxiv.org/abs/2401.01335}.

\bibitem[Christophe et~al.()Christophe, Kanithi, Raha, Khan, and Pimentel]{christophe2024med42}
Cl{\'e}ment Christophe, Praveen~K Kanithi, Tathagata Raha, Shadab Khan, and Marco~AF Pimentel.
\newblock Med42-v2: A suite of clinical llms.
\newblock \emph{arXiv preprint arXiv:2408.06142}.

\bibitem[Chuang et~al.(2023)Chuang, Xie, Luo, Kim, Glass, and He]{chuang2023dola}
Yung-Sung Chuang, Yujia Xie, Hongyin Luo, Yoon Kim, James Glass, and Pengcheng He.
\newblock Dola: Decoding by contrasting layers improves factuality in large language models.
\newblock \emph{arXiv preprint arXiv:2309.03883}, 2023.

\bibitem[Demner-Fushman et~al.(2016)Demner-Fushman, Kohli, Rosenman, Shooshan, Rodriguez, Antani, Thoma, and McDonald]{demner2016preparing}
Dina Demner-Fushman, Marc~D Kohli, Marc~B Rosenman, Sonya~E Shooshan, Laritza Rodriguez, Sameer Antani, George~R Thoma, and Clement~J McDonald.
\newblock Preparing a collection of radiology examinations for distribution and retrieval.
\newblock \emph{Journal of the American Medical Informatics Association}, 23\penalty0 (2):\penalty0 304--310, 2016.

\bibitem[Dong et~al.(2024)Dong, Zhu, Zhang, Wang, Dou, and Wen]{dong2024understand}
Guanting Dong, Yutao Zhu, Chenghao Zhang, Zechen Wang, Zhicheng Dou, and Ji-Rong Wen.
\newblock Understand what llm needs: Dual preference alignment for retrieval-augmented generation.
\newblock \emph{arXiv preprint arXiv:2406.18676}, 2024.

\bibitem[Gao et~al.(2023)Gao, Xiong, Gao, Jia, Pan, Bi, Dai, Sun, and Wang]{gao2023retrieval}
Yunfan Gao, Yun Xiong, Xinyu Gao, Kangxiang Jia, Jinliu Pan, Yuxi Bi, Yi~Dai, Jiawei Sun, and Haofen Wang.
\newblock Retrieval-augmented generation for large language models: A survey.
\newblock \emph{arXiv preprint arXiv:2312.10997}, 2023.

\bibitem[Gravel et~al.(2004)Gravel, Beaudoin, and De~Guise]{gravel2004method}
Pierre Gravel, Gilles Beaudoin, and Jacques~A De~Guise.
\newblock A method for modeling noise in medical images.
\newblock \emph{IEEE Transactions on medical imaging}, 23\penalty0 (10):\penalty0 1221--1232, 2004.

\bibitem[He et~al.(2016)He, Zhang, Ren, and Sun]{he2016deep}
Kaiming He, Xiangyu Zhang, Shaoqing Ren, and Jian Sun.
\newblock Deep residual learning for image recognition.
\newblock In \emph{Proceedings of the IEEE Conference on Computer Vision and Pattern Recognition}, pp.\  770--778, 2016.

\bibitem[He et~al.(2024)He, Nie, Chen, Cai, Wang, Yang, and Chen]{he2024meddr}
Sunan He, Yuxiang Nie, Zhixuan Chen, Zhiyuan Cai, Hongmei Wang, Shu Yang, and Hao Chen.
\newblock Meddr: Diagnosis-guided bootstrapping for large-scale medical vision-language learning.
\newblock \emph{arXiv preprint arXiv:2404.15127}, 2024.

\bibitem[Holland et~al.(2024)Holland, Taylor, Holmes, Riedl, Mai, Patsiamanidi, Mitsopoulou, Hager, M{\"u}ller, Scholl, et~al.]{holland2024specialist}
Robbie Holland, Thomas~RP Taylor, Christopher Holmes, Sophie Riedl, Julia Mai, Maria Patsiamanidi, Dimitra Mitsopoulou, Paul Hager, Philip M{\"u}ller, Hendrik~PN Scholl, et~al.
\newblock Specialist vision-language models for clinical ophthalmology.
\newblock \emph{arXiv preprint arXiv:2407.08410}, 2024.

\bibitem[Hu et~al.(2021)Hu, Shen, Wallis, Allen-Zhu, Li, Wang, Wang, and Chen]{hu2021lora}
Edward~J Hu, Yelong Shen, Phillip Wallis, Zeyuan Allen-Zhu, Yuanzhi Li, Shean Wang, Lu~Wang, and Weizhu Chen.
\newblock Lora: Low-rank adaptation of large language models.
\newblock \emph{arXiv preprint arXiv:2106.09685}, 2021.

\bibitem[Hu et~al.(2024)Hu, Xia, Wang, Yan, Tang, Xu, Luo, Song, Leitner, Cheng, et~al.]{hu2024ophnet}
Ming Hu, Peng Xia, Lin Wang, Siyuan Yan, Feilong Tang, Zhongxing Xu, Yimin Luo, Kaimin Song, Jurgen Leitner, Xuelian Cheng, et~al.
\newblock Ophnet: A large-scale video benchmark for ophthalmic surgical workflow understanding.
\newblock In \emph{European Conference on Computer Vision}, pp.\  481--500. Springer, 2024.

\bibitem[Huang et~al.(2023)Huang, Dong, Zhang, Wang, He, Wang, Lin, Zhang, and Yu]{huang2023opera}
Qidong Huang, Xiaoyi Dong, Pan Zhang, Bin Wang, Conghui He, Jiaqi Wang, Dahua Lin, Weiming Zhang, and Nenghai Yu.
\newblock Opera: Alleviating hallucination in multi-modal large language models via over-trust penalty and retrospection-allocation.
\newblock \emph{arXiv preprint arXiv:2311.17911}, 2023.

\bibitem[Ikezogwo et~al.(2024)Ikezogwo, Seyfioglu, Ghezloo, Geva, Sheikh~Mohammed, Anand, Krishna, and Shapiro]{ikezogwo2024quilt}
Wisdom Ikezogwo, Saygin Seyfioglu, Fatemeh Ghezloo, Dylan Geva, Fatwir Sheikh~Mohammed, Pavan~Kumar Anand, Ranjay Krishna, and Linda Shapiro.
\newblock Quilt-1m: One million image-text pairs for histopathology.
\newblock \emph{Advances in neural information processing systems}, 36, 2024.

\bibitem[Jiang et~al.(2024)Jiang, Chen, Yang, Li, Wang, Wu, Li, and Zhang]{jiang2024medthink}
Yue Jiang, Jiawei Chen, Dingkang Yang, Mingcheng Li, Shunli Wang, Tong Wu, Ke~Li, and Lihua Zhang.
\newblock Medthink: Inducing medical large-scale visual language models to hallucinate less by thinking more.
\newblock \emph{arXiv preprint arXiv:2406.11451}, 2024.

\bibitem[Jing et~al.(2017)Jing, Xie, and Xing]{jing2017automatic}
Baoyu Jing, Pengtao Xie, and Eric Xing.
\newblock On the automatic generation of medical imaging reports.
\newblock \emph{arXiv preprint arXiv:1711.08195}, 2017.

\bibitem[Johnson et~al.(2019)Johnson, Pollard, Greenbaum, Lungren, Deng, Peng, Lu, Mark, Berkowitz, and Horng]{johnson2019mimic}
Alistair~EW Johnson, Tom~J Pollard, Nathaniel~R Greenbaum, Matthew~P Lungren, Chih-ying Deng, Yifan Peng, Zhiyong Lu, Roger~G Mark, Seth~J Berkowitz, and Steven Horng.
\newblock Mimic-cxr-jpg, a large publicly available database of labeled chest radiographs.
\newblock \emph{arXiv preprint arXiv:1901.07042}, 2019.

\bibitem[Ju et~al.(2024)Ju, Zhou, Xia, Alexander, Keane, and Ge]{ju2024explore}
Lie Ju, Yukun Zhou, Peng Xia, Daniel Alexander, Pearse~Andrew Keane, and Zongyuan Ge.
\newblock Explore vision-language model with hierarchical information for multiple retinal disease recognition.
\newblock \emph{Investigative Ophthalmology \& Visual Science}, 65\penalty0 (7):\penalty0 1593--1593, 2024.

\bibitem[Kumar \& Marttinen(2024)Kumar and Marttinen]{kumar2024improving}
Yogesh Kumar and Pekka Marttinen.
\newblock Improving medical multi-modal contrastive learning with expert annotations.
\newblock \emph{arXiv preprint arXiv:2403.10153}, 2024.

\bibitem[Leng et~al.(2024)Leng, Zhang, Chen, Li, Lu, Miao, and Bing]{leng2023mitigating}
Sicong Leng, Hang Zhang, Guanzheng Chen, Xin Li, Shijian Lu, Chunyan Miao, and Lidong Bing.
\newblock Mitigating object hallucinations in large vision-language models through visual contrastive decoding.
\newblock In \emph{Proceedings of the IEEE/CVF Conference on Computer Vision and Pattern Recognition}, pp.\  13872--13882, 2024.

\bibitem[Li et~al.(2023{\natexlab{a}})Li, Wong, Zhang, Usuyama, Liu, Yang, Naumann, Poon, and Gao]{li2023llava}
Chunyuan Li, Cliff Wong, Sheng Zhang, Naoto Usuyama, Haotian Liu, Jianwei Yang, Tristan Naumann, Hoifung Poon, and Jianfeng Gao.
\newblock Llava-med: Training a large language-and-vision assistant for biomedicine in one day.
\newblock In \emph{Thirty-seventh Conference on Neural Information Processing Systems Datasets and Benchmarks Track}, 2023{\natexlab{a}}.

\bibitem[Li et~al.(2024{\natexlab{a}})Li, Liu, Wang, Luo, Jia, and Yao]{li2024lite}
Haoran Li, Junqi Liu, Zexian Wang, Shiyuan Luo, Xiaowei Jia, and Huaxiu Yao.
\newblock Lite: Modeling environmental ecosystems with multimodal large language models.
\newblock \emph{arXiv preprint arXiv:2404.01165}, 2024{\natexlab{a}}.

\bibitem[Li et~al.(2024{\natexlab{b}})Li, Mei, Liu, Yan, Wang, Yu, Zeng, Chen, Yu, Liu, et~al.]{li2024rag}
Xinze Li, Sen Mei, Zhenghao Liu, Yukun Yan, Shuo Wang, Shi Yu, Zheni Zeng, Hao Chen, Ge~Yu, Zhiyuan Liu, et~al.
\newblock Rag-ddr: Optimizing retrieval-augmented generation using differentiable data rewards.
\newblock \emph{arXiv preprint arXiv:2410.13509}, 2024{\natexlab{b}}.

\bibitem[Li et~al.(2023{\natexlab{b}})Li, Liu, Wang, Liang, Liu, Wang, Cui, Tu, Wang, and Zhou]{li2023comprehensive}
Yingshu Li, Yunyi Liu, Zhanyu Wang, Xinyu Liang, Lingqiao Liu, Lei Wang, Leyang Cui, Zhaopeng Tu, Longyue Wang, and Luping Zhou.
\newblock A comprehensive study of gpt-4v's multimodal capabilities in medical imaging.
\newblock \emph{arXiv preprint arXiv:2310.20381}, 2023{\natexlab{b}}.

\bibitem[Lin(2004)]{lin2004rouge}
Chin-Yew Lin.
\newblock Rouge: A package for automatic evaluation of summaries.
\newblock In \emph{Text summarization branches out}, pp.\  74--81, 2004.

\bibitem[Lin et~al.(2023{\natexlab{a}})Lin, Zhao, Zhang, Wu, Zhang, Wang, and Xie]{lin2023pmc}
Weixiong Lin, Ziheng Zhao, Xiaoman Zhang, Chaoyi Wu, Ya~Zhang, Yanfeng Wang, and Weidi Xie.
\newblock Pmc-clip: Contrastive language-image pre-training using biomedical documents.
\newblock In \emph{International Conference on Medical Image Computing and Computer-Assisted Intervention}, pp.\  525--536. Springer, 2023{\natexlab{a}}.

\bibitem[Lin et~al.(2023{\natexlab{b}})Lin, Zhang, Tao, Shi, Haffari, Wu, He, and Ge]{lin2023medical}
Zhihong Lin, Donghao Zhang, Qingyi Tao, Danli Shi, Gholamreza Haffari, Qi~Wu, Mingguang He, and Zongyuan Ge.
\newblock Medical visual question answering: A survey.
\newblock \emph{Artificial Intelligence in Medicine}, 143:\penalty0 102611, 2023{\natexlab{b}}.

\bibitem[Liu et~al.(2024{\natexlab{a}})Liu, Li, Li, and Lee]{liu2023improved}
Haotian Liu, Chunyuan Li, Yuheng Li, and Yong~Jae Lee.
\newblock Improved baselines with visual instruction tuning.
\newblock In \emph{Proceedings of the IEEE/CVF Conference on Computer Vision and Pattern Recognition}, pp.\  26296--26306, 2024{\natexlab{a}}.

\bibitem[Liu et~al.(2024{\natexlab{b}})Liu, Li, Wu, and Lee]{liu2023visual}
Haotian Liu, Chunyuan Li, Qingyang Wu, and Yong~Jae Lee.
\newblock Visual instruction tuning.
\newblock \emph{Advances in neural information processing systems}, 36, 2024{\natexlab{b}}.

\bibitem[Luo et~al.(2024)Luo, Shi, Khan, Afzal, Huang, Yuan, Tian, Song, Kouhana, Elze, et~al.]{luo2024fairclip}
Yan Luo, Min Shi, Muhammad~Osama Khan, Muhammad~Muneeb Afzal, Hao Huang, Shuaihang Yuan, Yu~Tian, Luo Song, Ava Kouhana, Tobias Elze, et~al.
\newblock Fairclip: Harnessing fairness in vision-language learning.
\newblock \emph{arXiv preprint arXiv:2403.19949}, 2024.

\bibitem[Meng et~al.(2024)Meng, Wang, Li, Lu, Tian, Liao, Zhu, Dai, Qiao, Luo, et~al.]{meng2024mmiu}
Fanqing Meng, Jin Wang, Chuanhao Li, Quanfeng Lu, Hao Tian, Jiaqi Liao, Xizhou Zhu, Jifeng Dai, Yu~Qiao, Ping Luo, et~al.
\newblock Mmiu: Multimodal multi-image understanding for evaluating large vision-language models.
\newblock \emph{arXiv preprint arXiv:2408.02718}, 2024.

\bibitem[Moor et~al.(2023)Moor, Huang, Wu, Yasunaga, Dalmia, Leskovec, Zakka, Reis, and Rajpurkar]{moor2023med}
Michael Moor, Qian Huang, Shirley Wu, Michihiro Yasunaga, Yash Dalmia, Jure Leskovec, Cyril Zakka, Eduardo~Pontes Reis, and Pranav Rajpurkar.
\newblock Med-flamingo: a multimodal medical few-shot learner.
\newblock In \emph{Machine Learning for Health (ML4H)}, pp.\  353--367. PMLR, 2023.

\bibitem[OpenAI(2023)]{openai2023gpt4}
OpenAI.
\newblock Gpt-4 technical report, 2023.
\newblock \url{https://arxiv.org/abs/2303.08774}.

\bibitem[Papineni et~al.(2002)Papineni, Roukos, Ward, and Zhu]{papineni2002bleu}
Kishore Papineni, Salim Roukos, Todd Ward, and Wei-Jing Zhu.
\newblock Bleu: a method for automatic evaluation of machine translation.
\newblock In \emph{Proceedings of the 40th annual meeting of the Association for Computational Linguistics}, pp.\  311--318, 2002.

\bibitem[Qu et~al.(2024)Qu, Chen, Wei, Sun, and Dong]{qu2024alleviating}
Xiaoye Qu, Qiyuan Chen, Wei Wei, Jishuo Sun, and Jianfeng Dong.
\newblock Alleviating hallucination in large vision-language models with active retrieval augmentation.
\newblock \emph{arXiv preprint arXiv:2408.00555}, 2024.

\bibitem[Radford et~al.(2021)Radford, Kim, Hallacy, Ramesh, Goh, Agarwal, Sastry, Askell, Mishkin, Clark, Krueger, and Sutskever]{radford2021learning}
Alec Radford, Jong~Wook Kim, Chris Hallacy, Aditya Ramesh, Gabriel Goh, Sandhini Agarwal, Girish Sastry, Amanda Askell, Pamela Mishkin, Jack Clark, Gretchen Krueger, and Ilya Sutskever.
\newblock Learning transferable visual models from natural language supervision, 2021.

\bibitem[Rafailov et~al.(2023)Rafailov, Sharma, Mitchell, Manning, Ermon, and Finn]{rafailov2023direct}
Rafael Rafailov, Archit Sharma, Eric Mitchell, Christopher~D Manning, Stefano Ermon, and Chelsea Finn.
\newblock Direct preference optimization: Your language model is secretly a reward model.
\newblock In \emph{Thirty-seventh Conference on Neural Information Processing Systems}, 2023.

\bibitem[Reid et~al.(2024)Reid, Savinov, Teplyashin, Lepikhin, Lillicrap, Alayrac, Soricut, Lazaridou, Firat, Schrittwieser, et~al.]{reid2024gemini}
Machel Reid, Nikolay Savinov, Denis Teplyashin, Dmitry Lepikhin, Timothy Lillicrap, Jean-baptiste Alayrac, Radu Soricut, Angeliki Lazaridou, Orhan Firat, Julian Schrittwieser, et~al.
\newblock Gemini 1.5: Unlocking multimodal understanding across millions of tokens of context.
\newblock \emph{arXiv preprint arXiv:2403.05530}, 2024.

\bibitem[Riquelme et~al.(2021)Riquelme, Puigcerver, Mustafa, Neumann, Jenatton, Susano~Pinto, Keysers, and Houlsby]{riquelme2021scaling}
Carlos Riquelme, Joan Puigcerver, Basil Mustafa, Maxim Neumann, Rodolphe Jenatton, Andr{\'e} Susano~Pinto, Daniel Keysers, and Neil Houlsby.
\newblock Scaling vision with sparse mixture of experts.
\newblock \emph{Advances in Neural Information Processing Systems}, 34:\penalty0 8583--8595, 2021.

\bibitem[Royer et~al.(2024)Royer, Menze, and Sekuboyina]{royer2024multimedeval}
Corentin Royer, Bjoern Menze, and Anjany Sekuboyina.
\newblock Multimedeval: A benchmark and a toolkit for evaluating medical vision-language models.
\newblock \emph{arXiv preprint arXiv:2402.09262}, 2024.

\bibitem[Sanchez et~al.(2012)Sanchez, S{\'a}nchez, Vidal, Verdu, Verd{\'u}, Mayo, and Rodenas]{sanchez2012medical}
Ma~Guadalupe Sanchez, Ma~Guadalupe S{\'a}nchez, Vicente Vidal, Gumersindo Verdu, Gumersindo Verd{\'u}, Patricia Mayo, and Francisco Rodenas.
\newblock Medical image restoration with different types of noise.
\newblock In \emph{2012 Annual International Conference of the IEEE Engineering in Medicine and Biology Society}, pp.\  4382--4385. IEEE, 2012.

\bibitem[Schrouff et~al.(2022)Schrouff, Harris, Koyejo, Alabdulmohsin, Schnider, Opsahl-Ong, Brown, Roy, Mincu, Chen, et~al.]{schrouff2022diagnosing}
Jessica Schrouff, Natalie Harris, Sanmi Koyejo, Ibrahim~M Alabdulmohsin, Eva Schnider, Krista Opsahl-Ong, Alexander Brown, Subhrajit Roy, Diana Mincu, Christina Chen, et~al.
\newblock Diagnosing failures of fairness transfer across distribution shift in real-world medical settings.
\newblock \emph{Advances in Neural Information Processing Systems}, 35:\penalty0 19304--19318, 2022.

\bibitem[Seyfioglu et~al.(2024)Seyfioglu, Ikezogwo, Ghezloo, Krishna, and Shapiro]{seyfioglu2024quilt}
Mehmet~Saygin Seyfioglu, Wisdom~O Ikezogwo, Fatemeh Ghezloo, Ranjay Krishna, and Linda Shapiro.
\newblock Quilt-llava: Visual instruction tuning by extracting localized narratives from open-source histopathology videos.
\newblock In \emph{Proceedings of the IEEE/CVF Conference on Computer Vision and Pattern Recognition}, pp.\  13183--13192, 2024.

\bibitem[Shazeer et~al.(2017)Shazeer, Mirhoseini, Maziarz, Davis, Le, Hinton, and Dean]{shazeer2017outrageously}
Noam Shazeer, Azalia Mirhoseini, Krzysztof Maziarz, Andy Davis, Quoc Le, Geoffrey Hinton, and Jeff Dean.
\newblock Outrageously large neural networks: The sparsely-gated mixture-of-experts layer.
\newblock \emph{arXiv preprint arXiv:1701.06538}, 2017.

\bibitem[Shi et~al.(2024)Shi, Rezai, Yang, Dou, and Li]{shi2024survey}
Congzhen Shi, Ryan Rezai, Jiaxi Yang, Qi~Dou, and Xiaoxiao Li.
\newblock A survey on trustworthiness in foundation models for medical image analysis.
\newblock \emph{arXiv preprint arXiv:2407.15851}, 2024.

\bibitem[Sun et~al.(2024{\natexlab{a}})Sun, Zhang, Zhou, Su, Qu, and Cheng]{sun2024surf}
Jiashuo Sun, Jihai Zhang, Yucheng Zhou, Zhaochen Su, Xiaoye Qu, and Yu~Cheng.
\newblock Surf: Teaching large vision-language models to selectively utilize retrieved information.
\newblock \emph{arXiv preprint arXiv:2409.14083}, 2024{\natexlab{a}}.

\bibitem[Sun et~al.(2024{\natexlab{b}})Sun, Zhao, Han, and Xiong]{sun2024fact}
Liwen Sun, James Zhao, Megan Han, and Chenyan Xiong.
\newblock Fact-aware multimodal retrieval augmentation for accurate medical radiology report generation.
\newblock \emph{arXiv preprint arXiv:2407.15268}, 2024{\natexlab{b}}.

\bibitem[Sutskever et~al.(2014)Sutskever, Vinyals, and Le]{sutskever2014sequence}
Ilya Sutskever, Oriol Vinyals, and Quoc~V Le.
\newblock Sequence to sequence learning with neural networks.
\newblock In \emph{Advances in neural information processing systems}, pp.\  3104--3112, 2014.

\bibitem[Tao et~al.(2024)Tao, Ma, Yu, and Zhang]{tao2024memory}
Yitian Tao, Liyan Ma, Jing Yu, and Han Zhang.
\newblock Memory-based cross-modal semantic alignment network for radiology report generation.
\newblock \emph{IEEE Journal of Biomedical and Health Informatics}, 2024.

\bibitem[T{\u{a}}u{\c{t}}an et~al.(2021)T{\u{a}}u{\c{t}}an, Ionescu, and Santarnecchi]{tuauctan2021artificial}
Alexandra-Maria T{\u{a}}u{\c{t}}an, Bogdan Ionescu, and Emiliano Santarnecchi.
\newblock Artificial intelligence in neurodegenerative diseases: A review of available tools with a focus on machine learning techniques.
\newblock \emph{Artificial Intelligence in Medicine}, 117:\penalty0 102081, 2021.

\bibitem[Thawkar et~al.(2023)Thawkar, Shaker, Mullappilly, Cholakkal, Anwer, Khan, Laaksonen, and Khan]{thawkar2023xraygpt}
Omkar Thawkar, Abdelrahman Shaker, Sahal~Shaji Mullappilly, Hisham Cholakkal, Rao~Muhammad Anwer, Salman Khan, Jorma Laaksonen, and Fahad~Shahbaz Khan.
\newblock Xraygpt: Chest radiographs summarization using medical vision-language models.
\newblock \emph{arXiv preprint arXiv:2306.07971}, 2023.

\bibitem[Tibshirani et~al.(2001)Tibshirani, Walther, and Hastie]{tibshirani2001estimating}
Robert Tibshirani, Guenther Walther, and Trevor Hastie.
\newblock Estimating the number of clusters in a data set via the gap statistic.
\newblock \emph{Journal of the Royal Statistical Society: Series B (Statistical Methodology)}, 63\penalty0 (2):\penalty0 411--423, 2001.

\bibitem[Touvron et~al.(2023)Touvron, Martin, Stone, Albert, Almahairi, Babaei, Bashlykov, Batra, Bhargava, and Bhosale]{touvron2023llama}
Hugo Touvron, Louis Martin, Kevin Stone, Peter Albert, Amjad Almahairi, Yasmine Babaei, Nikolay Bashlykov, Soumya Batra, Prajjwal Bhargava, and Shruti Bhosale.
\newblock Llama 2: Open foundation and fine-tuned chat models.
\newblock \emph{arXiv preprint arXiv:2307.09288}, 2023.

\bibitem[Tu et~al.(2024)Tu, Azizi, Driess, Schaekermann, Amin, Chang, Carroll, Lau, Tanno, Ktena, et~al.]{tu2024towards}
Tao Tu, Shekoofeh Azizi, Danny Driess, Mike Schaekermann, Mohamed Amin, Pi-Chuan Chang, Andrew Carroll, Charles Lau, Ryutaro Tanno, Ira Ktena, et~al.
\newblock Towards generalist biomedical ai.
\newblock \emph{NEJM AI}, 1\penalty0 (3):\penalty0 AIoa2300138, 2024.

\bibitem[Wang et~al.(2019)Wang, Zhu, Hong, and Zheng]{wang2019artificial}
Chunhao Wang, Xiaofeng Zhu, Julian~C Hong, and Dandan Zheng.
\newblock Artificial intelligence in radiotherapy treatment planning: present and future.
\newblock \emph{Technology in cancer research \& treatment}, 18:\penalty0 1533033819873922, 2019.

\bibitem[Wang et~al.(2024)Wang, Zhou, Liu, Lu, Xu, He, Yoon, Lu, Bertasius, Bansal, et~al.]{wang2024mementos}
Xiyao Wang, Yuhang Zhou, Xiaoyu Liu, Hongjin Lu, Yuancheng Xu, Feihong He, Jaehong Yoon, Taixi Lu, Gedas Bertasius, Mohit Bansal, et~al.
\newblock Mementos: A comprehensive benchmark for multimodal large language model reasoning over image sequences.
\newblock \emph{arXiv preprint arXiv:2401.10529}, 2024.

\bibitem[Wang et~al.(2025)Wang, Li, Xia, Jiang, Shen, Ma, Bai, Zhang, Lai, Li, et~al.]{wang2025screening}
Zhen Wang, Mingxiao Li, Peng Xia, Chao Jiang, Ting Shen, Jiaming Ma, Yu~Bai, Suhui Zhang, Yiwei Lai, Sitong Li, et~al.
\newblock Screening cognitive impairment in patients with atrial fibrillation: a deep learning model based on retinal fundus photographs.
\newblock \emph{Heart Rhythm O2}, 2025.

\bibitem[Wei et~al.(2024)Wei, Chen, and Meng]{wei2024instructrag}
Zhepei Wei, Wei-Lin Chen, and Yu~Meng.
\newblock Instructrag: Instructing retrieval-augmented generation with explicit denoising.
\newblock \emph{arXiv preprint arXiv:2406.13629}, 2024.

\bibitem[Wu et~al.(2023{\natexlab{a}})Wu, Lei, Zheng, Zhao, Lin, Zhang, Zhou, Zhao, Zhang, Wang, et~al.]{wu2023can}
Chaoyi Wu, Jiayu Lei, Qiaoyu Zheng, Weike Zhao, Weixiong Lin, Xiaoman Zhang, Xiao Zhou, Ziheng Zhao, Ya~Zhang, Yanfeng Wang, et~al.
\newblock Can gpt-4v (ision) serve medical applications? case studies on gpt-4v for multimodal medical diagnosis.
\newblock \emph{arXiv preprint arXiv:2310.09909}, 2023{\natexlab{a}}.

\bibitem[Wu et~al.(2023{\natexlab{b}})Wu, Zhang, Zhang, Wang, and Xie]{wu2023towards}
Chaoyi Wu, Xiaoman Zhang, Ya~Zhang, Yanfeng Wang, and Weidi Xie.
\newblock Towards generalist foundation model for radiology.
\newblock \emph{arXiv preprint arXiv:2308.02463}, 2023{\natexlab{b}}.

\bibitem[Wu et~al.(2023{\natexlab{c}})Wu, Zhu, Xu, Shum, Niu, Zhong, Song, and Zhang]{wu2023ragtruth}
Yuanhao Wu, Juno Zhu, Siliang Xu, Kashun Shum, Cheng Niu, Randy Zhong, Juntong Song, and Tong Zhang.
\newblock Ragtruth: A hallucination corpus for developing trustworthy retrieval-augmented language models.
\newblock \emph{arXiv preprint arXiv:2401.00396}, 2023{\natexlab{c}}.

\bibitem[Xia et~al.(2024{\natexlab{a}})Xia, Chen, Tian, Gong, Hou, Xu, Wu, Fan, Zhou, Zhu, et~al.]{xia2024cares}
Peng Xia, Ze~Chen, Juanxi Tian, Yangrui Gong, Ruibo Hou, Yue Xu, Zhenbang Wu, Zhiyuan Fan, Yiyang Zhou, Kangyu Zhu, et~al.
\newblock Cares: A comprehensive benchmark of trustworthiness in medical vision language models.
\newblock \emph{arXiv preprint arXiv:2406.06007}, 2024{\natexlab{a}}.

\bibitem[Xia et~al.(2024{\natexlab{b}})Xia, Hu, Tang, Li, Zheng, Ju, Duan, Yao, and Ge]{xia2024generalizing}
Peng Xia, Ming Hu, Feilong Tang, Wenxue Li, Wenhao Zheng, Lie Ju, Peibo Duan, Huaxiu Yao, and Zongyuan Ge.
\newblock Generalizing to unseen domains in diabetic retinopathy with disentangled representations.
\newblock In \emph{arXiv preprint arXiv:2406.06384}, 2024{\natexlab{b}}.

\bibitem[Xia et~al.(2024{\natexlab{c}})Xia, Zhu, Li, Zhu, Li, Li, Zhang, and Yao]{xia2024rule}
Peng Xia, Kangyu Zhu, Haoran Li, Hongtu Zhu, Yun Li, Gang Li, Linjun Zhang, and Huaxiu Yao.
\newblock Rule: Reliable multimodal rag for factuality in medical vision language models.
\newblock \emph{arXiv preprint arXiv:2407.05131}, 2024{\natexlab{c}}.

\bibitem[Ye et~al.(2021)Ye, Hsieh, Yang, Kang, Chen, Cao, He, and Hou]{ye2021unified}
Qing Ye, Chang-Yu Hsieh, Ziyi Yang, Yu~Kang, Jiming Chen, Dongsheng Cao, Shibo He, and Tingjun Hou.
\newblock A unified drug--target interaction prediction framework based on knowledge graph and recommendation system.
\newblock \emph{Nature communications}, 12\penalty0 (1):\penalty0 6775, 2021.

\bibitem[Yuan et~al.(2023)Yuan, Jin, Tan, Zhao, Yuan, Huang, and Huang]{yuan2023ramm}
Zheng Yuan, Qiao Jin, Chuanqi Tan, Zhengyun Zhao, Hongyi Yuan, Fei Huang, and Songfang Huang.
\newblock Ramm: Retrieval-augmented biomedical visual question answering with multi-modal pre-training.
\newblock In \emph{Proceedings of the 31st ACM International Conference on Multimedia}, pp.\  547--556, 2023.

\bibitem[Zhang et~al.(2024)Zhang, Qu, Zhu, and Cheng]{zhang2024clip}
Jihai Zhang, Xiaoye Qu, Tong Zhu, and Yu~Cheng.
\newblock Clip-moe: Towards building mixture of experts for clip with diversified multiplet upcycling.
\newblock \emph{arXiv preprint arXiv:2409.19291}, 2024.

\bibitem[Zhang et~al.(2023{\natexlab{a}})Zhang, Xu, Usuyama, Xu, Bagga, Tinn, Preston, Rao, Wei, Valluri, et~al.]{zhang2023biomedclip}
Sheng Zhang, Yanbo Xu, Naoto Usuyama, Hanwen Xu, Jaspreet Bagga, Robert Tinn, Sam Preston, Rajesh Rao, Mu~Wei, Naveen Valluri, et~al.
\newblock Biomedclip: a multimodal biomedical foundation model pretrained from fifteen million scientific image-text pairs.
\newblock \emph{arXiv preprint arXiv:2303.00915}, 2023{\natexlab{a}}.

\bibitem[Zhang et~al.(2023{\natexlab{b}})Zhang, Wu, Zhao, Lin, Zhang, Wang, and Xie]{zhang2023pmc}
Xiaoman Zhang, Chaoyi Wu, Ziheng Zhao, Weixiong Lin, Ya~Zhang, Yanfeng Wang, and Weidi Xie.
\newblock Pmc-vqa: Visual instruction tuning for medical visual question answering.
\newblock \emph{arXiv preprint arXiv:2305.10415}, 2023{\natexlab{b}}.

\bibitem[Zhu et~al.(2024)Zhu, Xia, Li, Zhu, Wang, and Yao]{zhu2024mmedpo}
Kangyu Zhu, Peng Xia, Yun Li, Hongtu Zhu, Sheng Wang, and Huaxiu Yao.
\newblock Mmedpo: Aligning medical vision-language models with clinical-aware multimodal preference optimization.
\newblock \emph{arXiv preprint arXiv:2412.06141}, 2024.

\end{thebibliography}
\bibliographystyle{iclr2025_conference}

\appendix
\section{Experiment}
\subsection{Experimental Setup}
\subsubsection{Data Statistics}
\label{sec:detail}
The data quantities used in this study are presented in Table~\ref{tab:data1}, Table~\ref{tab:data2} and Table~\ref{tab:data3}. We clarify that for training the retriever, the data refers to the number of image-text pairs, while for fine-tuning, it refers to the number of QA items. The “All” category represents the total amount of data used to construct the preference dataset for RAG-PT. The training of RAG-PT includes three types of samples: (a) clean samples with originally correct answers that remain correct even after adding noise to the images, (b) clean image samples with originally incorrect answers that become correct, and (c) clean image samples with originally correct answers that become incorrect.

\begin{table}[htbp]
    \centering
    \footnotesize
    \caption{Data statistics for medical VQA task. "Train (DR)" refers to the number of image-text pairs for retriever training, "All (RAG-PT)" refers to the total data for RAG-PT, and "Train (RAG-PT)-a/b/c" refer to the respective subsets for RAG-PT training.}
    \vspace{-1em}
    \resizebox{\linewidth}{!}{
    \begin{tabular}{l|cccccc}
    \toprule
    Dataset & Train (DR) & All (RAG-PT) & Train (RAG-PT)-a & Train (RAG-PT)-b & Train (RAG-PT)-c \\ \midrule
    Ophthalomology & 7000 & 3247 & 1082 & 1030 & 1135 \\
    Radiology & 4034 & 4836 & 1612 & 1989 & 1235 \\
    Pathology & 5000 & 1990 & 663 & 523 & 804 \\
    \bottomrule
    \end{tabular}
    }
    \label{tab:data1}
\end{table}

\begin{table}[htbp]
    \centering
    \footnotesize
    \caption{Data statistics for report generation. "Train (DR)" refers to the number of image-text pairs for retriever training, "All (RAG-PT)" refers to the total data for RAG-PT, and "Train (RAG-PT)-a/b/c" refer to the respective sample categories for RAG-PT training.}
    \vspace{-1em}
    \begin{tabular}{l|cccccc}
    \toprule
    Dataset & Train (R) & All (RAG-PT) & Train (RAG-PT)-a & Train (RAG-PT)-b & Train (RAG-PT)-c \\ \midrule
    Ophthalmology & 7000 & 3247 & 142 & 78 & 207 \\
    Radiology & 4034 & 4836 & 233 & 126 & 342 \\
    \bottomrule
    \end{tabular}
    \label{tab:data2}
\end{table}

\begin{table}[htbp]
    \centering
    \footnotesize
    \caption{Data statistics for various datasets. The rows represent the number of images and QA pairs for each dataset.}
    \vspace{-1em}
    \begin{tabular}{l|ccccc}
    \toprule
    & Harvard-FairVLMed & IU-Xray & MIMIC-CXR & PMC-OA & Quilt-1M \\ \midrule
    \# Images & 713 & 589 & 700 & 530 & 559 \\
    \# QA Items & 4285 & 2573 & 3470 & 3124 & 1994 \\
    \bottomrule
    \end{tabular}
    \label{tab:data3}
\end{table}

\subsubsection{Hyperparameter Settings}
\label{sec:setting}
Following the settings of CLIP~\citep{radford2021learning}, we adopt the same architecture and hyperparameters for the vision and text encoders. The vision encoder is a ResNet-50~\citep{he2016deep}, and the text encoder is a bio-bert-based model~\citep{alsentzer2019publicly}. We use the AdamW optimizer with a learning rate of $10^{-4}$ and a batch size of 512. The model is trained for 360 epochs. For the first phase, we trained for 3 epochs, and for the second phase, the training was conducted for 1 epoch. Training for 20 hours on one A100 80G GPU. For the RAG-PT phase, we adjust the diffusion noise level, symbolized by $\xi$ through a specific formula: $
\xi = \text{Sigmoid}(l_t) \times (0.5 \times 10^{-2} - 10^{-5}) + 10^{-5},$
where $\epsilon$ is drawn from a normal distribution. The reports available for retrieval are from the training set of the corresponding dataset. In our experiments, we apply cross-validation to tune all hyperparameters with grid search. All the experiments are implemented on PyTorch 2.1.2 using four NVIDIA RTX A6000 GPUs. It takes roughly 3 and 4 hours for fine-tuning CLIP and LLaVA-Med-1.5 7B, respectively.

\subsection{Evaluated Datasets}
\label{sec:app_data}
We utilize five open-source medical vision-language datasets, i.e., MIMIC-CXR~\citep{johnson2019mimic}, IU-Xray~\citep{demner2016preparing}, Harvard-FairVLMed~\citep{luo2024fairclip}, PMC-OA~\citep{lin2023pmc} and Quilt-1M~\citep{ikezogwo2024quilt}.
\begin{itemize}[leftmargin=*]
    \item \textbf{MIMIC-CXR}~\citep{johnson2019mimic} is a large publicly available dataset of chest X-ray images in DICOM format with associated radiology reports. 
    \item \textbf{IU-Xray}~\citep{demner2016preparing} is a dataset that includes chest X-ray images and corresponding diagnostic reports. 
    \item \textbf{Harvard-FairVLMed}~\citep{luo2024fairclip} focuses on fairness in multimodal fundus images, containing image and text data from various sources. It aims to evaluate bias in AI models on this multimodal data comprising different demographics. 
    \item \textbf{PMC-OA}~\citep{lin2023pmc} is a large-scale dataset comprising figure-caption pairs extracted from PubMed Central. It covers 2,478,267 papers and includes a total of 12,211,907 figure-caption pairs. We only use the pathology subset filtered by GPT-4 based on the captions.
    \item \textbf{Quilt-1M}~\citep{ikezogwo2024quilt} is the largest vision-language dataset in histopathology, containing 1 million image-text pairs sourced from platforms such as YouTube, Twitter, research papers, and other parts of the internet.
\end{itemize}

\subsection{Evaluated Models}
We evaluate five open-source Med-LVLMs, \textit{i.e.}, LLaVA-Med~\citep{li2023llava}, Med-Flamingo~\citep{moor2023med}, MedVInT~\citep{zhang2023pmc}, RadFM~\citep{wu2023towards}, miniGPT-Med~\citep{alkhaldi2024minigpt}. The selected models are all at the 7B level.
\begin{itemize}[leftmargin=*]
    \item \textbf{LLaVA-Med}~\citep{li2023llava} is a vision-language conversational assistant, adapting the general-domain LLaVA~\citep{liu2023visual} model for the biomedical field. The model is fine-tuned using a novel curriculum learning method, which includes two stages: aligning biomedical vocabulary with figure-caption pairs and mastering open-ended conversational semantics. It demonstrates excellent multimodal conversational capabilities.
    \item \textbf{Med-Flamingo}~\citep{moor2023med} is a multimodal few-shot learner designed for the medical domain. It builds upon the OpenFlamingo, continuing pre-training with medical image-text data from publications and textbooks. This model aims to facilitate few-shot generative medical visual question answering, enhancing clinical applications by generating relevant responses and rationales from minimal data inputs.
    \item \textbf{RadFM}~\citep{wu2023towards} serve as a versatile generalist model in radiology, distinguished by its capability to adeptly process both 2D and 3D medical scans for a wide array of clinical tasks. It integrates ViT as visual encoder and a perceiver module, alongside the MedLLaMA language model, to generate sophisticated medical insights for a variety of tasks. This design allows RadFM to not just recognize images but also to understand and generate human-like explanations.
    \item \textbf{MedVInT}~\citep{zhang2023pmc}, which stands for Medical Visual Instruction Tuning, is designed to interpret medical images by answering clinically relevant questions. This model features two variants to align visual and language understanding: MedVInT-TE and MedVInT-TD. Both MedVInT variants connect a pre-trained vision encoder ResNet-50 adopted from PMC-CLIP~\citep{lin2023pmc}, which processes visual information from images. It is an advanced model that leverages a novel approach to align visual and language understanding.  
    \item \textbf{miniGPT-Med}~\citep{alkhaldi2024minigpt} is a vision-language model derived from large-scale language models and tailored for radiology diagnosis applications. It handles various medical vision-language task using distinct task identifiers, demonstrating advanced performance in disease grounding, medical report generation, and medical VQA.
\end{itemize}

\subsection{Overview of the Baselines}
We compare \ours\ with two types of LVLM hallucination mitigation methods that show promising results in natural image understanding. 1) Decoding-based methods, including Greedy Decoding, Beam Search~\citep{sutskever2014sequence},  DoLa~\citep{chuang2023dola}, OPERA~\citep{huang2023opera}, VCD~\citep{leng2023mitigating}. These methods manipulate the logits of the model's output tokens to enhance factual accuracy. 2) Multimodal RAG-based methods, including MedDr~\citep{he2024meddr}, FactMM-RAG~\citep{sun2024fact}, RULE~\citep{xia2024rule}.
\begin{itemize}[leftmargin=*]
    \item \textbf{Greedy decoding} involves selecting the most probable next token at each step of generation. While it is efficient and straightforward, it can lead to suboptimal outcomes by getting stuck in repetitive or less creative patterns.
    \item \textbf{Beam search}~\citep{sutskever2014sequence} expands on greedy decoding by maintaining multiple candidate sequences (or "beams") at each step, allowing for a broader exploration of possible outputs. This approach balances quality and diversity by selecting the top-k sequences based on their probabilities, resulting in more coherent and creative text generation compared to greedy decoding.
    \item \textbf{DoLa}~\citep{chuang2023dola} derives the next-token distribution by contrasting the logits projected from later layers against those from earlier layers, leveraging the fact that factual knowledge in LLMs is typically localized within specific transformer layers.
    \item \textbf{OPERA}~\citep{huang2023opera} is a LVLMs decoding method based on an Over-trust Penalty and a Retrospection-Allocation strategy The key insight is that hallucinations are closely tied to knowledge aggregation patterns in the self-attention matrix, where MLLMs tend to focus on summary tokens, neglecting image tokens and resulting in content hallucination. 
    \item \textbf{VCD}~\citep{leng2023mitigating} is a decoding method that tackles the object hallucination issue in LVLMs. It contrasts output distributions derived from original and distorted visual inputs to calibrate the model’s output without the usage of external tools, reducing the the over-reliance on statistical bias and unimodal priors.
    \item \textbf{MedDr}~\citep{he2024meddr} is a healthcare foundation model built upon generated diagnosis-based datasets, demonstrating advanced capabilities in various data modalities. Meddr also integrates a retrieval-augmented medical diagnosis strategy during inferencing to enhance factual accuracy.
    \item \textbf{FactMM-RAG}~\citep{sun2024fact} is a fact-aware multimodal retrieval-augmented pipeline for radiology report generation. It utilize RadGraph to annotate chest radiograph reports and mine clinically relevant pairs to train a universal multimodal retriever.\
    \item \textbf{RULE}~\citep{xia2024rule} is an advanced medical retrieval-augmented generation strategy designed to enhance the factuality of Med-LVLMs. First, it introduces a robust strategy for controlling factuality risk through the calibrated selection of retrieved contexts. Second, RULE develops a preference optimization strategy to balance Med-LVLMs’ intrinsic knowledge and the retrieved information.
\end{itemize}

\subsection{Prompts}
We convert the medical reports into a series of closed-ended questions with \textit{yes} or \textit{no} answers. To ensure the quality of the VQA data, we perform a round of self-checks using GPT-4~\citep{openai2023gpt4}. Finally, we conduct an round of manual filtering to remove questions with obvious issues or those related to multiple images or patient histories. The prompt templates used are shown in Table~\ref{tab:prompt}.
\begin{table}[t]
    \centering
    \footnotesize
    \setlength{\arrayrulewidth}{0.5mm}
    \definecolor{mygray}{gray}{0.93}
    \begin{tabular}{|>{\columncolor{mygray}}p{12.5cm}|}
    \hline
    \textbf{Instruction [Round1]} \\
    You are a professional medical expert. I will provide you with some medical reports. Please generate some questions with answers (the answer should be yes or no) based on the provided report. The subject of the questions should be the medical image or patient, not the report. \\ Below are the given report: \\ {[REPORT]} \\
    \textbf{Instruction [Round2]} \\
    Please double-check the questions and answers, including how the questions are asked and whether the answers are correct. You should only generate the questions with answers and no other unnecessary information. \\
    Below are the given report and QA pairs in round1: \\
    {[REPORT]} \\  {[QA PAIRS R1]}\\ \hline
    \end{tabular}
    \caption{The instruction to GPT-4 for generating QA pairs.}
    \label{tab:prompt}
\end{table}

\subsection{Additional Results}
\label{sec:result}

\subsubsection{Compatibility Analysis} 
\label{sec:llava-1}
To demonstrate the compatibility of our approach across different backbone models, we apply it to LLaVA-Med-1.0. As shown in Table~\ref{tab:backbone}, our method delivers an average improvement of 40.3\% over the original LLaVA-Med-1.0, further highlighting its effectiveness in enhancing RAG performance and its adaptability to various backbones. \ours\ can be transferred to different Med-LVLMs, yielding consistent improvements across various domains, demonstrating the compatibility of our method.
\begin{table}[htbp]
\begin{center}
\footnotesize
\vspace{-1em}
\caption{Performance on different backbones. }
\vspace{-0.5em}
\label{tab:backbone}
\begin{tabular}{l|ccccc}
\toprule
Model & \multicolumn{2}{c}{IU-Xray} & \multicolumn{2}{c}{FairVLMed} \\
& VQA & RG & VQA & RG \\
\midrule
LLaVA-Med-1.0 & 61.73 & 8.74 & 59.54 & 10.59 \\
+\ours\ & \textbf{80.32} & \textbf{22.63} & \textbf{78.49} & \textbf{15.88}  \\
\bottomrule
\end{tabular}
\end{center}
\vspace{-1em}
\end{table}

\subsubsection{Detailed Results of Other LVLMs} As shown in Table~\ref{tab:app_vqa}, we conduct a comparison of several general LVLMs and other Med-LVLMs, including GPT-4o~\citep{openai2023gpt4}, Gemini-1.5~\citep{reid2024gemini}, QwenVL~\citep{bai2023qwenvl}, LLaVA-1.6~\citep{liu2023visual}, and InternVL-2~\citep{chen2024far}. Our findings show that \ours\ consistently outperforms these models, further demonstrating its effectiveness. 

\begin{table}[htbp]
    \centering
    \footnotesize
    \caption{Accuracy (\%) of different Med-LVLMs based on LLaVA-Med-1.5 on medical VQA task. }
    \resizebox{\linewidth}{!}{
    \begin{tabular}{l|cc|c|cc}
    \toprule
        \multirow{2}{*}{Models} & \multicolumn{2}{c|}{\textbf{Radiology}} &\textbf{Ophthalmology} & \multicolumn{2}{c}{\textbf{Pathology}} \\  
        & IU-Xray & MIMIC-CXR & Harvard-FairVLMed & Quilt-1M & PMC-OA (Pathology) \\ \midrule
        LLaVA-Med-1.5 & 75.47 & 75.79 & 63.03& 62.80& 59.28  \\ 
        \ours\ & 
        \textbf{89.54}& \textbf{83.57} & 
        \textbf{87.94} &
        \textbf{72.95} & \textbf{64.54} \\
        \midrule
        Med-Flamingo & 26.74 & 61.27 & 42.06& 27.11& 32.62 \\ 
        MedVInT & 73.34 & 66.06& 35.92 & 26.81 & 27.77 \\
        RadFM & 26.67 &  69.30 &  52.47 & 27.02 & 25.12 \\
        miniGPT-Med & 54.87 & 53.92 & 66.73 & 26.82 & 27.03 \\
        \midrule
        GPT-4o & 63.25 & 60.61 & 61.50 & 53.56 & 49.70 \\
        Gemini-1.5 & 59.73 & 61.02 & 58.53 & 56.88 & 52.17 \\
        LLaVA-v1.6	& 58.05 & 63.70 & 48.52 & 35.73 & 38.54 \\
        Qwen-VL-Chat & 59.43 & 60.43 & 38.06 & 28.74 & 29.53 \\
        InternVL-2 & 54.06 & 59.47 & 44.38 & 37.82 & 34.40 \\
    \bottomrule
    \end{tabular}
    }
    \label{tab:app_vqa}
\end{table}

\subsubsection{Comparison with Domain-Specific Med-LVLMs and them with RAG-PT}
We conduct experiments to compare our method with domain-specific Med-LVLMs as follows: Radiology: RadFM~\citep{wu2023towards}, Pathology: Quilt-LLaVA~\citep{seyfioglu2024quilt}, Ophthalmology: RetinaVLM~\citep{holland2024specialist}. For radiology, we use the IU-Xray dataset to evaluate VQA. For pathology, we use the PMC-OA pathology subset to evaluate VQA. For ophthalmology, since the domain-specific Med-LVLM, i.e., RetinaVLM, is only trained on report-generation tasks, we use the Harvard-FairVLMed dataset to evaluate report generation. As shown in Table~\ref{tab:domain_comparison}, our method significantly outperforms each domain-specific Med-LVLM. Additionally, we apply RAG-PT to each domain-specific Med-LVLM. As shown in Table~\ref{tab:domain_comparison}, after incorporating RAG-PT, the performance of these models improve significantly, demonstrating the compatibility of our method. Furthermore, domain-specific Med-LVLMs could outperform generalist Med-LVLMs in their specialized domains, as they are fine-tuned using specialized medical domain data. While this significantly enhances their medical understanding in specific domains, it may reduce their generalization ability, such as their capacity to comprehend retrieved information. Consequently, even after incorporating RAG-PT, the performance of several domain-specific Med-LVLMs (e.g., RetinaVLM and RadFM) is inferior to MMed-RAG.

\begin{table}[htbp]
\centering
\footnotesize
\caption{Model performance comparison with domain-specific Med-LVLMs.}
\label{tab:domain_comparison}
\resizebox{0.9\linewidth}{!}{
\begin{tabular}{@{}lccccccccc@{}}
\toprule
\multirow{2}{*}{Model} &
  \multicolumn{3}{c}{Radiology} &
  \multicolumn{3}{c}{Pathology} &
  \multicolumn{3}{c}{Ophthalmology} \\ \cmidrule(lr){2-4} \cmidrule(lr){5-7} \cmidrule(lr){8-10}
 &
  Acc &
  F1 &
  AUC &
  Acc &
  F1 &
  AUC &
  BLEU &
  ROUGE-L &
  METEOR \\ \midrule
RadFM        & 26.67 & 30.36 & 55.31 & -     & -     & -     & -     & -     & -     \\
+ RAG-PT      & 48.39 & 39.40 & 59.70 & -     & -     & -     & -     & -     & -     \\ \midrule
Quilt-LLaVA   & -     & -     & -     & 62.59 & 72.30 & 56.96 & -     & -     & -     \\
+ RAG-PT      & -     & -     & -     & \textbf{64.72} & \textbf{73.36} & \underline{61.39} & -     & -     & -     \\ \midrule
RetinaVLM     & -     & -     & -     & -     & -     & -     & 19.96 & 12.73 & 13.52 \\
+ RAG-PT      & -     & -     & -     & -     & -     & -     & \underline{22.26} & \underline{14.64} & \underline{16.87} \\ \midrule
LLaVA-Med-1.5 & \underline{75.47} & 64.04 & 67.46 & 59.28 & 71.98 & 54.19 & 18.11 & 11.36 & 10.75 \\
MMed-RAG      & \textbf{84.10} & \textbf{71.92} & 86.40 & \underline{64.54} & \underline{73.09} & \textbf{61.42} & \textbf{24.82} & \textbf{16.59} & \textbf{19.85} \\ \bottomrule
\end{tabular}
}
\end{table}

\subsubsection{Results on Other Domain}
\label{sec:ood}
We apply RAG-PT to one additional domain (i.e., environmental ecosystems modeling) to further validate the effectiveness of RAG-PT. We conduct experiments on two environmental system modeling datasets~\citep{li2024lite}. The CRW-Temp dataset is a river water temperature prediction dataset aimed at forecasting the daily average water temperature of a specific day based on observed physical variables. The CRW-Flow dataset focuses on predicting river segment flow based on observed physical variables. The model used is LITE~\citep{li2024lite}, an environmental system large model based on LLaMA2~\citep{touvron2023llama}. We train a semantic time-series encoder using time-series information-text pairs, which works in conjunction with a text encoder as the retriever. Then we retrieve the most similar environmental descriptions based on the current environmental descriptions. As shown in Table~\ref{tab:ood}, our approach demonstrates significant performance improvements on tasks in this domain.

\begin{table}[htbp]
\centering
\caption{Performance comparison of different models on CRW-Temp and CRW-Flow datasets.}
\footnotesize
\label{tab:ood}
\begin{tabular}{@{}lcccc@{}}
\toprule
\multirow{2}{*}{Model} & \multicolumn{2}{c}{CRW-Temp} & \multicolumn{2}{c}{CRW-Flow} \\ \cmidrule(lr){2-3} \cmidrule(lr){4-5}
                       & RMSE        & MAE        & RMSE        & MAE        \\ \midrule
LITE~\citep{li2024lite}                   & 2.02        & 1.70       & 2.39        & 1.02       \\
+RAG                   & 1.93        & 1.62       & 2.27        & 0.96       \\
+RAG-PT                  & \textbf{1.74}        & \textbf{1.46}       & \textbf{2.11}        & \textbf{0.90}       \\ \bottomrule
\end{tabular}\end{table}

\subsubsection{Statistics of Copy-Reference Rate and Over-Reliance Rate for more LVLMs.}
Following the alignment analysis method we apply to LLaVA-Med-1.5 in Section~\ref{sec:rag-pt}, we conduct two alignment analysis tests on multiple open-source Med-LVLMs and commercial LVLMs using the Harvard-FairVLMed dataset with the incorporation of retrieved information. These tests respectively evaluate (1) cross-modality alignment and (2) overall alignment with the ground truth. As shown in Table~\ref{tab:copy_over_reliance}, the results indicate that both existing open-source Med-LVLMs and commercial LVLMs exhibit misalignment issues with retrieved information. In addition, it is worthwhile to mention that GPT-4o demonstrates the best alignment performance compared with other models when incorporating RAG, especially in cross-modal alignment. This is likely because GPT-4o has been well-trained in visual perception and may also have utilized some post-training methods (like preference optimization) to optimize modal alignment. 
\begin{table}[htbp]
\centering
\caption{Comparison of Copy-Reference Rate and Over-Reliance Rate across different models.}
\label{tab:copy_over_reliance}
\footnotesize
\begin{tabular}{@{}lcc@{}}
\toprule
Model            & Copy-Reference Rate & Over-Reliance Rate \\ \midrule
LLaVA-Med-1.5    & 55.08               & 43.31              \\
Med-Flamingo     & 60.17               & 33.74              \\
miniGPT-Med      & 56.75               & 46.06              \\
GPT-4o           & 12.54               & 24.80              \\ \bottomrule
\end{tabular}
\end{table}

\subsubsection{Detailed Ablation Analysis}
\label{sec:abl_more}
Preference data designed for different alignment objectives can indeed produce varying effects. Therefore, conducting ablation experiments on combinations of different types of preference data is necessary. We perform comprehensive ablation experiments on RAG-PT 1/2/3 as well as their combinations (RAG-PT 1+2, 2+3, 1+3) to analyze the effectiveness of each type of data and their combinations. We find that the combination of 1+3 produced the most significant results, indicating that the two misalignment issues (i.e., cross-modality and over-reliance issues) are the most prominent. Targeted mitigation of these two issues yielded the greatest improvement. However, incorporating data for all three alignment objectives yields the best performance, demonstrating the importance of each alignment component.

\begin{table}[htbp]
\begin{center}
\footnotesize
\vspace{-1em}
\caption{Ablation results using RAG-PT based on subsets of preference data.}
\label{tab:pre-abl}
\begin{tabular}{l|ccccc}
\toprule
Model & \multicolumn{2}{c}{IU-Xray} & \multicolumn{2}{c}{Harvard-FairVLMed} \\
& VQA & RG & VQA & RG \\
\midrule
LLaVA-Med-1.5 & 68.99 & 10.04 & 66.63 & 13.41 \\
+RAG-PT 1 & 80.19 & 19.38 & 79.42 & 18.37 \\
+RAG-PT 2 & 80.27 & 20.16 & 79.35 & 18.66 \\
+RAG-PT 3 & 81.30 & 19.43 & 80.07 & 18.92 \\ \midrule
+RAG-PT 1+2 & 82.58 & 22.74 & 82.08 & 18.97 \\
+RAG-PT 1+3 & 82.96 & 24.50 & 82.87 & 19.22 \\
+RAG-PT 2+3 & \underline{83.61} & \underline{25.77} & \underline{83.89} & \underline{19.30} \\ \midrule
+RAG-PT 1+2+3 & \textbf{85.58} & \textbf{29.69} & \textbf{87.02} & \textbf{20.31} \\
\bottomrule
\end{tabular}
\end{center}
\vspace{-1em}
\end{table}

\subsubsection{External Validation}
\label{sec:ext}
Considering the risk of overfitting, we use external validation datasets from the same domain to evaluate the generalizability of \ours. We select two domain-specific subsets from PubMedVision~\citep{chen2024huatuogpt}, i.e., fundus digital photography and microscopy image, for ophthalmology and pathology, respectively. The results show that \ours\ still significantly outperforms other Med-LVLMs on the external validation datasets, indicating \ours\ performs well when generalized to external datasets, demonstrating its strong generalization capability.

\begin{table}[htbp]
\centering
\footnotesize
\caption{Performance comparison of models on external validation datasets.}
\label{tab:oph_path_performance}
\begin{tabular}{@{}lccccccc@{}}
\toprule
\multirow{2}{*}{Model} & \multicolumn{3}{c}{Ophthalmology} & \multicolumn{3}{c}{Pathology} \\ \cmidrule(lr){2-4} \cmidrule(lr){5-7}
                       & BLEU  & ROUGE-L & METEOR & Acc   & F1    & AUC   \\ \midrule
LLAVA-Med-1.5          & 17.11 & 20.05   & 17.09  & 59.65 & 71.90 & 54.87 \\
\ours\                & 22.64 & 14.98   & 17.85  & 62.88 & 72.24 & 59.69 \\ \bottomrule
\end{tabular}
\end{table}

\subsubsection{Detailed BLEU Score}
\label{sec:bleu}
We report the average BLEU score above. Detailed results are provided in Table~\ref{tab:bleu}.

\begin{table}[htbp]
    \centering
    \footnotesize
    \caption{BLEU Score (\%) of different methods based on LLaVA-Med-1.5 on report generation task.}
    \vspace{-1em}
    \resizebox{\linewidth}{!}{
    \begin{tabular}{l|cccccccc|cccc}
    \toprule
        \multirow{2}{*}{Models} & \multicolumn{8}{c|}{\textbf{Radiology}} & \multicolumn{4}{c}{\textbf{Ophthalmology}} \\ 
        \cmidrule(r){2-9} \cmidrule(r){10-13}
        & \multicolumn{4}{c}{IU-Xray} & \multicolumn{4}{c}{MIMIC-CXR} & \multicolumn{4}{c}{Harvard-FairVLMed}  \\ \cmidrule(r){2-5} \cmidrule(r){6-9} \cmidrule(r){10-13}
        & BLEU-1 & BLEU-2 & BLEU-3 & BLEU-4 & BLEU-1 & BLEU-2 & BLEU-3 & BLEU-4 & BLEU-1 & BLEU-2 & BLEU-3 & BLEU-4 \\
        \midrule
        LLaVA-Med-1.5 & 17.69 &	10.55 &	6.47 & 3.83 & 21.82 & 13.35 & 6.11 & 3.64 & 32.57 & 19.86 & 9.11 & 5.38 \\
        \midrule
        + Greedy & 21.04 & 12.57 & 5.75 & 3.35 & 29.91 & 18.26 & 8.27 & 5.03 & 32.40 & 19.82 & 9.04 & 5.37 \\
        + Beam Search & 21.78 & 12.71 & 6.05 & 3.63 & 30.55 & 17.79 & 8.49 & 5.09 & 33.07 & 19.14 & 9.14 & 5.48  \\
        + DoLa & 21.22 & 12.39 & 5.90 & 3.54 & 30.80 & 17.97 & 8.58 & 5.15 & 32.87 & 19.02 & 9.08 & 5.45 \\
        + OPERA & 19.79 & 11.19 & 5.33 & 3.20 & 27.72 & 16.05 & 7.65 & 4.59 & 29.90 & 17.45 & 8.32 & 4.99  \\ 
        + VCD & 19.35 & 10.94 & 5.21 & 3.13 & 27.27 & 15.76 & 7.51 & 4.51 & 30.14 & 17.61 & 8.39 & 5.04 \\ \midrule
        + MedDr & 22.27 & 12.99 & 6.19 & 3.71 & 33.43 & 19.33 & 9.22 & 5.53 & 35.64 & 20.61 & 9.82 & 5.89  \\
        + FactMM-RAG & 26.45 & 15.25 & 7.26 & 4.36 & 33.64 & 19.44 & 9.27 & 5.56 & 37.47 & 21.64 & 10.30 & 6.18 \\
        + RULE & 49.56 & 28.61 & 13.62 & 8.17 & 33.47 & 19.36 & 9.23 & 5.54 & 40.21 & 23.26 & 11.08 & 6.66 \\ 
        \midrule
        \ours\  & 56.48 & 32.67 & 15.56 & 9.34 & 41.81 & 24.18 & 11.52 & 6.92 & 44.65 & 25.79 & 12.29 & 7.38 \\
    \bottomrule
    \end{tabular}
    }
    \label{tab:bleu}
\end{table}

\subsubsection{Deeper Analysis of Retriever}
We have tried training a general retriever by mixing images from all modalities together, instead of using a domain-specific retriever. We conduct experiments based on BiomedCLIP and MedCLIP, but the results are unsatisfactory. Then we adopt an MoE (Mixture of Experts) architecture~\citep{shazeer2017outrageously,riquelme2021scaling}. Based on CLIP-MoE, we fine-tune CLIP-MoE~\citep{zhang2024clip} with mixing images from all medical imaging modalities, but the performance is still suboptimal. This might be because CLIP-MoE is not pretrained on large-scale biomedical data. All the results are reported in Table~\ref{tab:ret}. Considering model performance, we ultimately adopt a domain-specific retriever architecture. In fact, this approach is both flexible and scalable. Similar to a general retriever, encountering a completely new modality may still require retraining the retriever to achieve good retrieval performance, which incurs additional costs. For mixed datasets, as the number of modalities increases, training a general retriever becomes increasingly challenging, making it difficult to achieve reliable retrieval within a single domain. We address this by using a domain identification module to classify the input image by modality and select the corresponding retriever. In the future, a potential solution could involve pretraining a general retriever on large-scale biomedical data using a Mixture of Experts (MoE) architecture to explore whether it is possible to develop a general retriever. 

\begin{table}[htbp]
\centering
\caption{Performance comparison based on different retrievers.}
\label{tab:ret}
\footnotesize
\begin{tabular}{@{}lccc@{}}
\toprule
\multirow{2}{*}{Model}       & \multicolumn{3}{c}{IU-Xray} \\ \cmidrule(lr){2-4} 
                             & Acc   & F1    & AUC   \\ \midrule
LLaVA-Med                   & 75.47 & 64.04 & 67.46 \\
+ RAG (BiomedCLIP-FT)       & 79.09 & 65.87 & 69.52 \\
+ RAG (MedCLIP-FT)          & 75.13 & 63.88 & 67.16 \\
+ RAG (CLIP-MoE-FT)         & 72.13 & 62.72 & 65.11 \\
+ RAG (Ours)                & 84.82 & 68.85 & 77.54 \\ \bottomrule
\end{tabular}
\end{table}

\subsubsection{Comparison under Few-Shot Setting}
All our experiments are conducted under a zero-shot setting. We conduct experiments on LLaVA-Med-1.5 using the same few-shot strategy as in Med-Flamingo. The results show that compared to the zero-shot setting, the model's performance significantly decreases, even with RAG applied. Our analysis of this phenomenon reveals that, unlike Med-Flamingo, LLaVA-Med does not use interleaved multimodal data for pretraining. As a result, it lacks the capability for few-shot learning. This point has been mentioned in some discussion forums and GitHub issues. In addition, LLaVA-1.5's unsatisfactory performance on multi-image understanding benchmarks also supports this observation~\citep{wang2024mementos,meng2024mmiu}.

\begin{table}[htbp]
\centering
\caption{Performance comparison under zero-shot and few-shot settings.}
\label{tab:zero_few_shot}
\begin{tabular}{@{}lccc@{}}
\toprule
\multirow{2}{*}{Model}         & \multicolumn{3}{c}{IU-Xray} \\ \cmidrule(lr){2-4} 
                               & Acc   & F1    & AUC   \\ \midrule
LLaVA-Med (zero-shot)          & 75.47 & 64.04 & 67.46 \\
+MMed-RAG                      & 89.54 & 80.72 & 87.13 \\
LLAVA-Med (few-shot)           & 66.77 & 51.56 & 66.60 \\
+MMed-RAG                      & 84.10 & 71.92 & 86.40 \\ \bottomrule
\end{tabular}
\end{table}

\subsubsection{Performance Comparison of the Retriever}
Regarding the retriever's performance, as shown in Table~\ref{tab:recall_comparison}, we compared the performance of our retriever with several CLIP-based models on radiology datasets for image-to-text retrieval. The results demonstrate that our retriever significantly outperforms the other models in retrieval performance.

\begin{table}[htbp]
\centering
\caption{Performance comparison of different retrievers on Recall@1 (R@1) and Recall@5 (R@5) metrics.}
\label{tab:recall_comparison}
\footnotesize
\begin{tabular}{@{}lcc@{}}
\toprule
Model       & R@1  & R@5  \\ \midrule
CLIP        & 3.91 & 7.88 \\
PubMedCLIP  & 1.47 & 1.64 \\
MedCLIP     & 6.74 & 12.69 \\
BiomedCLIP  & 15.7 & 23.8 \\
PMC-CLIP    & 12.3 & 21.2 \\
Ours        & 45.6 & 71.8 \\ \bottomrule
\end{tabular}
\end{table}

\subsubsection{Rationale-Guided RAG}
For retrieved information, we minimize noise by optimizing the number of retrieved contexts \(k\) (e.g., Adaptive Retrieved Context Selection in Section 3.2). Following this, we introduce RAG-PT to specifically address the misalignment issues that arise after incorporating RAG, thereby strengthening Med-LVLM to balance its internal knowledge and external retrieval information. We employ a rationale-guided approach~\citep{wei2024instructrag} that uses LLM to explicitly learn denoising of retrieved content through self-synthesized rationales. First, given a question, the retrieved documents, and the ground truth from the training set, we prompt a powerful Med-LLM (i.e., LLaMA3-Med42-70B~\citep{christophe2024med42}) to generate a rationale. This rationale explains how to derive the answer from potentially noisy inputs. Next, we use the synthesized rationale from the previous step to guide another smaller Med-LLM (i.e., LLaMA3-Med42-7B~\citep{christophe2024med42}) to explicitly learn denoising of the retrieved documents through in-context learning and supervised learning. By employing this rationale-guided Med-LLM to filter noisy retrieval information, the reliability of our retrieved data improves. Experimental results show that after rationale-guided RAG, the model's performance further improved. 
\begin{table}[htbp]
\centering
\caption{Performance comparison on IU-Xray dataset, including RAG and Rationale-Guided RAG variants.}
\footnotesize
\label{tab:iu_xray_rag_comparison}
\begin{tabular}{@{}lccc@{}}
\toprule
\multirow{2}{*}{Model}         & \multicolumn{3}{c}{IU-Xray} \\ \cmidrule(lr){2-4} 
                               & Acc   & F1    & AUC   \\ \midrule
LLaVA-Med                     & 75.47 & 64.04 & 67.46 \\
+ RAG                         & 84.82 & 68.85 & 77.54 \\
\hspace{1em}+ RAG-PT          & 89.54 & 80.72 & 87.13 \\
+ Rationale-Guided RAG        & 85.38 & 69.23 & 77.90 \\
\hspace{1em}+ RAG-PT          & 89.91 & 80.86 & 87.32 \\ \bottomrule
\end{tabular}
\end{table}

\subsection{The Contribution of Domain-Specific Retrievers}
We design a domain-specific retriever leveraging a generalist Med-LVLM to retrieve information from a dedicated database based on the identified modality of the input medical image. Here, the domain identification models used are capable of reliably recognizing modalities with high accuracy (~99.83\% accuracy in our experiments). For radiology VQA tasks, input radiology images are classified as “radiology” by the model, enabling the retrieval of knowledge exclusively from the radiology database to enhance generation. All retrieved documents are specific to radiology and exclude other modalities.

\subsection{Explanation of Cross-Modality Alignment}
To construct preference pairs for cross-modality alignment, we first select a preferred response by having the model generate an answer using the correct medical image, clinical query, and retrieved knowledge, ensuring the response matches the ground-truth answer. Then, we select a dispreferred response by introducing an unrelated input image. This unrelated image is selected by finding the one with the lowest similarity to the target image and adding noise to distort it further. The dispreferred response is generated when the model uses this noisy, unrelated image along with the query and retrieved knowledge to still produce the correct answer. By comparing these pairs during training, the model learns to prioritize relevant and accurate inputs (e.g., the correct medical image) over noisy or irrelevant ones, improving cross-modality alignment.

\subsection{Analysis of Noisy Image in Cross-Modality Alignment}
In medical imaging, noise refers to random variations in image signals caused by hardware limitations or environmental factors~\citep{gravel2004method,sanchez2012medical}. However, the noise we refer to here pertains to images unrelated to the original image, generated through a two-step process: 1. We use a retriever to select images with the lowest similarity to the target image.  2. We introduce strong diffusion noise to these images.  As a result, the noisy images in our case are almost entirely random noise and are not merely examples of domain shifts, such as changes in lighting conditions. Refer to the third section of Figure 1 for examples, and additional examples are included in the Figure~\ref{fig:noise} for reference.

The motivation behind our design is that replacing the original image with a highly noisy image while adding retrieved information corresponding to the original image reveals a significant issue of cross-modal misalignment in the Med-LVLM—namely, it ignores the image information and directly copies the retrieved contexts. To mitigate this issue, we construct such preference pairs to specifically strengthen the model's cross-modal alignment capability.

\begin{figure}[t]
    \centering
    \includegraphics[width=0.95\linewidth]{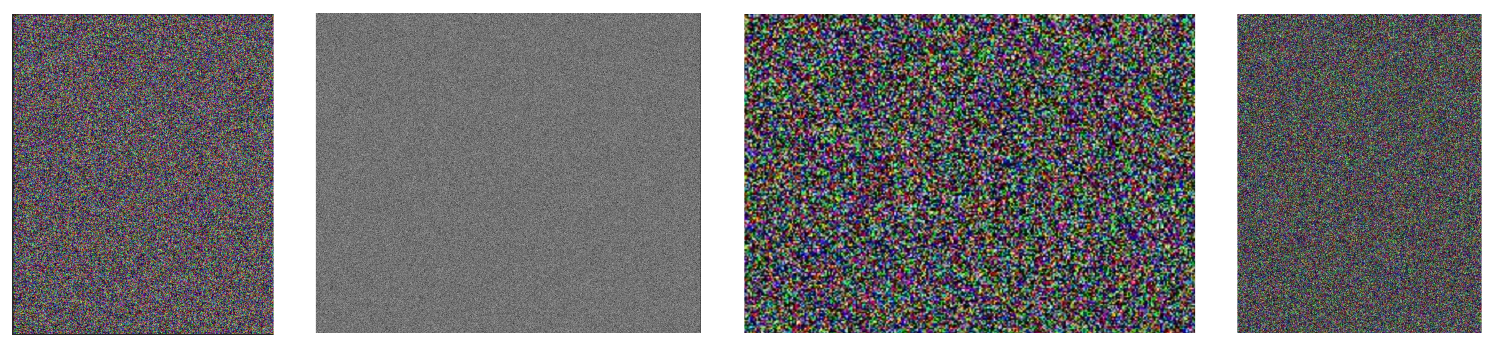}
    \vspace{-0.5em}
    \caption{Illustration Examples for noisy images in cross-modality alignment.
    }
    \label{fig:noise}
    \vspace{-1em}
\end{figure}

\subsection{Explanation of Over Reliance Rate}
The overall alignment issue arises from the conflict between retrieved information and the model's internal knowledge. For retrieved information, we cannot guarantee 100\% accuracy, so some noise is inevitable. The Over-Reliance (OR) rate shown in Figure 3 refers to the proportion of initially correct responses that become incorrect after adding the retrieved context, calculated relative to the total number of incorrect samples, not the total number of all samples. This rate represents the proportion of errors caused by over-reliance, rather than indicating poor performance of the retriever. Through RAG-PT, we can effectively mitigate this issue, significantly reducing the OR rate.

\newpage
\section{Proofs for Theoretical Results in Section \ref{thm anal}}
Here we provide proofs for the results in Section \ref{thm anal}.
\subsection{Notations}
Let $x_v, y, x_t,x_r$ be input medical image, ground-truth answer, question, and retrieved information, respectively. Denote $(x_w,y_{w,o}) \sim q_p(x_w,y_{w,o}|x_t,x_r)$ and $(x_l,y_{l,o}) \sim q_l(x_l,y_{l,o}|x_t,x_r)$ as distributions of the preferred responses and dispreferred responses. Let $x$ denote $(x_v,x_r,x_t)$. We aim to a fine-tune a generative model $\pi_\theta(y|x,x_t)$ through DPO loss~\citep{rafailov2023direct}:
\begin{equation} \label{app eq:dpo loss}
\argmin_{\pi_\theta} \mathbb{E}_{(x_w,x_l,y_{w,o},y_{l,o}) \sim \mathcal{D}} 
U\left( 
\alpha \log \frac{\pi_\theta(y_{w,o} | x)}{\pi_{o}(y_{w,o} | x)}
- \alpha \log \frac{\pi_\theta(y_{l,o} | x)}{\pi_{o}(y_{l,o} | x)}
 \right).
\end{equation}
where $U(t) = \log(1 + \exp(-t))$. Define the weight of $x_v$ with respect to $\log\pi_\theta(y|x)$ as
\begin{equation}
    \text{wt}(x_v,\pi_\theta) := \E_{y\sim\pi_\theta(\cdot|x)}\left[
        \frac{\partial}{\partial x_v} \log \pi_\theta(y|x) \right]^2
\end{equation}

\subsection{Assumptions}
\begin{assumption} \label{app asump: space} (Large parameter space) Assume that $\pi(x_v,y|x_t,x_r)$ lies in the optimization space $\{\pi_\theta,\theta\in\Theta\}$ such that $\pi(x_v,y|x_t,x_r) \propto \pi_o(x_v,y|x_t,x_r)\left(\frac{q_w(x_v,y|x_t,x_r)}{q_l(x_v,y|x_t,x_r)}\right)^{\frac{1}{\alpha}}$
\end{assumption}

\noindent Assumption \ref{app asump: space} requires that the parameter space sufficiently large to ensure that $\pi_\theta$ can achieve its global optimum, allowing us to represent the optimizer with a closed form.

\begin{assumption} \label{app asump: fun preoperty}
    Let $h(x,y)$, abbreviate as $h$, be
    {\small \begin{equation}
        h := \left[
        \sum_y\pi_o(y|x)\left(\frac{q_w(y|x)}{q_l(y|x)}\right)^{\frac{1}{\alpha}}\right]^{-1}\left(\frac{q_w(y|x)}{q_l(y|x)}\right)^{\frac{1}{\alpha}}
    \end{equation}}
    Assume that $\text{wt}(x_v,\pi_o) < c^2$, where
    {\small
    \begin{equation}
        c = \sqrt{\left\Vert \sqrt{\pi_o(y|x)}\cdot\frac{\partial}{\partial x_v}h \right\Vert_2^2 + \int \left(\frac{\partial}{\partial x_v}h\right)^2\frac{\pi_o(y|x)}{h} d y} - \left\Vert \sqrt{\pi_o(y|x)}\cdot\frac{\partial}{\partial x_v}h \right\Vert_2
    \end{equation}}
\end{assumption}

\begin{assumption} \label{app asump: fun preoperty 1}
    Let $h_1(x_v,x_t,x_r,y)$, abbreviate as $h_1$, be
    {\small
    \begin{equation}
    h_1:=
    \left[\sum_y\pi_o(y|x)\left(\frac{q_w^1(y|x_v,x_t,x_r)+q_w^2(y|x_v,x_t)}{q_l^1(y|x_v,x_t) + q_l^2(y|x_v,x_t,x_r)}\right)^{\frac{1}{\alpha}}\right]^{-1}\left(\frac{q_w^1(y|x_v,x_t,x_r)+q_w^2(y|x_v,x_t)}{q_l^1(y|x_v,x_t) + q_l^2(y|x_v,x_t,x_r)}\right)^{\frac{1}{\alpha}}
    \end{equation}}
    Assume that $\text{wt}(x_r,\pi_o) < c_1^2$ and $\text{wt}(\Tilde{x}_r,\pi_o) > c_2^2$, where
    {\small\begin{equation}
    \begin{aligned}
        & c_1 = \sqrt{\left\Vert \sqrt{\pi_o}\cdot\frac{\partial h_1}{\partial x_r} \right\Vert_2^2 + \int \left(\frac{\partial h_1}{\partial x_r}\right)^2\frac{\pi_o}{h_1} d y} - \left\Vert \sqrt{\pi_o}\cdot\frac{\partial h_1}{\partial x_r}\right\Vert_2 \\
        & c_2 = \sqrt{\left\Vert \sqrt{\pi_o}\cdot\frac{\partial h_1}{\partial \Tilde{x}_r} \right\Vert_2^2 
        + \int \left( \frac{\partial h_1}{\partial \Tilde{x}_r}\right)^2\frac{\pi_o}{h_1} + \left(\frac{\partial \pi_o}{\partial \Tilde{x}_r}\right)^2\frac{h_1}{\pi_o}
        d y} + \left\Vert \sqrt{\pi_o}\cdot\frac{\partial h_1}{\partial \Tilde{x}_r} \right\Vert_2
    \end{aligned}       
    \end{equation}}
\end{assumption}

\subsection{Proofs}
\begin{lemma}
    \label{app lem: optimizer form}
    Suppose that Assumption \ref{app asump: space} hold, optimizing equation \ref{app eq:dpo loss} gives
    \begin{equation} \label{equ: opt model}
        \pi_\theta(y|x) \propto \pi_o(y|x)\left(\frac{q_w(y|x)}{q_l(y|x)}\right)^{\frac{1}{\alpha}}
    \end{equation}
\end{lemma}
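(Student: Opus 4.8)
The plan is to recognize \eqref{app eq:dpo loss} as (the population form of) a Bradley--Terry negative log-likelihood, identify its minimizer over all implicit rewards, and then invoke Assumption~\ref{app asump: space} to conclude that this minimizer is realized inside the family $\{\pi_\theta\}$. Concretely, I would first reparametrize by the implicit reward $r_\theta(x,y):=\alpha\log\frac{\pi_\theta(y|x)}{\pi_o(y|x)}$, so that, using $U(t)=-\log\sigma(t)$, the objective becomes $\E_{x}\,\E_{y_w\sim q_w(\cdot|x),\,y_l\sim q_l(\cdot|x)}\!\big[-\log\sigma\!\big(r_\theta(x,y_w)-r_\theta(x,y_l)\big)\big]$, following \citep{rafailov2023direct}. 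Everything below is done at a fixed conditioning, so I suppress $x$; the $x_w$-versus-$x_l$ bookkeeping in the notation is absorbed by reading $q_w,q_l$ as the laws of the preferred/dispreferred completion $y$ under the shared conditioning.

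The key step is a symmetrization over the unordered pair. Letting $\mu$ denote the law of the unordered pair $\{a,b\}$ obtained from $(y_w,y_l)\sim q_w\otimes q_l$, the conditional probability that the $q_w$-sample is $a$ equals $\frac{q_w(a)q_l(b)}{q_w(a)q_l(b)+q_w(b)q_l(a)}=\sigma\!\big(r^\ast(a)-r^\ast(b)\big)$ with $r^\ast(y):=\log\frac{q_w(y)}{q_l(y)}$. Plugging this in and writing $p=\sigma(r^\ast(a)-r^\ast(b))$, $q=\sigma(r_\theta(a)-r_\theta(b))$, the objective becomes $\E_{\{a,b\}\sim\mu}\big[-p\log q-(1-p)\log(1-q)\big]=\E_{\{a,b\}\sim\mu}\big[H(p)+\KL(\mathrm{Ber}(p)\,\|\,\mathrm{Ber}(q))\big]$, the standard cross-entropy $=$ entropy $+$ KL decomposition. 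The entropy term is independent of $\pi_\theta$, and the KL term is nonnegative and vanishes exactly when $q=p$ for $\mu$-a.e.\ pair, i.e.\ $r_\theta(a)-r_\theta(b)=r^\ast(a)-r^\ast(b)$, i.e.\ (given rich enough support of $\mu$) $r_\theta(\cdot)=r^\ast(\cdot)+c(x)$ for some constant $c(x)$. Re-expanding $r_\theta$ yields $\alpha\log\frac{\pi_\theta(y|x)}{\pi_o(y|x)}=\log\frac{q_w(y|x)}{q_l(y|x)}+c(x)$, equivalently $\pi_\theta(y|x)\propto\pi_o(y|x)\big(\tfrac{q_w(y|x)}{q_l(y|x)}\big)^{1/\alpha}$, which is exactly \eqref{equ: opt model}. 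By Assumption~\ref{app asump: space} this distribution belongs to $\{\pi_\theta:\theta\in\Theta\}$, so the lower bound is attained within the family and the displayed policy is a global minimizer; moreover the normalization $\sum_y\pi_o(y|x)e^{r_\theta(x,y)/\alpha}=1$ removes the residual freedom in $c(x)$, so $\pi_\theta$ itself is unique.

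I expect the only real friction to be conceptual rather than computational: pinning down what ``$\pi_\theta(y|x)$ with a single $x$'' means when the loss genuinely mixes a preferred conditioning and a dispreferred conditioning (in the cross-modality construction, the clean image $x_v$ and the noised image $x_v^\ast$). The argument above is clean precisely once one commits to treating $q_w(\cdot\mid x)$ and $q_l(\cdot\mid x)$ as completion distributions over the same $x$ (or, equivalently, regards the pair $(x_w,x_l)$ as exogenous side information over which the outer expectation factorizes); with that reading, the remaining steps --- the symmetrization identity and the Gibbs-inequality argument --- are routine. One should also state explicitly the mild support condition on $\mu$ needed to upgrade ``$q=p$ $\mu$-a.e.'' to ``$r_\theta-r^\ast$ constant,'' since that is what delivers a closed form rather than merely a constraint.
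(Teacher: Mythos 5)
Your argument is correct and is essentially the paper's proof in different clothing: the paper also symmetrizes over the two orderings (via the densities $p_1=q_w\otimes q_l$ and $p_2=q_l\otimes q_w$ in Lemma~\ref{lem: closed form soultion}) and then applies Lemma~\ref{app lem: loss function property} (Lemma~C.1 of Chen et al.), which, after the substitution $p=a/(a+b)$, $q=\sigma(t)$, is literally the statement that cross-entropy equals entropy plus a nonnegative KL with equality iff $q=p$ --- exactly your Bernoulli decomposition. The only cosmetic difference is that the paper derives the closed form for the joint $\pi_\theta(x_v,y\mid x_t,x_r)$ and then conditions to get $\pi_\theta(y\mid x)$, whereas you work with the conditional directly; your explicit remark about the support of $\mu$ needed to upgrade ``same pairwise differences'' to ``$r_\theta-r^\ast$ constant'' is a point the paper leaves implicit and is worth keeping.
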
 
\noindent Lemma \ref{app lem: optimizer form} indicates that the model tends to increase $\pi_o(y|x)$ if $q_w(y|x)> q_l(y|x)$, which is more likely to occur when $(x_v,y)$ represents a preferred sample given $x_t$ and $x_r$. Below, we provide an application of Lemma \ref{app lem: optimizer form} using a linear regression example. Lemma \ref{app lem: optimizer form} is proved with Lemma \ref{app lem: loss function property} and Lemma \ref{lem: closed form soultion}.
\begin{lemma} \label{app lem: loss function property}
    (Lemma C.1 in \cite{chen2024selfplayfinetuningconvertsweak}) For $a,b>0$, the following inequality holds
    $$a \cdot U(t) + b \cdot U(-t) \geq a \log(1 + b/a) + b\log(1+a/b)$$
    and equality holds if and only if $t = \log(a/b)$
\end{lemma}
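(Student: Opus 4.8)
The plan is to read this as a one-dimensional convex minimization. Recall from \eqref{app eq:dpo loss} that $U(t)=\log(1+\exp(-t))$, so the left-hand side is $f(t):=a\,U(t)+b\,U(-t)=a\log(1+e^{-t})+b\log(1+e^{t})$, and the assertion is precisely that $\min_{t\in\mathbb{R}}f(t)=a\log(1+b/a)+b\log(1+a/b)$, attained at the single point $t=\log(a/b)$.

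First I would establish strict convexity of $f$. A direct computation gives
\[
f'(t)=\frac{b\,e^{t}-a}{1+e^{t}},\qquad f''(t)=(a+b)\,\frac{e^{t}}{(1+e^{t})^{2}}>0
\]
since $a,b>0$. Hence $f$ is strictly convex on $\mathbb{R}$, so it has at most one critical point, and that point, if it exists, is the unique global minimizer. From the formula for $f'$, the equation $f'(t)=0$ is equivalent to $b\,e^{t}=a$, i.e.\ to $t^{*}=\log(a/b)$, so $t^{*}$ is that unique minimizer.

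It then remains to evaluate $f(t^{*})$. Substituting $e^{t^{*}}=a/b$ and $e^{-t^{*}}=b/a$ gives $f(t^{*})=a\log(1+b/a)+b\log(1+a/b)$, which is exactly the right-hand side of the claimed inequality; strict convexity promotes ``$\ge$'' to equality if and only if $t=t^{*}$.

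There is essentially no hard step here --- it is a routine calculus argument --- and the only point worth stating carefully is the strict sign $f''>0$, since that is what yields the ``if and only if'' in the equality condition (plain convexity would leave room for a flat minimum). An alternative that avoids differentiation is the substitution $s=\sigma(t)=1/(1+e^{-t})\in(0,1)$, under which $f$ becomes $-a\log s-b\log(1-s)$; this is an affine shift of a cross-entropy / Kullback--Leibler functional, minimized at $s=a/(a+b)$, i.e.\ $t=\log(a/b)$, delivering the same value and the same uniqueness.
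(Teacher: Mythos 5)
Your proof is correct and complete: the computation of $f'$ and $f''$ is right, strict convexity gives a unique critical point at $t=\log(a/b)$, and evaluating there yields exactly the stated lower bound, with the strict sign $f''>0$ correctly invoked to get the ``if and only if.'' The paper itself offers no proof of this statement --- it simply imports it as Lemma C.1 of the cited work --- and your one-variable convexity argument (equivalently, the cross-entropy substitution $s=\sigma(t)$ you mention) is the standard derivation of that lemma, so there is nothing to add.
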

\begin{lemma} \label{lem: closed form soultion}
Denote 
$$ \left\{
    \begin{array}{ll}
    p_1(x_w,y_{w,o},x_l,y_{l,o}|x_t,x_r) & = q_w(x_w,y_{w,o}|x_t,x_r) \cdot q_l(x_l,y_{l,o}|x_t,x_r)\\
    p_2(x_w,y_{w,o},x_l,y_{l,o}|x_t,x_r) & = q_l(x_w,y_{w,o}|x_t,x_r) \cdot q_w(x_l,y_{l,o}|x_t,x_r)
    \end{array}
    \right.$$
and abbreviated as $p_1$ and $p_2$ for notational convenience. Then,
\begin{equation}
    \begin{aligned}
       &  2\E_{\gD}\left[U\left(f(x_w,y_{w,o},x_t,x_r) - f(x_l,y_{l,o},x_t,x_r)\right)\right] \\
        \geq & 2\log2 - \KL\left(p_1 \big{\Vert} \frac{p_1 + p_2}{2}\right) - \KL\left(p_2 \big{\Vert} \frac{p_1 + p_2}{2}\right)
    \end{aligned}
\end{equation}
Equality holds if and only if
\begin{equation} \label{equ: closed form}
        f(x,y) = g(x) + \log\frac{q_w(x_v,y|x_t,x_r)}{q_l(x_v,y|x_t,x_r)}
    \end{equation}
where $g(x)$ is any function that is possibly dependent on $x_v$, $x_t$ and $x_r$.
\begin{proof}
    \begin{equation}
        \begin{aligned}
            & 2\E_{\gD}\left[U\left(f(x_w,y_{w,o},x_t,x_r) - f(x_l,y_{l,o},x_t,x_r)\right)\right]\\
            = & \int q(x_t,x_r) \cdot p_1 \cdot U\left(f(x_w,y_{w,o},x_t,x_r) - f(x_l,y_{l,o},x_t,x_r)\right) d x d y\\
            & + \int q(x_t,x_r) \cdot p_2 \cdot U\left(f(x_l,y_{l,o},x_t,x_r) - f(x_w,y_{w,o},x_t,x_r)\right) d x d y\\ 
            \geq & \int q(x_t,x_r) \left[p_1 \cdot \log\left(1 + \frac{p_2}{p_1}\right) + p_2 \cdot \log\left(1 + \frac{p_1}{p_2}\right)\right]d x d y \\
            = & 2\log2 + \int q(x_t,x_r) \left[p_1 \cdot \log\left( \frac{p_1 + p_2}{2p_1}\right) + p_2 \cdot \log\left( \frac{p_1 + p_2}{2p_2}\right) \right]d x d y\\
            = & 2\log2 - KL\left(p_1 \big{\Vert} \frac{p_1 + p_2}{2}\right) - KL\left(p_2 \big{\Vert} \frac{p_1 + p_2}{2}\right)
        \end{aligned}
    \end{equation}
    where the first inequality follows from Lemma \ref{app lem: loss function property}. For equivalence,
    \begin{equation}
    \begin{aligned}
        f(x,y_{w,o},x_t,x_r) - f(x_l,y_{l,o},x_t,x_r) & = \log\frac{q_w(x_w,y_{w,o}|x_t,x_r) \cdot q_l(x_l,y_{l,o}|x_t,x_r)}{q_l(x_w,y_{w,o}|x_t,x_r) \cdot q_w(x_l,y_{l,o}|x_t,x_r)}
    \end{aligned}
    \end{equation}
    Thus, for any $x_w, y_{w,o},x_l,y_{l,o},x_t,x_r$, 
    \begin{equation} \label{equ:f form}
        f(x_w,y_{w,o},x_t,x_r) - \log\frac{q_w(x_w,y_{w,o}|x_t,x_r)}{q_l(x_w,y_{w,o}|x_t,x_r)} = f(x_l,y_{l,o},x_t,x_r) - \log\frac{q_w(x_l,y_{l,o}|x_t,x_r)}{q_l(x_l,y_{l,o}|x_t,x_r)}
    \end{equation}
    Therefore, equation \ref{equ:f form} holds if and only if there exists some $g(x_v,x_t,x_r)$ such that 
    \begin{equation}
        f(x_v,x_t,x_r,y) = g(x_t,x_r) + \log\frac{q_w(x_v,y|x_t,x_r)}{q_l(x_v,y|x_t,x_r)}
    \end{equation}
\end{proof} 
\end{lemma}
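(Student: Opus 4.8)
The plan is to recognize the left-hand side as a symmetric functional of the two independent draws and reduce the entire statement to a pointwise application of the elementary inequality in Lemma~\ref{app lem: loss function property}. First I would unfold the expectation: under $\gD$ the pair $(x_w,y_{w,o})$ is drawn from $q_w(\cdot\mid x_t,x_r)$ and $(x_l,y_{l,o})$ from $q_l(\cdot\mid x_t,x_r)$ independently, with $(x_t,x_r)$ from its marginal $q(x_t,x_r)$. Writing $\Delta f := f(x_w,y_{w,o},x_t,x_r)-f(x_l,y_{l,o},x_t,x_r)$, the quantity $2\,\E_{\gD}[U(\Delta f)]$ is invariant under swapping the roles of the labels ``$w$'' and ``$l$'' in the dummy integration variables, so it equals $\int q(x_t,x_r)\,[\,p_1\,U(\Delta f)+p_2\,U(-\Delta f)\,]\,dx\,dy$, where $p_1,p_2$ are exactly the two product densities in the statement. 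This relabeling is what accounts for the factor $2$.

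Next I would apply Lemma~\ref{app lem: loss function property} pointwise with $a=p_1>0$ and $b=p_2>0$, giving $p_1\,U(\Delta f)+p_2\,U(-\Delta f)\geq p_1\log(1+p_2/p_1)+p_2\log(1+p_1/p_2)$. Integrating against $q(x_t,x_r)$ and rewriting $\log(1+p_2/p_1)=\log 2+\log\tfrac{p_1+p_2}{2p_1}$ (and symmetrically for the other term), the right-hand side becomes $2\log 2-\KL(p_1\Vert\tfrac{p_1+p_2}{2})-\KL(p_2\Vert\tfrac{p_1+p_2}{2})$, using that $p_1$ and $p_2$ each integrate to one and hence each contribute a $\log 2$. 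That is precisely the claimed lower bound.

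For the equality characterization I would invoke the tightness clause of Lemma~\ref{app lem: loss function property}: equality holds (a.e.) iff $\Delta f=\log(a/b)=\log(p_1/p_2)$. Since $p_1/p_2=\frac{q_w(x_w,y_{w,o}\mid x_t,x_r)}{q_l(x_w,y_{w,o}\mid x_t,x_r)}\cdot\frac{q_l(x_l,y_{l,o}\mid x_t,x_r)}{q_w(x_l,y_{l,o}\mid x_t,x_r)}$, this says that $f(x_w,y_{w,o},x_t,x_r)-\log\frac{q_w(x_w,y_{w,o}\mid x_t,x_r)}{q_l(x_w,y_{w,o}\mid x_t,x_r)}$ is independent of $(x_w,y_{w,o})$; calling this common value $g$ and noting that $x_w$ carries the image component $x_v$ while $x_t,x_r$ are held fixed throughout the identity, $g$ is free to depend on $x=(x_v,x_t,x_r)$ but not on $y$. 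Hence equality forces $f(x,y)=g(x)+\log\frac{q_w(x_v,y\mid x_t,x_r)}{q_l(x_v,y\mid x_t,x_r)}$, and conversely any $f$ of this form makes $\Delta f\equiv\log(p_1/p_2)$, so equality is attained.

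I do not expect a real obstacle here; the work is careful bookkeeping. The three points to watch are: (i) the symmetrization step, so that the factor $2$ and the marginal $q(x_t,x_r)$ are tracked correctly and the constants emerge as $2\log 2$ together with exactly the two mixture-KL terms; (ii) justifying that the ``constant'' $g$ extracted from the equality condition is genuinely an arbitrary function of everything that is not $y$, namely of $x=(x_v,x_t,x_r)$, and of nothing more; and (iii) the degenerate case where $p_1$ or $p_2$ vanishes on a set of positive measure, in which the equality statement should be read on the common support --- a caveat I would note in passing rather than dwell on.
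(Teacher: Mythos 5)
Your proof matches the paper's: you symmetrize the expectation into the $p_1,p_2$ integral (the relabeling accounts for the factor $2$ and the marginal $q(x_t,x_r)$), apply Lemma~\ref{app lem: loss function property} pointwise with $a=p_1$, $b=p_2$, regroup $\log(1+p_2/p_1)=\log 2+\log\frac{p_1+p_2}{2p_1}$ into $2\log 2$ minus the two mixture-KL terms, and characterize equality via $\Delta f=\log(p_1/p_2)$. The parenthetical about vanishing $p_1$, $p_2$ is a reasonable caveat the paper omits.

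One small self-contradiction in the equality step. You correctly observe that equality forces the residual $f(x_w,y_{w,o},x_t,x_r)-\log\frac{q_w(x_w,y_{w,o}\mid x_t,x_r)}{q_l(x_w,y_{w,o}\mid x_t,x_r)}$ to be independent of $(x_w,y_{w,o})$, and then assert that the common value $g$ is nonetheless free to depend on $x_v$. But $x_v$ \emph{is} the image component of $x_w$, so the independence you just invoked rules that out: $g$ can depend only on the held-fixed $(x_t,x_r)$, which is what the paper's final displayed equation records. (The paper's own lemma statement and its prose repeat the same imprecision, and the slip is harmless downstream because any $x_v$-dependence of $g$ cancels when $\pi_\theta(y\mid x)$ is normalized in the proof of Lemma~\ref{app lem: optimizer form}; nonetheless, as written your equality argument is internally inconsistent.)
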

\noindent Lemma \ref{lem: closed form soultion} provides a closed-form solution to equation \ref{app eq:dpo loss} if the parameter space is sufficiently large. This lemma is crucial for the proof Lemma \ref{app lem: optimizer form}, which follows below
\begin{proof}
According to the Assumption \ref{app asump: space}, we have
        \begin{equation}
          \pi(x_v,y|x_t,x_r) = \hat{g}(x_t,x_r) \pi_o(x_v,y|x_t,x_r)\left(\frac{q_w(x_v,y|x_t,x_r)}{q_l(x_v,y|x_t,x_r)}\right)^{\frac{1}{\alpha}}
    \end{equation}
After reparameterization, 
\begin{equation}
    \alpha\log\left(\frac{\pi(x_v,y|x_t,x_r)}{\pi_o(x_v,y|x_t,x_r)}\right) = \alpha\log [\hat{g}(x_t,x_r)] + \log\frac{q_w(x_v,y|x_t,x_r)}{q_l(x_v,y|x_t,x_r)}
\end{equation}
which is the global minimum of 
\begin{equation}
    \argmin_f \E_{\gD}\left[U\left(f(x_w,y_{w,o},x_t,x_r) - f(x_l,y_{l,o},x_t,x_r)\right)\right]
\end{equation}
by Lemma \ref{lem: closed form soultion}. Since $\pi(x_v,y|x_t,x_r) \in \{\pi_\theta, \theta\in \Theta\}$ lies in the optimization space, we have
\begin{equation}
\begin{aligned} \label{equ: eqv loss}
    \begin{aligned}
        & \min_f \E_{\gD}U\left(f(x_w,y_{w,o},x_t,x_r) - f(x_l,y_{l,o},x_t,x_r)\right) \\
        = & \min_{\pi_{\theta}}\E_{\gD}
        U\left( \alpha \log \frac{\pi_\theta(y_{w,o} | x_w, x_t,x_r)}{\pi_{o}(y_{w,o} | x_w,x_t,x_r)} - \alpha \log \frac{\pi_\theta(y_{l,o} | x_l,x_t,x_r)}{\pi_{o}(y_{l,o} | x_l,x_t,x_r)}\right)
    \end{aligned}   
\end{aligned}
\end{equation}
and $\pi_\theta(x_v,y|x_t,x_r)$ is the optimizer of equation \ref{equ: eqv loss}, which gives
\begin{equation}
    \begin{aligned}
         & \alpha\log\left(\frac{\pi_\theta(x_v,y|x_t,x_r)}{\pi_o(x_v,y|x_t,x_r)}\right) = g(x_t,x_r) + \log\frac{q_w(x_v,y|x_t,x_r)}{q_l(x_v,y|x_t,x_r)} \\
        \Longrightarrow & \pi_\theta(x_v,y|x_t,x_r) =  \pi_o(x_v,y|x_t,x_r)\left(\frac{q_w(x_v,y|x_t,x_r)}{q_l(x_v,y|x_t,x_r)}\right)^{\frac{1}{\alpha}}\exp\left(\frac{1}{\alpha}g(x_t,x_r)\right)
    \end{aligned}
\end{equation}
Then
\begin{equation}
        \begin{aligned}
            \pi_\theta(y|x)  & = \frac{\pi_\theta(x_v,y|x_t,x_r)}{\pi_{\theta}(x|x_t,x_r)} = \frac{\pi_o(x_v,y|x_t,x_r)\left(\frac{q_w(x_v,y|x_t,x_r)}{q_l(x_v,y|x_t,x_r)}\right)^{\frac{1}{\alpha}}\exp\left(\frac{1}{\alpha}(g(x_t,x_r)\right)}{\sum_y\pi_o(x_v,y|x_t,x_r)\left(\frac{q_w(x_v,y|x_t,x_r)}{q_l(x_v,y|x_t,x_r)}\right)^{\frac{1}{\alpha}}\exp\left(\frac{1}{\alpha}(g(x_t,x_r)\right)} \\
            & = \frac{\pi_o(y|x)\left(\frac{q_w(x_v,y|x_t,x_r)}{q_l(x_v,y|x_t,x_r)}\right)^{\frac{1}{\alpha}}}{\sum_y\pi_o(y|x)\left(\frac{q_w(x_v,y|x_t,x_r)}{q_l(x_v,y|x_t,x_r)}\right)^{\frac{1}{\alpha}}} = \frac{\pi_o(y|x)\left(\frac{q_w(y|x_v,x_t,x_r)}{q_l(y|x_v,x_t,x_r)}\right)^{\frac{1}{\alpha}}}{\sum_y\pi_o(y|x)\left(\frac{q_w(y|x_v,x_t,x_r)}{q_l(y|x_v,x_t,x_r)}\right)^{\frac{1}{\alpha}}} 
        \end{aligned}
    \end{equation}
\end{proof}

\begin{corollary} \label{app cor:lin reg}
    Suppose that preferred responses $(x_w,y_w)$ and dispreferred responses $(x_l,y_l)$ satisfy $y_w = \beta x_w + \epsilon_1 $ and $y_l = \Tilde{\beta} x_l  + \epsilon_2$ respectively. DPO for $y = \theta x_v + \epsilon_3$ is based on reference model $y = \theta_{o} x_v + \epsilon_4$, where $\epsilon_i$'s are independent and follow standard normal distribution. Then,
    \begin{equation}
        \theta = \theta_{o} + \frac{1}{\alpha}(\beta - \Tilde{\beta})
    \end{equation}
\end{corollary}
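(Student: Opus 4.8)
The plan is to apply Lemma~\ref{app lem: optimizer form} directly, instantiating the abstract densities $\pi_o$, $q_w$, $q_l$ with the Gaussians dictated by the linear-regression assumptions, and then reading off the mean of the resulting optimizer. First I would write down the three conditional densities. Since $\epsilon_4\sim N(0,1)$, the reference model is $\pi_o(y\mid x)\propto\exp(-\tfrac12(y-\theta_o x_v)^2)$; identifying the covariate of the preferred and dispreferred samples with $x_v$ (the regression involves only $x_v$, so the dependence on $x_t,x_r$ is vacuous here), the preference distributions are $q_w(y\mid x)\propto\exp(-\tfrac12(y-\beta x_v)^2)$ and $q_l(y\mid x)\propto\exp(-\tfrac12(y-\tilde\beta x_v)^2)$. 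Lemma~\ref{app lem: optimizer form} is stated with sums over $y$, but the same argument with integrals yields the continuous version used here.

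Next I would substitute into the closed form $\pi_\theta(y\mid x)\propto\pi_o(y\mid x)\bigl(q_w(y\mid x)/q_l(y\mid x)\bigr)^{1/\alpha}$ and take logarithms. The key point of the computation is that the $y^2$ terms cancel: the reference contributes $-\tfrac12 y^2$ while $\tfrac1\alpha(\log q_w-\log q_l)$ contributes $\tfrac1\alpha(-\tfrac12 y^2+\tfrac12 y^2)=0$, so $\pi_\theta(\cdot\mid x)$ is again a unit-variance Gaussian, consistent with $\epsilon_3\sim N(0,1)$. Collecting the terms linear in $y$ gives coefficient $\bigl(\theta_o+\tfrac1\alpha(\beta-\tilde\beta)\bigr)x_v$, and completing the square shows the mean of $\pi_\theta(\cdot\mid x)$ equals $\bigl(\theta_o+\tfrac1\alpha(\beta-\tilde\beta)\bigr)x_v$. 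Hence $\theta=\theta_o+\tfrac1\alpha(\beta-\tilde\beta)$, as claimed.

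There is no genuinely hard step; the corollary is a sanity-check instantiation of Lemma~\ref{app lem: optimizer form}. The only things to be careful about are (i) the mild abuse of notation that lets us treat $x_w$, $x_l$, and $x_v$ as the same covariate when forming the conditional densities; (ii) checking that Assumption~\ref{app asump: space} is satisfied, which holds because the Gaussian location family is closed under the tilting operation $\pi_o\mapsto\pi_o(q_w/q_l)^{1/\alpha}$, so the optimizer indeed lies in the model class $\{y=\theta x_v+\epsilon\}$; and (iii) noting that the normalizing constants are independent of $y$ and therefore do not affect the mean. These are all routine, so the main ``obstacle'' is simply organizing the Gaussian algebra cleanly.
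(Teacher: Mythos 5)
Your proposal is correct and takes essentially the same route as the paper: both invoke Lemma~\ref{app lem: optimizer form}, substitute the Gaussian densities $\phi(y-\theta_o x_v)$, $\phi(y-\beta x_v)$, $\phi(y-\tilde\beta x_v)$ into the closed form $\pi_\theta\propto\pi_o\,(q_w/q_l)^{1/\alpha}$, observe that the quadratic-in-$y$ terms cancel in the ratio, and match the linear coefficient to read off $\theta=\theta_o+\tfrac1\alpha(\beta-\tilde\beta)$. Your write-up is cleaner than the paper's (which contains a minor sign inconsistency in its intermediate display and switches between $\theta$ and $\theta_1$), but the argument is identical.
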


\noindent Corollary \ref{app cor:lin reg} is a direct application of Lemma \ref{app lem: optimizer form}, indicating that the model updates coefficient $\theta_{o}$ towards the direction of $\beta$ for preferred responses and away from $\Tilde{\beta}$ for dispreferred responses. 

\begin{proof}
    Let $\phi(\cdot)$ denote the probability density function of standard normal, by Lemma \ref{app lem: optimizer form},
    \begin{equation} 
    \begin{aligned}
         & \phi(y-\theta x)
          \propto \phi(y-\theta_{o}x)
         \left(\frac{\phi(y-\beta x)}{\phi(y-\Tilde{\beta} x)}\right)^{\frac{1}{\alpha}} \\
        \Longrightarrow &\exp\left(\frac{1}{2}y^2 - \theta_1 x y\right) 
         \propto \exp\left(\frac{1}{2}y^2 -\theta_{o}x y\right)\cdot \exp\left(-\frac{1}{\alpha}(\beta - \Tilde{\beta}) xy\right) \\
         \Longrightarrow &\exp\left(\theta_1 x y\right) 
         \propto \exp\left(\theta_{o}x y\right)\cdot \exp\left( \frac{1}{\alpha}(\beta - \Tilde{\beta}) xy\right) \\
         \Longrightarrow & \theta = \theta_{o} + \frac{1}{\alpha}(\beta - \Tilde{\beta})
    \end{aligned}
\end{equation}

\end{proof}

\begin{lemma} \label{app lem: linear}
    For linear model $y = \theta_1 x_v + \theta_2 x_t +\epsilon$ such that $\epsilon\sim N (0,1)$, $\text{wt}(x_v,\pi_\theta) = \theta_1^2$
\end{lemma}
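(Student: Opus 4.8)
\textbf{Proof plan for Lemma \ref{app lem: linear}.} The plan is to directly unfold the definition of $\text{wt}(x_v,\pi_\theta)$ for the Gaussian location model and observe that the quantity reduces to $\theta_1^2$ times the second moment of a standard normal variable. First I would write down the conditional density explicitly: since $y = \theta_1 x_v + \theta_2 x_t + \epsilon$ with $\epsilon \sim N(0,1)$, we have $\pi_\theta(y\mid x) = \phi\!\left(y - \theta_1 x_v - \theta_2 x_t\right)$, where $\phi$ is the standard normal density and $x=(x_v,x_t)$ (there is no retrieved context in this toy model). Hence $\log\pi_\theta(y\mid x) = -\tfrac12\left(y - \theta_1 x_v - \theta_2 x_t\right)^2 - \tfrac12\log(2\pi)$.

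Next I would differentiate in $x_v$: by the chain rule,
\begin{equation}
\frac{\partial}{\partial x_v}\log\pi_\theta(y\mid x) = \theta_1\left(y - \theta_1 x_v - \theta_2 x_t\right),
\end{equation}
so that $\left[\dfrac{\partial}{\partial x_v}\log\pi_\theta(y\mid x)\right]^2 = \theta_1^2\left(y - \theta_1 x_v - \theta_2 x_t\right)^2$. Now I would take the expectation over $y\sim\pi_\theta(\cdot\mid x)$ required by Definition \ref{def: wt}. Under this law, $y - \theta_1 x_v - \theta_2 x_t$ is exactly the noise $\epsilon\sim N(0,1)$, so $\E_{y\sim\pi_\theta(\cdot\mid x)}\left(y - \theta_1 x_v - \theta_2 x_t\right)^2 = \E[\epsilon^2] = 1$. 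Therefore $\text{wt}(x_v,\pi_\theta) = \theta_1^2\cdot 1 = \theta_1^2$, as claimed.

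There is no real obstacle here; the only point worth flagging is that the expectation in the weight is taken with respect to the \emph{model's own} conditional law $\pi_\theta(\cdot\mid x)$ (not an empirical or data distribution), which is precisely what makes the residual a genuine standard Gaussian and forces its second moment to equal $1$ — this is the Fisher-information-of-a-location-family phenomenon. One could also remark that an identical computation gives $\text{wt}(x_t,\pi_\theta)=\theta_2^2$, which is consistent with the interpretation in the main text that $\text{wt}(\cdot,\pi_\theta)$ measures the squared sensitivity (regression coefficient) of the model's output on a given input coordinate, justifying why Assumption \ref{asump: fun preoperty} is read as ``$x_v$ has small weight in $\pi_o$.''
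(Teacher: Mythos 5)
Your proposal is correct and follows essentially the same route as the paper: unfold Definition \ref{def: wt}, differentiate the Gaussian log-density in $x_v$ to get $\theta_1(y-\theta_1 x_v-\theta_2 x_t)$, and reduce the weight to $\theta_1^2$ times the second moment of the residual. The only cosmetic difference is that the paper evaluates $\int (y-\theta_1 x_v-\theta_2 x_t)^2\phi(y-\theta_1 x_v-\theta_2 x_t)\,dy=1$ by an explicit integration by parts, whereas you simply invoke $\E[\epsilon^2]=1$ for the standard normal.
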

\begin{proof}
Let $\phi(\cdot)$ denote the probability density function of standard normal,
\begin{equation}
    \begin{aligned}
        \text{wt}(x_v,\pi_\theta) & = \int  \left(-\frac{1}{2}\frac{\partial}{\partial x_v} \left(y-\theta_1 x_v - \theta_2 x_t\right)^2\right)^2\phi(y-\theta_1 x_v - \theta_2 x_t) d y \\
        & = \theta_1^2 \int \left(y-\theta_1 x_v - \theta_2 x_t\right)^2\phi(y-\theta_1 x_v - \theta_2 x_t) d y \\
        & = \theta_1^2 \int \left(\theta_1 x_v + \theta_2 x_t - y\right)\frac{d \phi(y-\theta_1 x_v - \theta_2 x_t)}{d y}d y \\
        & = \theta_1^2 \int \phi(y-\theta_1 x_v - \theta_2 x_t)d y = \theta_1^2
    \end{aligned}
\end{equation}   
\end{proof}

\begin{theorem} \label{app thm: increase weight}
Suppose that Assumption \ref{app asump: fun preoperty} holds, then cross-modality increase the weight of $x_v$.
\begin{equation}
    \text{wt}(x_v,\pi_\theta) > \text{wt}(x_v,\pi_o)
\end{equation}
\end{theorem}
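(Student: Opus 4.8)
The plan is to begin from the closed-form description of the minimizer of the cross-modality loss \eqref{eq: cm loss}. By Lemma \ref{app lem: optimizer form}, applied with $q_w,q_l$ the preferred and dispreferred distributions on $\mathcal{D}_{cm}$, the optimizer obeys $\pi_\theta(y\mid x)=\pi_o(y\mid x)\,h(x,y)$, where $h$ is exactly the normalized likelihood ratio appearing in Assumption \ref{app asump: fun preoperty}; the normalizer $Z(x)=\sum_y\pi_o(y\mid x)\bigl(q_w(y\mid x)/q_l(y\mid x)\bigr)^{1/\alpha}$ depends on $x$ only, so $h\ge 0$ and $\sum_y\pi_o h=1$. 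I would then differentiate the identity $\pi_\theta=\pi_o h$ with respect to the image coordinate $x_v$, getting $\partial_{x_v}\pi_\theta=h\,\partial_{x_v}\pi_o+\pi_o\,\partial_{x_v}h$, and substitute into $\text{wt}(x_v,\pi_\theta)=\int \pi_\theta^{-1}(\partial_{x_v}\pi_\theta)^2\,dy$.

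Expanding the square produces three pieces. The first, $\int h\,\pi_o\bigl(\partial_{x_v}\log\pi_o\bigr)^2\,dy$, is nonnegative because $h\ge 0$. The third, $\int(\partial_{x_v}h)^2\,\pi_o/h\,dy$, equals $B^2$ in the notation of Assumption \ref{app asump: fun preoperty}, where I write $A:=\Vert\sqrt{\pi_o}\,\partial_{x_v}h\Vert_2=\bigl(\int\pi_o(\partial_{x_v}h)^2\,dy\bigr)^{1/2}$ and $B:=\bigl(\int(\partial_{x_v}h)^2\pi_o/h\,dy\bigr)^{1/2}$, so that $c=\sqrt{A^2+B^2}-A$. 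For the middle (cross) term $2\int(\partial_{x_v}\pi_o)(\partial_{x_v}h)\,dy$, I would rewrite the integrand as $\bigl(\pi_o^{-1/2}\partial_{x_v}\pi_o\bigr)\bigl(\pi_o^{1/2}\partial_{x_v}h\bigr)$ and apply Cauchy--Schwarz together with the identity $\int\pi_o^{-1}(\partial_{x_v}\pi_o)^2\,dy=\text{wt}(x_v,\pi_o)$, yielding $\bigl|2\int(\partial_{x_v}\pi_o)(\partial_{x_v}h)\,dy\bigr|\le 2A\sqrt{\text{wt}(x_v,\pi_o)}$. Dropping the nonnegative first piece then gives the lower bound
\[
\text{wt}(x_v,\pi_\theta)\;\ge\;B^2-2A\sqrt{\text{wt}(x_v,\pi_o)}.
\]

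It remains to observe that Assumption \ref{app asump: fun preoperty} is precisely the statement that this lower bound beats $\text{wt}(x_v,\pi_o)$. Writing $w:=\sqrt{\text{wt}(x_v,\pi_o)}$, the hypothesis $w<c=\sqrt{A^2+B^2}-A$ is equivalent to $w+A<\sqrt{A^2+B^2}$, hence to $w^2+2Aw<B^2$, i.e.\ $B^2-2A\sqrt{\text{wt}(x_v,\pi_o)}>\text{wt}(x_v,\pi_o)$. Chaining this with the displayed inequality gives $\text{wt}(x_v,\pi_\theta)>\text{wt}(x_v,\pi_o)$, which is the claim.

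The part I expect to be delicate is the analytic bookkeeping rather than the inequalities: justifying that $\partial_{x_v}$ commutes with the sum/integral over $y$ in $\pi_\theta=\pi_o h$ (the write-up is loose about whether $y$ lives in a discrete or continuous space), ensuring $\pi_o>0$ on the relevant support so that dividing by $\pi_o h$ and the Cauchy--Schwarz splitting are legitimate, and recording that the assumption tacitly requires $B>0$ (otherwise $c\le 0$ and $\text{wt}(x_v,\pi_o)<c^2$ cannot hold). It is also worth flagging that the argument throws away the nonnegative term $\int h\,\pi_o(\partial_{x_v}\log\pi_o)^2\,dy$, so the strictness of the conclusion is inherited solely from the strict inequality in Assumption \ref{app asump: fun preoperty}; retaining that term would yield a quantitative improvement gap.
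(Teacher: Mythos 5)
Your proposal is correct and follows essentially the same route as the paper's proof: apply Lemma \ref{app lem: optimizer form} to write $\pi_\theta=\pi_o h$, expand $\text{wt}(x_v,\pi_\theta)$ into the three pieces, drop the nonnegative $\int h\,\pi_o(\partial_{x_v}\log\pi_o)^2\,dy$ term, Cauchy--Schwarz the cross term against $A\sqrt{\text{wt}(x_v,\pi_o)}$, and observe that the remaining bound $B^2-2A\sqrt{\text{wt}(x_v,\pi_o)}$ exceeds $\text{wt}(x_v,\pi_o)$ exactly when $\sqrt{\text{wt}(x_v,\pi_o)}<c$. The only cosmetic difference is that the paper finishes by factoring $B^2-2Aw-w^2=(c-w)(w+c+2A)$, whereas you invert the definition of $c$ directly; your flagged caveats (differentiation under the integral, $\pi_o>0$ on the support, needing $B>0$ so that $c>0$) are real gaps in the paper's write-up that your argument tacitly shares.
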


\begin{proof}
    By Lemma \ref{app lem: optimizer form}, we have
\begin{equation}
    \pi_\theta(y|x) = \pi_o(y|x) \cdot h(x,y),~~\int \pi_o(y|x) \cdot h(x,y) d y = 1
\end{equation}
Abbreviate $h(x,y)$ and $\pi_o(y|x_v,x_t)$ as $h$ and $\pi_o$ respectively, we have
\begin{equation} \label{eq: weight increase}
    \begin{aligned}
        \text{wt}(x_v,\pi_\theta) - \text{wt}(x_v,\pi_o)& \geq \int  
        \left(\frac{\frac{\partial}{\partial x_v}\pi_o}{\pi_o } + \frac{\frac{\partial}{\partial x_v}h}{h}\right)^2 \pi_o   h~d y -\text{wt}(x_v,\pi_o)\\
        & \geq \int \left[\frac{\partial}{\partial x_v}h\right]^2 \frac{\pi_o}{h} d y -2\sqrt{\text{wt}(x_v,\pi_o)}\cdot\left\Vert\sqrt{\pi_o}\cdot \frac{\partial}{\partial x_v}h\right\Vert_2 - \text{wt}(x_v,\pi_o)
    \end{aligned}
\end{equation}
the second inequality follows from Cauchy–Schwarz inequality
\begin{equation}
    \begin{aligned}
        \int \frac{\partial}{\partial x_v}\pi_o \cdot \frac{\partial}{\partial x_v}h~d y =\int \frac{\partial}{\partial x_v}\pi_o \cdot \frac{\sqrt{\pi_o}}{\sqrt{\pi_o}}\cdot\frac{\partial}{\partial x_v}h~d y  \leq \sqrt{\text{wt}(x_v,\pi_o)}\cdot\left\Vert \sqrt{\pi_o}\cdot\frac{\partial}{\partial x_v}h \right\Vert_2
    \end{aligned}
\end{equation}
Denote $c$ as
\begin{equation}
    c:=\sqrt{\left\Vert \sqrt{\pi_o}\cdot\frac{\partial}{\partial x_v}h \right\Vert_2^2 + \int \left(\frac{\partial}{\partial x_v}h\right)^2\frac{\pi_o}{h} d y} - \left\Vert \sqrt{\pi_o}\cdot\frac{\partial}{\partial x_v}h \right\Vert_2
\end{equation}
the last term in equation \ref{eq: weight increase} is equivalent to
\begin{equation}
    \left(c - \sqrt{\text{wt}(x_v,\pi_o)}\right) \cdot \left(\sqrt{\text{wt}(x_v,\pi_o)} + c + 2\left\Vert \sqrt{\pi_o}\cdot\frac{\partial}{\partial x_v}h \right\Vert_2\right)
\end{equation}
Thus, $\text{wt}(x_v,\pi_\theta) > \text{wt}(x_v,\pi_o)$ if $\sqrt{\text{wt}(x_v,\pi_o)} < c$.
\end{proof}

\begin{theorem} \label{app thm: increase weight x_r}
Suppose that Assumption \ref{app asump: fun preoperty 1} holds, the overall loss increase the weight of $x_r$ and decrease the weight of $\Tilde{x}_r$.
\begin{equation}
    \text{wt}(x_r,\pi_\theta) > \text{wt}(x_r,\pi_o),~~~\text{wt}(\Tilde{x}_r,\pi_\theta) < \text{wt}(\Tilde{x}_r,\pi_o)
\end{equation}
\end{theorem}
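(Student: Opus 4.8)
The plan is to first apply Lemma~\ref{app lem: optimizer form} to the overall loss $\mathcal{L}_{oa}$ to obtain a closed form for its minimizer, and then to bound $\text{wt}(x_r,\pi_\theta)$ and $\text{wt}(\tilde{x}_r,\pi_\theta)$ by the same variational computation used in Theorem~\ref{app thm: increase weight}, but run in opposite directions for the two variables. Since $\mathcal{D}_{oa}=\mathcal{D}_{oa}^1\cup\mathcal{D}_{oa}^2$, the preferred and dispreferred conditionals induced on $\mathcal{D}_{oa}$ are mixtures, so Lemma~\ref{app lem: optimizer form} applied with $q_w=q_w^1+q_w^2$ and $q_l=q_l^1+q_l^2$ gives $\pi_\theta(y\mid x)=\pi_o(y\mid x)\,h_1(x,y)$ with $\int\pi_o(y\mid x)h_1(x,y)\,dy=1$, where $h_1$ is exactly the function in Assumption~\ref{app asump: fun preoperty 1}. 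Both $\pi_o$ and $h_1$ depend on the retrieved-text slot, so one may differentiate in the helpful direction $x_r$ and in the unhelpful direction $\tilde{x}_r$ (the latter being ``far'' from meaningful retrieval in the sense of Assumption~\ref{asump: lip}); Assumption~\ref{app asump: fun preoperty 1} then supplies the quantitative thresholds $c_1,c_2$.

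For the first inequality I would repeat the argument of Theorem~\ref{app thm: increase weight} verbatim with the substitutions $x_v\mapsto x_r$ and $h\mapsto h_1$. Writing $\partial_{x_r}\log\pi_\theta=\partial_{x_r}\log\pi_o+\partial_{x_r}\log h_1$ and expanding the square in $\text{wt}(x_r,\pi_\theta)=\int(\partial_{x_r}\log\pi_\theta)^2\pi_o h_1\,dy$, I discard the nonnegative term $\int(\partial_{x_r}\log\pi_o)^2\pi_o h_1\,dy$, lower-bound the cross term $2\int\partial_{x_r}\pi_o\cdot\partial_{x_r}h_1\,dy$ via Cauchy--Schwarz by $-2\sqrt{\text{wt}(x_r,\pi_o)}\,\Vert\sqrt{\pi_o}\,\partial_{x_r}h_1\Vert_2$, and retain $\int(\partial_{x_r}h_1)^2\pi_o/h_1\,dy$. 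The resulting lower bound on $\text{wt}(x_r,\pi_\theta)-\text{wt}(x_r,\pi_o)$ is a downward parabola in $t=\sqrt{\text{wt}(x_r,\pi_o)}$ whose larger root is exactly $c_1$, so it is strictly positive whenever $\text{wt}(x_r,\pi_o)<c_1^2$, which is assumed.

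For the second inequality the direction of every estimate must flip, since now an \emph{upper} bound on $\text{wt}(\tilde{x}_r,\pi_\theta)$ is needed. Expanding the square as before, I can no longer drop the term $\int(\partial_{\tilde{x}_r}\log\pi_o)^2\pi_o h_1\,dy=\int(\partial_{\tilde{x}_r}\pi_o)^2 h_1/\pi_o\,dy$ — it is precisely the extra integral appearing inside $c_2$ — so I keep it, bound the cross term from above by $+2\sqrt{\text{wt}(\tilde{x}_r,\pi_o)}\,\Vert\sqrt{\pi_o}\,\partial_{\tilde{x}_r}h_1\Vert_2$, and retain $\int(\partial_{\tilde{x}_r}h_1)^2\pi_o/h_1\,dy$. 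This gives $\text{wt}(\tilde{x}_r,\pi_\theta)-\text{wt}(\tilde{x}_r,\pi_o)\le -t^2+2\Vert\sqrt{\pi_o}\,\partial_{\tilde{x}_r}h_1\Vert_2\,t+\big(\int(\partial_{\tilde{x}_r}h_1)^2\pi_o/h_1\,dy+\int(\partial_{\tilde{x}_r}\pi_o)^2 h_1/\pi_o\,dy\big)$ with $t=\sqrt{\text{wt}(\tilde{x}_r,\pi_o)}$, an upward parabola in $t$ whose larger root is exactly $c_2$; hence the difference is negative as soon as $\text{wt}(\tilde{x}_r,\pi_o)>c_2^2$, the second half of Assumption~\ref{app asump: fun preoperty 1}.

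The main obstacle is making the two one-sided estimates line up with the precise definitions of $c_1$ and $c_2$: the asymmetry is that for $x_r$ one \emph{discards} the $\pi_o$-gradient contribution (shrinking a lower bound is harmless), whereas for $\tilde{x}_r$ one must \emph{carry it along} inside the radical, which is exactly why $c_2$ contains the extra $(\partial\pi_o/\partial\tilde{x}_r)^2 h_1/\pi_o$ term that $c_1$ does not. The remaining points are routine: justifying differentiation under the integral sign for the factorization $\pi_\theta=\pi_o h_1$, and checking that the mixture fed to Lemma~\ref{app lem: optimizer form} for the union $\mathcal{D}_{oa}^1\cup\mathcal{D}_{oa}^2$ is the one written in $h_1$; neither should cause difficulty under the stated assumptions.
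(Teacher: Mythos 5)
Your proposal matches the paper's proof essentially step for step: the same application of Lemma~\ref{app lem: optimizer form} to the mixture distributions $q_w^1+q_w^2$ and $q_l^1+q_l^2$ to obtain $\pi_\theta=\pi_o h_1$, then the same Cauchy--Schwarz estimates run as a lower bound in $x_r$ (dropping the $\pi_o$-gradient term) and as an upper bound in $\tilde{x}_r$ (retaining it), with the quadratics in $t$ factoring exactly at $c_1$ and $c_2$. The only blemish is terminological: the bound $-t^2+2\Vert\sqrt{\pi_o}\,\partial_{\tilde{x}_r}h_1\Vert_2\,t+(\cdots)$ is a downward-opening parabola, which is precisely why it is negative beyond its larger root $c_2$, as you correctly conclude.
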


\begin{proof}
    The distribution of preferred responses can be considered as a mixture distribution: $q_w^1(x_v,y_{w,o}|x_t,x_r) + q_w^2(x_v,y_{w,o}|x_t)$. Similarly, for dispreferred responses, the distribution is represented as $q_l^1(x_v,y_{l,o}|x_t) + q_l^2(x_v,y_{l,o}|x_t,x_r)$. By Lemma \ref{app lem: optimizer form}, 
\begin{equation}
    \pi_\theta(y|x) = \pi_o(y|x) \cdot h_1(x,y),~~\int \pi_o(y|x) \cdot h_1(x,y) d y = 1
\end{equation}
Abbreviate $h_1(x,y)$ as $h_1$. Follow the same procedure in the proof of Theorem \ref{app thm: increase weight},
\begin{equation} \label{eq: weight increase x_r}
    \begin{aligned}
        \text{wt}(x_r,\pi_\theta) - \text{wt}(x_r,\pi_o)&
        \geq \int \left[\frac{\partial}{\partial x_r}h_1\right]^2 \frac{\pi_o}{h_1} d y -2\sqrt{\text{wt}(x_r,\pi_o)}\cdot\left\Vert\sqrt{\pi_o}\cdot \frac{\partial}{\partial x_r}h_1\right\Vert_2 - \text{wt}(x_r,\pi_o)\\
        & = \left(c_1 - \sqrt{\text{wt}(x_r,\pi_o)}\right) \cdot \left(\sqrt{\text{wt}(x_r,\pi_o)} + c_1 + 2\left\Vert \sqrt{\pi_o}\cdot\frac{\partial}{\partial x_r}h_1 \right\Vert_2\right)
    \end{aligned}
\end{equation}
where we apply Cauchy–Schwarz inequality in equation \ref{eq: weight increase x_r}.
\begin{equation}
     c_1 = \sqrt{\left\Vert \sqrt{\pi_o(y|x)}\cdot\frac{\partial}{\partial x_r}h_1 \right\Vert_2^2 + \int \left(\frac{\partial}{\partial x_r}h_1\right)^2\frac{\pi_o(y|x)}{h_1} d y} - \left\Vert \sqrt{\pi_o(y|x)}\cdot\frac{\partial}{\partial x_r}h_1\right\Vert_2
\end{equation}
Thus, $\text{wt}(x_r,\pi_\theta) > \text{wt}(x_r,\pi_o)$ if $\sqrt{\text{wt}(x_r,\pi_o)} < c_1$. Again, by Cauchy–Schwarz inequality
\begin{equation} \label{eq: weight decrease x_r}
    \begin{aligned}
        & \text{wt}(\Tilde{x}_r,\pi_\theta) - \text{wt}(\Tilde{x}_r,\pi_o)\\
        \leq & \int \left(\frac{\partial h_1}{\partial \Tilde{x}_r}\right)^2 \frac{\pi_o}{h_1}  + \left(\frac{\partial\pi_o}{\partial \Tilde{x}_r}\right)^2\frac{h_1}{\pi_o}d y +2\sqrt{\text{wt}(\Tilde{x_r},\pi_o)}\cdot\left\Vert\sqrt{\pi_o}\cdot \frac{\partial h_1}{\partial \Tilde{x}_r}\right\Vert_2 - \text{wt}(\Tilde{x}_r,\pi_o)\\
        = & - \left(\sqrt{\text{wt}(\Tilde{x_r},\pi_o)} - c_2\right) \cdot \left(\sqrt{\text{wt}(\Tilde{x_r},\pi_o)} - c_2 + 2\left\Vert \sqrt{\pi_o}\cdot\frac{\partial}{\partial \Tilde{x}_r}h_1 \right\Vert_2\right)
    \end{aligned}
\end{equation}
where
{\small
\begin{equation}
    c_2 = \sqrt{\left\Vert \sqrt{\pi_o}\cdot\frac{\partial}{\partial \Tilde{x}_r}h_1 \right\Vert_2^2 
        + \int 
        \left(\frac{\partial}{\partial \Tilde{x}_r}h_1\right)^2\frac{\pi_o}{h_1} + \left(\frac{\partial}{\partial \Tilde{x}_r}\pi_o\right)^2\frac{h_1}{\pi_o}
        d y} + \left\Vert \sqrt{\pi_o}\cdot\frac{\partial}{\partial \Tilde{x}_r}h_1 \right\Vert_2
\end{equation}}
Thus, $\text{wt}(x_r,\pi_\theta) < \text{wt}(x_r,\pi_o)$ if $\sqrt{\text{wt}(x_r,\pi_o)} > c_2$.
\end{proof}

\end{document}